\newtheorem{definition}{Definition}
\newtheorem{theorem}{Theorem}
\newtheorem{lemma}[theorem]{Lemma}
\begin{document}

\begin{frontmatter}



\title{Structured Low-Rank Matrix Factorization with Missing and Grossly Corrupted Observations}

\author[label1]{Fanhua Shang}
\cortext[]{Corresponding author, e-mail: fhshang@cse.cuhk.edu.hk}
\author[label2]{Yuanyuan Liu}
\author[label3]{Hanghang Tong}
\author[label1]{James Cheng}
\author[label2]{Hong Cheng}

\address[label1]{Department of Computer Science and Engineering, The Chinese University of Hong Kong}
\address[label2]{Department of Systems Engineering and Engineering Management, The Chinese University of Hong Kong}
\address[label3]{School of computing informatics and decision systems engineering, Arizona State University}

\begin{abstract}
Recovering low-rank and sparse matrices from incomplete or corrupted observations is an important problem in machine learning, statistics, bioinformatics, computer vision, as well as signal and image processing. In theory, this problem can be solved by the natural convex joint/mixed relaxations (i.e., $l_{1}$-norm and trace norm) under certain conditions. However, all current provable algorithms suffer from superlinear per-iteration cost, which severely limits their applicability to large-scale problems. In this paper, we propose a scalable, provable structured low-rank matrix factorization method to recover low-rank and sparse matrices from missing and grossly corrupted data, i.e., robust matrix completion (RMC) problems, or incomplete and grossly corrupted measurements, i.e., compressive principal component pursuit (CPCP) problems. Specifically, we first present two small-scale matrix trace norm regularized bilinear structured factorization models for RMC and CPCP problems, in which repetitively calculating SVD of a large-scale matrix is replaced by updating two much smaller factor matrices. Then, we apply the alternating direction method of multipliers (ADMM) to efficiently solve the RMC problems. Finally, we provide the convergence analysis of our algorithm, and extend it to address general CPCP problems. Experimental results verified both the efficiency and effectiveness of our method compared with the state-of-the-art methods.
\end{abstract}

\begin{keyword}
Compressive principal component pursuit, Robust matrix completion, Robust principal component analysis, Low-rank matrix recovery and completion
\end{keyword}

\end{frontmatter}

\section{Introduction}
In recent years, recovering low-rank and sparse matrices from severely incomplete or even corrupted observations has received broad attention in many different fields, such as statistics \cite{wright:cpcp, agarwal:nmd, chen:lrmr}, bioinformatics \cite{otazo:lrsd}, machine learning \cite{wright:rpca, waters:rcs, tao:lrsc, liu:nnr}, computer vision \cite{candes:rpca, zhou:gb, zheng:rma, cabral:nnbf, shang:rpca}, signal and image processing \cite{ma:lrmc, peng:rasl, li:lrml, liu:mbf, feng:lr}. In those areas, the data to be analyzed often have high dimensionality, which brings great challenges to data analysis, such as digital photographs, surveillance videos, text and web documents. Fortunately, the high-dimensional data are observed to have low intrinsic dimension, which is often much smaller than the dimension of the ambient space \cite{liu:lrr}.

For the high-dimensional data, principal component analysis (PCA) is one of the most popular analysis tools to recover a low-rank structure of the data mainly because it is simple to implement, can be solved efficiently, and is effective in many real-world applications such as face recognition and text clustering. However, one of the main challenges faced by PCA is that the observed data is often contaminated by outliers and missing values \cite{favaro:ssc}, or is a small set of linear measurements \cite{wright:cpcp}. To address these issues, many compressive sensing or rank minimization based techniques and methods have been proposed, such as robust PCA \cite{wright:rpca, xu:rpca, shang:rpca} (RPCA, also called PCP in \cite{candes:rpca} and low-rank and sparse matrix decomposition in \cite{tao:lrsc, yuan:slmd}, LRSD) and low-rank matrix completion \cite{candes:emc, chen:lrmr}.

In many applications, we have to recover a matrix from only a small number of observed entries, for example collaborative filtering for recommender systems. This problem is often called matrix completion, where missing entries or outliers are presented at arbitrary location in the measurement matrix. Matrix completion has been used in a wide range of problems such as collaborative filtering \cite{candes:emc, chen:lrmr}, structure-from-motion \cite{eriksson:lrma, zheng:rma}, click prediction \cite{yu:cp}, tag recommendation \cite{wang:at}, and face reconstruction \cite{meng:cwm}. In some other applications, we would like to recover low-rank and sparse matrices from corrupted data. For example, the face images of a person may be corrupted by glasses or shadows \cite{lee:fr}. The classical PCA cannot address the issue as its least-squares fitting is sensitive to these gross outliers. Recovering a low-rank matrix in the presence of outliers has been extensively studied, which is often called RPCA, PCP or LRSD. The RPCA problem has been successfully applied in many important applications, such as latten semantic indexing \cite{min:pcp}, video surveillance \cite{wright:rpca, candes:rpca}, and image alignment \cite{peng:rasl}. In some more general applications, we also have to simultaneously recover both low-rank and sparse matrices from small sets of linear measurements, which is called compressive principal component pursuit (CPCP) in \cite{wright:cpcp}.

In principle, those problems mentioned above can be exactly solved with high probability under mild assumptions via a hybrid convex program involving both the $l_{1}$-norm and the trace norm (also called the nuclear norm) minimization. In recent years, many new techniques and algorithms \cite{candes:emc, chen:lrmr, candes:rpca, wright:rpca, xu:rpca, wright:cpcp} for low-rank matrix recovery and completion have been proposed, and the theoretical guarantees have been derived in \cite{candes:emc, candes:rpca, wright:cpcp}. However, those provable algorithms all exploit a closed-form expression for the proximal operator of the trace norm, which involves the singular value decomposition (SVD). Hence, they all have high computational cost and are even not applicable for solving large-scale problems.

To address this issue, we propose a scalable robust bilinear structured factorization (RBF) method to recover low-rank and sparse matrices from incomplete, corrupted data or a small set of linear measurements, which is formulated as follows:
\begin{equation}\label{e1.1}
\min_{L,\,S} f(L,S)+\lambda\|L\|_{\ast},\quad \textup{s.t.},\,\mathcal{A}(L+S)=y,
\end{equation}
where $\lambda\geq 0$ is a regularization parameter, $\|L\|_{\ast}$ is the trace norm of a low-rank matrix $L\in\mathbb{R}^{m\times n}$, i.e., the sum of its singular values, $S\in\mathbb{R}^{m\times n}$ is a sparse error matrix, $y\in\mathbb{R}^{p}$ is the given linear measurements, $\mathcal{A}(\cdot)$ is an underdetermined linear operator such as the linear projection operator $\mathcal{P}_{\Omega}$, and $f(\cdot)$ denotes the loss function such as the $l_{2}$-norm loss or the $l_{1}$-norm loss functions.

\begin{figure}[t]
\centering
\includegraphics [width=0.40\linewidth] {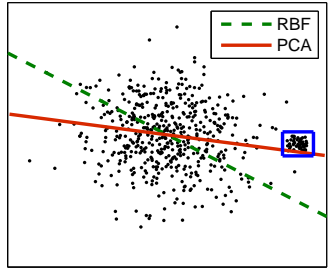}
\caption{Principal directions learned by PCA and RBF on the toy data set with outliers, which are in a blue rectangle.}
\label{fig_1}
\end{figure}

Unlike existing robust low-rank matrix factorization approaches, our method not only takes into account the fact that the observation is contaminated by additive outliers (Fig.\ 1 shows an example) or missing data, i.e., robust matrix completion \cite{chen:lrmr, li:cs} (RMC, also called RPCA plus matrix completion) problems, but can also identify both low-rank and sparse noisy components from incomplete and grossly corrupted measurements, i.e., CPCP problems. We also present a robust bilateral factorization framework for both RMC and CPCP problems, in which repetitively calculating SVD of a large matrix in \cite{candes:emc, candes:rpca, wright:cpcp} is replaced by updating two much smaller factor matrices. We verify with convincing experimental results both the efficiency and effectiveness of our RBF method.

The main contributions of this paper are summarized as follows:
\begin{enumerate}
\item We propose a scalable structured RBF framework to simultaneously recover both low-rank and sparse matrices for both RMC and CPCP problems. By imposing the orthogonality constraint, we convert the original RMC and CPCP models into two smaller-scale matrix trace norm regularized problems, respectively.
\item By the fact that the optimal solution $S_{\Omega^{C}}=0$, i.e., the values of $S$ at unobserved locations are zero, we reformulate the proposed RMC problem by replacing the linear projection operator constraint with a simple equality one.
\item Moreover, we propose an efficient alternating direction method of multipliers (ADMM) to solve our RMC problems, and then extend it to address CPCP problems with a linearization technique.
\item Finally, we theoretically analyze the suboptimality of the solution produced by our algorithm.
\end{enumerate}

The remainder of this paper is organized as follows. We review background and related work in Section 2. In Section 3, we propose two scalable trace norm regularized RBF models for RMC and CPCP problems. We develop an efficient ADMM algorithm for solving RMC problems and then extend it to solve CPCP problems in Section 4. We provide the theoretical analysis of our algorithm in Section 5. We report empirical results in Section 6, and conclude this paper in Section 7.

\section{BACKGROUND}
A low-rank structured matrix $L\in\mathbb{R}^{m\times n}$ and a sparse one $S\in\mathbb{R}^{m\times n}$ can be recovered from highly corrupted measurements $y=\mathcal{P}_{Q}(D)\in\mathbb{R}^{p}$ via the following CPCP model,
\begin{equation}\label{e2.2}
\min_{L,\,S} \|S\|_{1}+\lambda\|L\|_{\ast},\quad \textup{s.t.},\,\mathcal{P}_{Q}(D=L_{0}+S_{0})=\mathcal{P}_{Q}(L+S),
\end{equation}
where $\|S\|_{1}$ denotes the $l_{1}$-norm of $S$, i.e., $\|S\|_{1}=\Sigma_{ij}|s_{ij}|$, $Q\subseteq\mathbb{R}^{m\times n}$ is a linear subspace, and $\mathcal{P}_{Q}$ is the projection operator onto that subspace. When $\mathcal{P}_{Q}$$=\mathcal{P}_{\Omega}$, the model (2) is the robust matrix completion (RMC) problem, where $\Omega$ is the index set of observed entries. Wright et al., \cite{wright:cpcp} proved the following result.
\begin{theorem}
Let $L_{0}$, $S_{0}\in\mathbb{R}^{m\times n}$, with $m\geq n$, and suppose that $L_{0}\neq$\emph{\textbf{0}} is a $\mu$-incoherent matrix of rank $r$,
\begin{displaymath}
r \leq\frac{c_{1}n}{\mu\log^{2}m},
\end{displaymath}
and sign($S_{0}$) is i.i.d.\ Bernoulli\texttt{-}Rademacher with non-zero probability $\rho<c_{2}$. Let $Q\subset\mathbb{R}^{m\times n}$ be a random subspace of dimension
\begin{displaymath}
\textup{dim}(Q) \geq C_{1}(\rho mn+mr)\log^{2} m,
\end{displaymath}
distributed according to the Haar measure, probabilistically independent of sign($S_{0}$). Then the minimizer to the problem (2) with $\lambda=\sqrt{m}$ is unique and equal to $(L_{0},S_{0})$ with probability at least $1-C_{2}m^{-9}$, where $c_{1}$, $c_{2}$, $C_{1}$ and $C_{2}$ are positive numerical constants.
\end{theorem}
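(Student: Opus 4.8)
\medskip
\noindent\textbf{Proof sketch (proposal).} The plan is to establish exact recovery by exhibiting a dual certificate that verifies the optimality conditions of the convex program~(2) at the candidate pair $(L_{0},S_{0})$, and then upgrading optimality to uniqueness, following the dual-certificate strategy of \cite{candes:emc, candes:rpca}. Write the thin SVD $L_{0}=U\Sigma V^{T}$, let $T=\{UA^{T}+BV^{T}:A\in\mathbb{R}^{n\times r},\,B\in\mathbb{R}^{m\times r}\}$ be the tangent space to the set of rank-$r$ matrices at $L_{0}$, let $\Omega=\mathrm{supp}(S_{0})$, and set $E=\mathrm{sign}(S_{0})$. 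Since $\mathcal{P}_{Q}$ is self-adjoint, a subgradient computation shows that $(L_{0},S_{0})$ is optimal for~(2) as soon as there is a matrix $\Lambda$ with $\mathcal{P}_{Q}\Lambda=\Lambda$ that is simultaneously a subgradient of $\lambda\|\cdot\|_{\ast}$ at $L_{0}$ and of $\|\cdot\|_{1}$ at $S_{0}$, i.e.
\begin{displaymath}
\mathcal{P}_{T}\Lambda=\lambda UV^{T},\quad \|\mathcal{P}_{T^{\perp}}\Lambda\|_{2}<\lambda,\quad \mathcal{P}_{\Omega}\Lambda=E,\quad \|\mathcal{P}_{\Omega^{C}}\Lambda\|_{\infty}<1,
\end{displaymath}
together with the requirement that $\mathcal{P}_{Q}$ be injective on the direct sum of $T$ and the space of matrices supported on $\Omega$. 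I would first dispatch the injectivity (hence uniqueness) part: the $\mu$-incoherence of $L_{0}$ makes $T$ ``spread out'' (the projections onto $T$ have small operator and $\ell_{\infty}$ norms), the Bernoulli--Rademacher model makes $\Omega$ genuinely sparse, so $T$ and $\Omega$ meet only at $0$; a Haar-random $Q$ whose dimension comfortably exceeds $\dim T+|\Omega|\lesssim mr+\rho mn$ (which is what the stated dimension bound buys, up to the logarithmic factor) then restricts to an injection on $T\oplus\Omega$ by a net/concentration argument on the Grassmannian.

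The core of the argument is the construction of $\Lambda$, for which I would use Gross's golfing scheme. Decompose $Q$ into $j_{0}=O(\log m)$ mutually independent Haar-random subspaces $Q_{1},\dots,Q_{j_{0}}$ of roughly equal dimension $\theta mn$. Build the low-rank component iteratively by $Y_{0}=0$ and $Y_{k}=Y_{k-1}+\theta^{-1}\mathcal{P}_{Q_{k}}\mathcal{P}_{T}(\lambda UV^{T}-Y_{k-1})$, so that the residual $\lambda UV^{T}-\mathcal{P}_{T}Y_{k}$ contracts geometrically in Frobenius norm; in parallel, absorb $E$ through the same blocks $Q_{k}$ via a Neumann-series correction so that the aggregate certificate also equals $E$ on $\Omega$. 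Correctness of each round rests on three concentration facts for a Haar-random $Q_{k}$ of dimension $\theta mn$: (i) a near-isometry on $T$, $\|\mathcal{P}_{T}\mathcal{P}_{Q_{k}}\mathcal{P}_{T}-\theta\mathcal{P}_{T}\|_{2}\leq\varepsilon\theta$; (ii) an operator-norm bound $\|\mathcal{P}_{T^{\perp}}\mathcal{P}_{Q_{k}}Z\|_{2}\ll\theta\|Z\|_{2}$ for $Z\in T$; and (iii) an $\ell_{\infty}$ bound certifying that the iterates stay incoherent, with $\|\mathcal{P}_{\Omega^{C}}Y_{k}\|_{\infty}$ decaying in $k$. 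These follow from matrix Bernstein/Chernoff estimates after replacing the Haar subspace projector by a Gaussian surrogate and exploiting rotational invariance; the incoherence hypothesis on $L_{0}$ is exactly what seeds (iii) at the first round, and since each of the $O(\log m)$ rounds and each variance computation costs a logarithmic factor, one lands on the rank bound $r\lesssim n/(\mu\log^{2}m)$. Assembling $\Lambda=Y_{j_{0}}+(\text{sparse correction})$ and choosing $\lambda=\sqrt{m}$ to balance $\|\mathcal{P}_{T^{\perp}}\Lambda\|_{2}<\lambda$ against $\|\mathcal{P}_{\Omega^{C}}\Lambda\|_{\infty}<1$ then makes all four inequalities strict with high probability.

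The main obstacle, I expect, is that the two structures must be certified through a \emph{single} multiplier constrained to the range of the compressive operator $\mathcal{P}_{Q}$: the low-rank and sparse corrections in the golfing scheme are forced to be built from the very same random blocks $Q_{k}$, so the cross-terms---the sparse correction leaking into $T^{\perp}$, and the $UV^{T}$ correction leaking onto $\Omega^{C}$---cannot be controlled separately and must be bounded jointly at every round. Keeping all four certificate inequalities strict at once is precisely what fixes the numerical constants $c_{1},c_{2},C_{1}$ and forces $\lambda=\sqrt{m}$. A secondary, more technical hurdle is that the sampling model here is a Haar-random subspace rather than i.i.d.\ entrywise sampling, so the RPCA/matrix-completion concentration lemmas must be re-derived for compression by a random orthoprojector; this is cleanest through a Gaussian comparison and measure concentration on the Stiefel manifold. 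Once the certificate and the injectivity of $\mathcal{P}_{Q}$ on $T\oplus\Omega$ are in hand, the failure probability $1-C_{2}m^{-9}$ follows from a union bound over the $O(\log m)$ golfing rounds and the $O(1)$ concentration events.
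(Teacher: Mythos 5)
First, a point of reference: the paper does not prove this statement at all. Theorem~1 is quoted verbatim from Wright et al.\ \cite{wright:cpcp} (``Wright et al.\ proved the following result''), so there is no internal proof to compare your proposal against; the comparison below is with the argument in the cited source.

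Your skeleton --- optimality of $(L_{0},S_{0})$ via a dual certificate $\Lambda$ lying in $Q$ with $\mathcal{P}_{T}\Lambda=\lambda UV^{T}$, $\|\mathcal{P}_{T^{\perp}}\Lambda\|_{2}<\lambda$, $\mathcal{P}_{\Omega}\Lambda=\mathrm{sign}(S_{0})$, $\|\mathcal{P}_{\Omega^{C}}\Lambda\|_{\infty}<1$, plus injectivity of $\mathcal{P}_{Q}$ on $T\oplus\{$matrices supported on $\Omega\}$ for uniqueness --- is the right one and matches the standard strategy. The gap is in how you propose to build $\Lambda$. You run the golfing scheme directly through a partition of the Haar-random measurement subspace $Q$ and construct the low-rank and sparse halves of the certificate from the \emph{same} random blocks $Q_{k}$; you then correctly identify the resulting cross-terms as ``the main obstacle,'' but you offer no mechanism for controlling them. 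That obstacle is not a technicality to be deferred --- it is the entire difficulty, and carrying it out is precisely what a complete proof along your lines would have to supply. Moreover, the ingredient you would need at every round, an entrywise ($\ell_{\infty}$) concentration bound for $\mathcal{P}_{Q_{k}}$ applied to matrices in $T$ when $Q_{k}$ is a Haar-random subspace rather than a coordinate subspace, is substantially more delicate than its matrix-completion analogue and does not follow from a generic ``Gaussian surrogate'' reduction. You also use the Bernoulli--Rademacher hypothesis only to say that $\Omega$ is sparse; at a constant corruption fraction $\rho$ the randomness of the \emph{signs} themselves is essential to bound the spectral norm of the Neumann-series (sparse) part of the certificate, and an argument that never invokes it cannot reach the stated conclusion.

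The proof in \cite{wright:cpcp} avoids this coupling by decoupling the two sources of randomness. It first invokes the uncompressed RPCA/PCP analysis (as in \cite{candes:rpca}, in the random-sign regime) to produce a dual certificate $\Lambda_{0}$ for the full-data problem; this is where the incoherence, the rank bound $r\lesssim n/(\mu\log^{2}m)$, and the Rademacher signs are consumed. It then treats the compression as a perturbation: since $Q$ is Haar-distributed, independent of everything else, and of dimension $\gtrsim(\rho mn+mr)\log^{2}m$, which exceeds $\dim(\Gamma)$ for the fixed subspace $\Gamma=T+\{$matrices supported on $\Omega\}$ by logarithmic factors, $\mathcal{P}_{Q}$ acts as a near-isometry on $\Gamma$; this single concentration fact yields both the injectivity needed for uniqueness and the ability to correct $\Lambda_{0}$ to a certificate lying in $Q$. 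If you want a complete proof, that two-stage reduction is the route to take.
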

This theorem states that a commensurately small number of measurements are sufficient to accurately recover the low-rank and sparse matrices with high probability. Indeed, if $Q$ is the entire space, the model (2) degenerates to the following RPCA problem \cite{wright:rpca, candes:rpca}
\begin{equation}\label{e2.3}
\min_{L,\,S} \|S\|_{1}+\lambda\|L\|_{\ast},\quad \textup{s.t.},\,D=L+S,
\end{equation}
where $D$ denotes the given observations. Several algorithms have been developed to solve the convex optimization problem (\ref{e2.3}), such as IALM \cite{lin:ladmm} and LRSD \cite{yuan:slmd}. Although both models (2) and (3) are convex optimization problems, and their algorithms converge to the globally optimal solution, they involve SVD at each iteration and suffer from a high computational cost of $O(mn^{2})$. While there have been many efforts towards fast SVD computation such as partial SVD \cite{lin:ladmm} and approximate SVD \cite{ma:fpca}, the performance of these existing methods is still unsatisfactory for many real applications. To address this problem, we propose a scalable, provable robust bilinear factorization method with missing and grossly corrupted observations.

\section{OUR RBF FRAMEWORK}
Matrix factorization is one of the most useful tools in scientific computing and high dimensional data analysis, such as the QR decomposition, the LU decomposition, SVD, and NMF. In this paper, robust bilinear factorization (RBF) aims to find two smaller low-rank matrices $U\in\mathbb{R}^{m\times d}$ $(U^{T}U=I)$ and $V\in\mathbb{R}^{n\times d}$ whose product is equal to the desired matrix of low-rank, $L\in\mathbb{R}^{m\times n}$,
\begin{displaymath}
L=UV^{T},
\end{displaymath}
where $d$ is an upper bound for the rank of $L$, i.e., $d\geq r=\textrm{rank}(L)$.

\subsection{RMC Model}
Suppose that the observed matrix $D$ is corrupted by outliers and missing data, the RMC problem is given by
\begin{equation} \label{e3.4}
\min_{L,\,S} \|S\|_{1}+\lambda\|L\|_{\ast},\quad \textup{s.t.},\,\mathcal{P}_{\Omega}(D)=\mathcal{P}_{\Omega}(L+S).
\end{equation}
From the optimization problem (4), we easily find the optimal solution $S_{\Omega^{C}}=0$, where $\Omega^{C}$ is the complement of $\Omega$, i.e., the index set of unobserved entries. Consequently, we have the following lemma.

\begin{lemma}
The RMC model (\ref{e3.4}) with the operator $\mathcal{P}_{\Omega}$ is equivalent to the following problem
\begin{equation} \label{e3.5}
\min_{L,\,S} \|\mathcal{P}_{\Omega}(S)\|_{1}+\lambda\|L\|_{\ast},\quad \textup{s.t.},\,\mathcal{P}_{\Omega}(D)=\mathcal{P}_{\Omega}(L+S)\;\,\textup{and}\;\,\mathcal{P}_{\Omega^{C}}(S)=\emph{\textbf{0}}.
\end{equation}
\end{lemma}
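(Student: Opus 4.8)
The plan is to prove the equivalence at the level of feasible objective values and of minimizers, exploiting the single structural fact already noted before the lemma: the entries of $S$ indexed by $\Omega^{C}$ never appear in the constraint of (\ref{e3.4}), because $\mathcal{P}_{\Omega}$ annihilates them, so they can influence the objective only through $\|S\|_{1}$.

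First I would record the elementary identity that for every $S\in\mathbb{R}^{m\times n}$ the $l_{1}$-norm splits over the disjoint index sets $\Omega$ and $\Omega^{C}$, namely $\|S\|_{1}=\|\mathcal{P}_{\Omega}(S)\|_{1}+\|\mathcal{P}_{\Omega^{C}}(S)\|_{1}$, both summands being nonnegative. Next, for the ``$\leq$'' direction on optimal values, take any $(L,S)$ feasible for (\ref{e3.4}) and set $\widehat{S}:=\mathcal{P}_{\Omega}(S)$. Then $\mathcal{P}_{\Omega^{C}}(\widehat{S})=\mathbf{0}$ and $\mathcal{P}_{\Omega}(\widehat{S})=\mathcal{P}_{\Omega}(S)$, so $(L,\widehat{S})$ meets both constraints of (\ref{e3.5}), and its objective there equals $\|\mathcal{P}_{\Omega}(S)\|_{1}+\lambda\|L\|_{\ast}\leq\|S\|_{1}+\lambda\|L\|_{\ast}$ by the split. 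Hence the optimum of (\ref{e3.5}) is no larger than that of (\ref{e3.4}). Moreover, if $(L^{\star},S^{\star})$ is any minimizer of (\ref{e3.4}), then $\mathcal{P}_{\Omega^{C}}(S^{\star})=\mathbf{0}$: otherwise replacing $S^{\star}$ by $\mathcal{P}_{\Omega}(S^{\star})$ would strictly decrease $\|S^{\star}\|_{1}$ while preserving feasibility (the constraint only sees $\mathcal{P}_{\Omega}(S^{\star})$), contradicting optimality. Thus $(L^{\star},S^{\star})$ is itself feasible for (\ref{e3.5}) with the same objective value.

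For the reverse ``$\geq$'' direction, any $(L,S)$ feasible for (\ref{e3.5}) has $\mathcal{P}_{\Omega^{C}}(S)=\mathbf{0}$, so $\|S\|_{1}=\|\mathcal{P}_{\Omega}(S)\|_{1}$ by the split, and it trivially satisfies the lone constraint of (\ref{e3.4}); hence it is feasible for (\ref{e3.4}) with identical objective. Combining the two directions yields equality of the optimal values, and shows that $(L,S)$ solves (\ref{e3.4}) if and only if it solves (\ref{e3.5}) (after the harmless replacement of $S$ by $\mathcal{P}_{\Omega}(S)$ in the forward direction), which is precisely the claimed equivalence.

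I expect no genuine obstacle here; the only step requiring a line of justification is the assertion that an optimal $S$ for (\ref{e3.4}) must vanish on $\Omega^{C}$, i.e. that zeroing those entries is simultaneously feasibility-preserving (since $\mathcal{P}_{\Omega}$ kills them in the constraint) and objective-non-increasing (since $\|\cdot\|_{1}$ is separable and nonnegative) — which is exactly the remark ``$S_{\Omega^{C}}=0$'' made just before the lemma.
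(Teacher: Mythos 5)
Your proof is correct and rests on the same idea as the paper's (APPENDIX A): the constraint of (\ref{e3.4}) only sees $\mathcal{P}_{\Omega}(S)$, so zeroing $S$ on $\Omega^{C}$ preserves feasibility and strictly decreases $\|S\|_{1}$ unless those entries are already zero. Your write-up is in fact slightly tighter than the paper's, which argues by contradiction but only states the non-strict inequality $g(L^{*},\tilde{S}^{*})\leq g(L^{*},S^{*})$ where strictness is what actually yields the contradiction; your version also makes the equality of optimal values in both directions explicit.
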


The proof of this lemma can be found in APPENDIX A. From the incomplete and corrupted matrix $D$, our RBF model is to find two smaller matrices, whose product approximates $L$, can be formulated as follows:
\begin{equation} \label{e3.6}
\begin{split}
&\min_{U,\,V,\,S}\,\|\mathcal{P}_{\Omega}(S)\|_{1}+\lambda\|UV^{T}\|_{*},\\
&\textup{s.t.},\,\mathcal{P}_{\Omega}(D)=\mathcal{P}_{\Omega}(UV^{T}+S).
\end{split}
\end{equation}

\begin{lemma}
Let $U$ and $V$ be two matrices of compatible dimensions, where $U$ has orthogonal columns, i.e., $U^{T}U=I$, then we have $\|UV^{T}\|_{*}=\|V\|_{*}$.
\end{lemma}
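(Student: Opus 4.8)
The plan is to show the two norms agree by passing through the singular value decomposition and exploiting the fact that left-multiplying by a matrix with orthonormal columns preserves singular values. First I would write a (thin/compact) SVD of the smaller factor $V \in \mathbb{R}^{n\times d}$ as $V = P\Sigma R^{T}$, where $P\in\mathbb{R}^{n\times d}$ and $R\in\mathbb{R}^{d\times d}$ have orthonormal columns and $\Sigma$ is diagonal with the singular values of $V$ on its diagonal. Then $UV^{T} = U R \Sigma P^{T}$, and the key observation is that $UR \in \mathbb{R}^{m\times d}$ has orthonormal columns, since $(UR)^{T}(UR) = R^{T}U^{T}UR = R^{T}R = I$ using $U^{T}U = I$. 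Hence $UR\,\Sigma\,P^{T}$ is a valid SVD of $UV^{T}$ (the factor $UR$ need not be completed to a square orthogonal matrix because the trace norm only depends on the singular values, i.e.\ the diagonal of $\Sigma$).

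Next I would conclude: since $\|UV^{T}\|_{*}$ equals the sum of the singular values of $UV^{T}$, and those singular values are exactly the diagonal entries of $\Sigma$, which are in turn the singular values of $V$, we get $\|UV^{T}\|_{*} = \mathrm{tr}(\Sigma) = \|V\|_{*}$. An alternative and even shorter route is to use the identity $\|M\|_{*} = \mathrm{tr}\big((M^{T}M)^{1/2}\big)$: here $(UV^{T})^{T}(UV^{T}) = V U^{T} U V^{T} = V V^{T}$, so $((UV^{T})^{T}UV^{T})^{1/2} = (VV^{T})^{1/2}$, and taking traces gives $\|UV^{T}\|_{*} = \mathrm{tr}((VV^{T})^{1/2}) = \|V^{T}\|_{*} = \|V\|_{*}$. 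I would probably present this second argument as the main proof because it avoids any discussion of thin versus full SVD, and mention the SVD viewpoint as intuition.

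I do not expect any serious obstacle here; the only point requiring a little care is the bookkeeping about dimensions of the orthonormal factors — specifically making clear that $U$ having orthonormal \emph{columns} (an $m\times d$ matrix with $m \ge d$, not a square orthogonal matrix) is exactly what is needed for $U^{T}U = I$ and hence for $UV^{T}$ and $V$ to share the same Gram-type matrix structure. I would also note for completeness that $\|V\|_{*} = \|V^{T}\|_{*}$ always holds, so the statement is symmetric in the two obvious orderings. This lemma then feeds directly into the RBF model: combined with Lemma 2's reformulation, it lets one replace $\lambda\|UV^{T}\|_{*}$ in problem~(\ref{e3.6}) by the much cheaper $\lambda\|V\|_{*}$, where $V\in\mathbb{R}^{n\times d}$ with $d$ small, which is the whole point of avoiding SVDs of the large $m\times n$ matrix.
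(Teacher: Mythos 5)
Your proof is correct and follows essentially the same route as the paper's: take an SVD of $V$ (the paper writes it for $V^{T}$), observe that left-multiplication by $U$ preserves the orthonormality of the left factor so the singular values are unchanged, and conclude equality of the trace norms. The additional one-line argument via $\|M\|_{*}=\mathrm{tr}\bigl((M^{T}M)^{1/2}\bigr)$ and $(UV^{T})^{T}(UV^{T})=VV^{T}$ is a valid and slightly cleaner variant, but it is not a substantively different proof.
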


The proof of this lemma can be found in APPENDIX B. By imposing $U^{T}U=I$ and substituting $\|UV^{T}\|_{*}=\|V\|_{*}$ into (\ref{e3.6}), we arrive at a much smaller-scale matrix trace norm minimization problem
\begin{equation} \label{e3.7}
\begin{split}
&\min_{U,\,V,\,S}\,\|\mathcal{P}_{\Omega}(S)\|_{1}+\lambda\|V\|_{*},\\
&\textup{s.t.},\,\mathcal{P}_{\Omega}(D)=\mathcal{P}_{\Omega}(UV^{T}+S),\,U^{T}U=I.
\end{split}
\end{equation}

\begin{theorem}
Suppose $(L^{\ast},\,S^{\ast})$ is a solution of the problem (\ref{e3.5}) with $\textrm{rank}(L^{\ast})=r$, then there exists the solution $U_{k}\in \mathbb{R}^{m\times d}$, $V_{k}\in\mathbb{R}^{n\times d}$ and $S_{k}\in\mathbb{R}^{m\times n}$ to the problem (\ref{e3.7}) with $d\geq r$ and $S_{\Omega^{C}}=0$, $(U_{k}V^{T}_{k},\,S_{k})$ is also a solution to the problem (\ref{e3.5}).
\end{theorem}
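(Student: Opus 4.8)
The plan is to establish the equivalence in two directions, using Lemma~2 ($\|UV^T\|_* = \|V\|_*$ whenever $U^TU = I$) as the bridge between the factored problem (7) and the un-factored problem (5). The key observation is that problems (6) and (7) have the same optimal value and essentially the same feasible set (modulo the identification $L = UV^T$), so it suffices to relate (5) and (6).

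First I would show that any solution of (5) yields a feasible point of (7) with no larger objective value. Given $(L^*, S^*)$ with $\mathrm{rank}(L^*) = r \le d$, take a (thin) SVD $L^* = \widehat{U}\widehat{\Sigma}\widehat{V}^T$ with $\widehat{U} \in \mathbb{R}^{m\times r}$, $\widehat{V}\in\mathbb{R}^{n\times r}$, pad the columns with zeros (or any orthonormal completion) to obtain $U_k \in \mathbb{R}^{m\times d}$ with $U_k^T U_k = I$, and set $V_k = \widehat{V}\widehat{\Sigma}$ padded to $\mathbb{R}^{n\times d}$, so that $U_k V_k^T = L^*$. Put $S_k = S^*$; since $(L^*, S^*)$ solves (5) we have $\mathcal{P}_{\Omega^C}(S^*) = \mathbf{0}$, so $S_k$ satisfies $S_{\Omega^C} = 0$ and $\mathcal{P}_\Omega(D) = \mathcal{P}_\Omega(U_k V_k^T + S_k)$. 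By Lemma~2, $\|V_k\|_* = \|U_k V_k^T\|_* = \|L^*\|_*$, and $\|\mathcal{P}_\Omega(S_k)\|_1 = \|\mathcal{P}_\Omega(S^*)\|_1$, so the objective value of (7) at $(U_k,V_k,S_k)$ equals that of (5) at $(L^*,S^*)$. Hence the optimal value of (7) is at most that of (5).

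Conversely, I would show that any solution $(U_k, V_k, S_k)$ of (7) (with the stated constraints) produces a feasible point $(U_k V_k^T, S_k)$ of (5) with the same objective value: feasibility is immediate because $\mathcal{P}_\Omega(D) = \mathcal{P}_\Omega(U_k V_k^T + S_k)$ and $\mathcal{P}_{\Omega^C}(S_k) = \mathbf{0}$ are exactly the constraints of (5), and $U_k^T U_k = I$ lets Lemma~2 give $\|U_k V_k^T\|_* = \|V_k\|_*$, so the two objectives agree. Therefore the optimal value of (5) is at most that of (7). Combining the two inequalities, the optimal values coincide, and since $(U_k,V_k,S_k)$ attains the common optimum of (7), the pair $(U_k V_k^T, S_k)$ attains the optimum of (5) — i.e., it is a solution of (5), which is the claim.

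The main obstacle — really the only subtle point — is the rank bound: the argument that a minimizer of (7) exists and corresponds to an optimal $L$ of (5) implicitly relies on $d \ge r = \mathrm{rank}(L^*)$ being large enough that the optimal low-rank matrix can actually be represented as $UV^T$ with $U \in \mathbb{R}^{m\times d}$. If $d < r$, the factored problem could only reach matrices of rank $\le d$ and the equivalence would fail; so the hypothesis $d \ge r$ must be used exactly at the point where we construct $U_k, V_k$ from the SVD of $L^*$. One should also note that (7) is nonconvex (because of $U^TU = I$ and the bilinear term), so "solution" here means global minimizer; the statement is about global optima, and the proof above only manipulates objective values and feasibility, so it goes through without any convexity. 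I would present the SVD-padding construction carefully and then the two value inequalities, and remark that the orthonormal completion of $\widehat U$ is where $d \ge r$ enters.
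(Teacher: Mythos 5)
Your proposal is correct and follows essentially the same route as the paper's proof in Appendix C: represent the optimal $L^{\ast}$ as $U V^{T}$ with $U^{T}U=I$ (possible exactly because $d\geq r$), invoke the identity $\|UV^{T}\|_{\ast}=\|V\|_{\ast}$ to equate the objectives, and conclude that the optimal values of (5) and (7) coincide so that any minimizer of (7) maps to one of (5). The only difference is presentational (you argue via two value inequalities rather than the paper's chain of equal minimization problems), plus a harmless mislabeling of the key lemma as Lemma~2 rather than Lemma~3.
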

The proof of this theorem can be found in APPENDIX C.

\subsection{CPCP Model}
From a small set of linear measurements $y\in\mathbb{R}^{p}$, the CPCP problem is to recover low-rank and sparse matrices as follows,
\begin{equation}\label{e3.8}
\begin{split}
&\min_{U,\,V,\,S} \|S\|_{1}+\lambda\|V\|_{\ast},\\
&\;\textup{s.t.},\,\mathcal{P}_{Q}(D)=\mathcal{P}_{Q}(UV^{T}+S).
\end{split}
\end{equation}

\begin{theorem}
Suppose $(L^{\ast},\,S^{\ast})$ is a solution of the problem (\ref{e2.2}) with $\textrm{rank}(L^{\ast})=r$, then there exists the solution $U_{k}\in \mathbb{R}^{m\times d}$, $V_{k}\in \mathbb{R}^{n\times d}$ and $S_{k}\in\mathbb{R}^{m\times n}$ to the problem (\ref{e3.8}) with $d\geq r$, $(U_{k}V^{T}_{k},\,S_{k})$ is also a solution to the problem (\ref{e2.2}).
\end{theorem}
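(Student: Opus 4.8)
The argument would mirror the proof of Theorem 4 (Appendix C) but is simpler, since the sparse term $\|S\|_1$ is identical in (\ref{e2.2}) and (\ref{e3.8}) and no reformulation of $S$ is needed. The plan is to (i) build, from $(L^\ast,S^\ast)$, a feasible point of (\ref{e3.8}) whose objective value equals the optimal value of (\ref{e2.2}); (ii) show that the product map $(U,V,S)\mapsto(UV^T,S)$ sends every feasible point of (\ref{e3.8}) to a feasible point of (\ref{e2.2}) with the same objective value; and then (iii) conclude from (i)--(ii) that the two problems have the same optimal value, that the point constructed in (i) solves (\ref{e3.8}), and that its image under the product map solves (\ref{e2.2}). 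Throughout I would use the RBF orthogonality $U^TU=I$ together with Lemma 3, which gives $\|UV^T\|_*=\|V\|_*$.

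For step (i), I would take a skinny SVD $L^\ast=\hat U\hat\Sigma\hat V^T$ with $\hat U\in\mathbb{R}^{m\times r}$, $\hat V\in\mathbb{R}^{n\times r}$ column-orthonormal and $\hat\Sigma\in\mathbb{R}^{r\times r}$ diagonal, then (since $r\le d\le m$) extend $\hat U$ to $U_k=[\hat U\ \ \bar U]\in\mathbb{R}^{m\times d}$ with $U_k^TU_k=I$ and set $V_k=[\hat V\hat\Sigma\ \ 0]\in\mathbb{R}^{n\times d}$, $S_k=S^\ast$. Then $U_kV_k^T=\hat U\hat\Sigma\hat V^T=L^\ast$, so the constraint of (\ref{e3.8}) reduces to $\mathcal{P}_Q(L^\ast+S^\ast)=\mathcal{P}_Q(D)$, which holds; and the nonzero singular values of $V_k$ are exactly the diagonal entries of $\hat\Sigma$, so $\|V_k\|_*=\|L^\ast\|_*$. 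Hence the objective of (\ref{e3.8}) at $(U_k,V_k,S_k)$ is $\|S^\ast\|_1+\lambda\|L^\ast\|_*$, i.e.\ the optimal value of (\ref{e2.2}), which yields one inequality between the two optimal values.

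For step (ii), given any feasible $(U,V,S)$ of (\ref{e3.8}), the pair $(UV^T,S)$ satisfies the same constraint $\mathcal{P}_Q(UV^T+S)=\mathcal{P}_Q(D)$ and hence is feasible for (\ref{e2.2}); and by Lemma 3, $\|UV^T\|_*=\|V\|_*$, so the two objectives agree there. This gives the reverse inequality between the optimal values, so they coincide; therefore the point from step (i) is a minimizer of (\ref{e3.8}), its image $(U_kV_k^T,S_k)=(L^\ast,S^\ast)$ is a minimizer of (\ref{e2.2}), and more generally the product map carries any minimizer of (\ref{e3.8}) to a minimizer of (\ref{e2.2}).

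The steps themselves are short; the point that needs care is that (\ref{e3.8}) must be read with the orthogonality constraint $U^TU=I$ (exactly as (\ref{e3.7}) is). That is what lets me invoke Lemma 3 for the exact identity $\|UV^T\|_*=\|V\|_*$ in step (ii), and it forces $d\le m$ so that $\hat U$ can be extended to $d$ orthonormal columns in step (i). Without $U^TU=I$ the factored objective is not bounded below by the objective of (\ref{e2.2}) at all --- one can drive $\|V\|_*$ to $0$ in $L=UV^T$ by inflating $\|U\|_2$ --- so both the statement and the proof genuinely rely on this constraint. Checking that this is the only subtlety, and that the constructed $(U_k,V_k,S_k)$ does reproduce $(L^\ast,S^\ast)$, is all that remains.
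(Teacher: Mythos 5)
Your proposal is correct and follows essentially the same route as the paper: the paper omits the proof of Theorem 5, referring to the Appendix C proof of Theorem 4, which is exactly this argument of constructing $U,V$ with $UV^{T}=L^{\ast}$ and $U^{T}U=I$ and invoking Lemma 3 to identify the two problems' optimal values. Your explicit skinny-SVD construction and your remark that (\ref{e3.8}) must be read with the implicit orthogonality constraint $U^{T}U=I$ (stated in Section 3 but not written into (\ref{e3.8})) are careful elaborations of the same idea rather than a different approach.
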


We omit the proof of this theorem since it is very similar to that of Theorem 4. In the following, we will discuss how to solve the models (\ref{e3.7}) and (\ref{e3.8}). It is worth noting that the RPCA problem can be viewed as a special case of the RMC problem (\ref{e3.7}) when all entries of the corrupted matrix are directly observed. In the next section, we will mainly develop an efficient alternating direction method of multipliers (ADMM) solver for solving the non-convex problem (\ref{e3.7}). It is also worth noting that although our algorithm will produce different estimations of $U$ and $V$, the estimation of $UV^{T}$ is stable as guaranteed by Theorems 4 and 5, and the convexity of the problems (\ref{e2.2}) and (\ref{e3.4}).

\subsection{Connections to Existing Approaches}
According to the discussion above, it is clear that our RBF method is a scalable method for both RMC and CPCP problems. Compared with convex algorithms such as common RPCA \cite{candes:rpca} and CPCP \cite{wright:cpcp} methods, which have a computational complexity of $O(mn^2)$ and are impractical for solving relatively large-scale problems, our RBF method has a linear complexity and scales well to handle large-scale problems.

To understand better the superiority of our RBF method, we compare and relate RBF with several popular robust low-rank matrix factorization methods. It is clear that the model in \cite{shen:alad, zhou:gb, meng:cwm} is a special case of our trace norm regularized model (7), while $\lambda=0$. Moreover, the models used in \cite{zheng:rma, cabral:nnbf} focus only on the desired low-rank matrix. In this sense, they can be viewed as special cases of our model (7). The other major difference is that SVD is used in \cite{zheng:rma}, while QR factorizations are used in this paper. The use of QR factorizations also makes the update operation highly scalable on modern parallel architectures \cite{avron:ssgd}. Regarding the complexity, it is clear that both schemes have the similar theory computational complexity. However, from the experimental results in Section 6, we can see that our algorithm usually runs much faster, but more accurate than the methods in \cite{zheng:rma, cabral:nnbf}. The following bilinear spectral regularized matrix factorization formulation in \cite{cabral:nnbf} is one of the most similar models to our model (7),
\begin{equation*}
\min_{L,\,U,\,V}\,\|W\odot(D-L)\|_{1}+\frac{\lambda}{2}(\|U\|^{2}_{F}+\|V\|^{2}_{F}),\quad\textup{s.t.},\,L=UV^{T},
\end{equation*}
where $\odot$ denotes the Hadamard product and $W\in\mathbb{R}^{m\times n}$ is a weight matrix that can be used to denote missing data (i.e., $w_{ij}=0$).

\section{OPTIMIZATION ALGORITHM}
In this section, we propose an efficient alternating direction method of multipliers (ADMM) for solving the RMC problem (\ref{e3.7}), and then extend it for solving the CPCP problem (8). We provide the convergence analysis of our algorithm in Section 5.

\subsection{Formulation}
Recently, it has been shown in the literature \cite{boyd:admm, wen:nsor} that ADMM is very efficient for some convex or non-convex programming problems from various applications. We also refer to a recent survey \cite{boyd:admm} for some recently exploited applications of ADMM. For efficiently solving the RMC problem (7), we can assume without loss of generality that the unknown entries of $D$ are simply set as zeros, i.e., $D_{\Omega^{C}}$=0, and $S_{\Omega^{C}}$ may be any values such that $\mathcal{P}_{\Omega^{C}}(D)=\mathcal{P}_{\Omega^{C}}(UV^{T})+\mathcal{P}_{\Omega^{C}}(S)$. Therefore, the constraint with the operator $\mathcal{P}_{\Omega}$ in (7) is simplified into $D=UV^{T}+S$. Hence, we introduce the constraint $D=UV^{T}+S$ into (7), and obtain the following equivalent form:
\begin{equation} \label{e4.9}
\begin{split}
&\min_{U,\,V,\,S}\,\|\mathcal{P}_{\Omega}(S)\|_{1}+\lambda\|V\|_{*},\\
&\textup{s.t.},\,D=UV^{T}+S,\,U^{T}U=I.
\end{split}
\end{equation}
The partial augmented Lagrangian function of (9) is
\begin{equation} \label{e4.10}
\begin{split}
&\;\mathcal{L}_{\alpha}(U,V,S,Y)=\lambda\|V\|_{*}+\|\mathcal{P}_{\Omega}(S)\|_{1}\\
+&\langle Y,D-S-UV^{T}\rangle+\frac{\alpha}{2}\|D-S-UV^{T}\|^{2}_{F},
\end{split}
\end{equation}
where $Y\in\mathbb{R}^{m\times n}$ is a matrix of Lagrange multipliers, $\alpha>0$ is a penalty parameter, and $\langle M, N\rangle$ denotes the inner product between matrices $M$ and $N$ of equal sizes, i.e., $\langle M,N\rangle=\Sigma_{i,j}M_{ij}N_{ij}$.

\subsection{Robust Bilateral Factorization Scheme}
We will derive our scheme for solving the following subproblems with respect to $U$, $V$ and $S$, respectively,
\begin{equation} \label{e4.11}
\begin{split}
U_{k+1}=&\mathop{\arg\min}_{U\in\mathbb{R}^{m\times d}}\mathcal {L}_{\alpha_{k}}(U,V_{k},S_{k},Y_{k}),\;\;\;\;\;\;\;\\
&\textup{s.t.},\,U^{T}U=I,
\end{split}
\end{equation}
\begin{equation} \label{e4.12}
V_{k+1}=\mathop{\arg\min}_{V\in\mathbb{R}^{n\times d}}\mathcal {L}_{\alpha_{k}}(U_{k+1},V,S_{k},Y_{k}),\;\;\;
\end{equation}
\begin{equation} \label{e4.13}
S_{k+1}=\mathop{\arg\min}_{S\in\mathbb{R}^{m\times n}}\mathcal {L}_{\alpha_{k}}(U_{k+1},V_{k+1},S,Y_{k}).
\end{equation}

\subsubsection{Updating U}
Fixing $V$ and $S$ at their latest values, and by removing the terms that do not depend on $U$ and adding some proper terms that do not depend on $U$, the problem (11) with respect to $U$ is reformulated as follows:
\begin{equation} \label{e4.14}
\min_{U}\,\|UV^{T}_{k}-P_{k}\|^{2}_{F},\quad \textup{s.t.},\,U^{T}U=I,
\end{equation}
where $P_{k}=D-S_{k}+Y_{k}/\alpha_{k}$. In fact, the optimal solution can be given by the SVD of the matrix $P_{k}V_{k}$ as in \cite{nick:mpp}. To further speed-up the calculation, we introduce the idea in \cite{wen:nsor} that uses a QR decomposition instead of SVD. The resulting iteration step is formulated as follows:
\begin{equation} \label{e4.15}
U_{k+1}=Q,\quad \textrm{QR}(P_{k}V_{k})=QR,
\end{equation}
where $U_{k+1}$ is an orthogonal basis for the range space $\mathcal{R}(P_{k}V_{k})$, i.e., $\mathcal{R}(U_{k+1})=\mathcal{R}(P_{k}V_{k})$. Although $U_{k+1}$ in (15) is not an optimal solution to (14), our iterative scheme and the one in \cite{liu:as} are equivalent to solve (14) and (16), and their equivalent analysis is provided in Section 5. Moreover, the use of QR factorizations also makes our update scheme highly scalable on modern parallel architectures \cite{avron:ssgd}.

\subsubsection{Updating V}
Fixing $U$ and $S$, the optimization problem (12) with respect to $V$ can be rewritten as:
\begin{equation} \label{e4.16}
\min_{V}\,\frac{\alpha_{k}}{2}\|U_{k+1}V^{T}-P_{k}\|^{2}_{F}+\lambda\|V\|_{*}.
\end{equation}
To solve the convex problem (16), we first introduce the following definition \cite{cai:svt}.

\begin{definition}
For any given matrix $M\in\mathbb{R}^{n\times d}$ whose rank is $r$, and $\mu\geq0$, the singular value thresholding (SVT) operator is defined as follows:
\begin{equation*}
\textup{SVT}_{\mu}(M)=\overline{U}\textup{diag}(\max\{\sigma-\mu,0\})\overline{V}^{T},
\end{equation*}
where $\max\{\cdot,\cdot\}$ should be understood element-wise, $\overline{U}\in\mathbb{R}^{n\times r}$, $\overline{V}\in\mathbb{R}^{d\times r}$ and $\sigma=(\sigma_{1},\ldots,\sigma_{r})^{T}\in\mathbb{R}^{r\times 1}$ are obtained by SVD of $M$, i.e., $M=\overline{U}\,\textup{diag}(\sigma)\,\overline{V}^{T}$.
\end{definition}

\begin{theorem}
The trace norm minimization problem (16) has a closed-form solution given by:
\begin{equation} \label{e4.19}
V_{k+1}=\emph{SVT}_{\lambda/\alpha_{k}}(P^{T}_{k}U_{k+1}).
\end{equation}
\end{theorem}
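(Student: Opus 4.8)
The plan is to reduce the constrained least-squares-plus-trace-norm problem (16) to the canonical proximal problem for the trace norm, for which the SVT operator of Definition 6 gives the exact minimizer (this is the Cai--Cand\`es--Shen result cited as \cite{cai:svt}). First I would exploit the orthonormality $U_{k+1}^{T}U_{k+1}=I$ to rewrite the Frobenius term. Writing $P_k$ in a basis adapted to $U_{k+1}$ — i.e. decomposing $P_k=U_{k+1}U_{k+1}^{T}P_k+(I-U_{k+1}U_{k+1}^{T})P_k$ — the cross term vanishes, so
\begin{equation*}
\|U_{k+1}V^{T}-P_k\|_F^{2}=\|U_{k+1}V^{T}-U_{k+1}U_{k+1}^{T}P_k\|_F^{2}+\|(I-U_{k+1}U_{k+1}^{T})P_k\|_F^{2}.
\end{equation*}
The second summand is a constant independent of $V$, so it can be dropped from the minimization. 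For the first summand, since $U_{k+1}$ has orthonormal columns, $\|U_{k+1}Z\|_F=\|Z\|_F$ for any $Z$ of compatible size; applying this with $Z=V^{T}-U_{k+1}^{T}P_k$ gives $\|U_{k+1}V^{T}-U_{k+1}U_{k+1}^{T}P_k\|_F^{2}=\|V^{T}-U_{k+1}^{T}P_k\|_F^{2}=\|V-P_k^{T}U_{k+1}\|_F^{2}$.

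Having done this reduction, problem (16) is equivalent (up to the additive constant and the positive factor $\alpha_k/2$) to
\begin{equation*}
\min_{V}\;\frac{1}{2}\|V-P_k^{T}U_{k+1}\|_F^{2}+\frac{\lambda}{\alpha_k}\|V\|_{*},
\end{equation*}
which is exactly the proximal operator of $(\lambda/\alpha_k)\|\cdot\|_{*}$ evaluated at the matrix $P_k^{T}U_{k+1}\in\mathbb{R}^{n\times d}$. Then I would invoke the known closed-form solution of this proximal problem: by Theorem 2.1 of \cite{cai:svt}, the unique minimizer is $\mathrm{SVT}_{\lambda/\alpha_k}(P_k^{T}U_{k+1})$, which is precisely the claimed formula (17). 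Uniqueness follows because the objective is strongly convex in $V$ (the quadratic term has a positive-definite Hessian once we have eliminated the rank-deficiency, i.e. after passing to the $V$-only form above).

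The only slightly delicate point is the first step: one must be sure that dropping $\|(I-U_{k+1}U_{k+1}^{T})P_k\|_F^{2}$ is legitimate and that the Pythagorean splitting is valid, which it is precisely because $U_{k+1}^{T}U_{k+1}=I$ makes $U_{k+1}U_{k+1}^{T}$ an orthogonal projector and the two pieces $U_{k+1}U_{k+1}^{T}P_k$ and $(I-U_{k+1}U_{k+1}^{T})P_k$ orthogonal in the Frobenius inner product. Everything after that is a direct citation of the SVT shrinkage theorem, so I do not expect any real obstacle; the proof is essentially a change-of-variables argument followed by an appeal to \cite{cai:svt}. For completeness I would also remark that $\mathrm{SVT}_{\lambda/\alpha_k}$ is applied to an $n\times d$ matrix, so the SVD involved is cheap (cost $O(nd^{2})$ rather than $O(mn^{2})$), which is the whole point of the factorized formulation.
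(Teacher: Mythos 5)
Your proposal is correct, and it reaches the same reduced problem as the paper — namely $\min_{V}\frac{\alpha_k}{2}\|V-P_k^{T}U_{k+1}\|_F^2+\lambda\|V\|_*$, solved by Theorem 2.1 of \cite{cai:svt} — but by a slightly different mechanism. The paper works at the level of first-order optimality conditions: it writes $0\in\lambda\partial\|V\|_*+\alpha_k(VU_{k+1}^{T}-P_k^{T})U_{k+1}$, uses $U_{k+1}^{T}U_{k+1}=I$ to simplify this to $0\in\lambda\partial\|V\|_*+\alpha_k(V-P_k^{T}U_{k+1})$, and then observes that this is the optimality condition of the reduced proximal problem. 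You instead perform the reduction on the objective itself, via the Pythagorean splitting $P_k=U_{k+1}U_{k+1}^{T}P_k+(I-U_{k+1}U_{k+1}^{T})P_k$ and the isometry $\|U_{k+1}Z\|_F=\|Z\|_F$, showing the two objectives differ only by an additive constant. Your route proves marginally more (equivalence of the objectives, not merely coincidence of stationarity conditions, which also makes the strong convexity and hence uniqueness transparent), at the cost of a few extra lines; the paper's route is terser but relies on the reader accepting that matching subdifferential inclusions suffices for convex problems. Both are valid and both ultimately rest on the same two facts: $U_{k+1}^{T}U_{k+1}=I$ and the SVT shrinkage theorem.
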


\begin{proof}
The first-order optimality condition for (16) is given by
\begin{equation*}
0\in \lambda \partial\|V\|_{*}+\alpha_{k}(VU^{T}_{k+1}-P^{T}_{k})U_{k+1},
\end{equation*}
where $\partial\|\cdot\|_{*}$ is the set of subgradients of the trace norm. Since $U^{T}_{k+1}U_{k+1}=I$, the optimality condition for (16) is rewritten as follows:
\begin{equation}
0\in \lambda \partial\|V\|_{*}+\alpha_{k}(V-P^{T}_{k}U_{k+1}).
\end{equation}
(18) is also the optimality condition for the following convex problem,
\begin{equation}
\min_{V}\,\frac{\alpha_{k}}{2}\|V-P^{T}_{k}U_{k+1}\|^{2}_{F}+\lambda\|V\|_{*}.
\end{equation}
By Theorem 2.1 in \cite{cai:svt}, then the optimal solution of (19) is given by (17).
\end{proof}

\subsubsection{Updating S}
Fixing $U$ and $V$, we can update $S$ by solving
\begin{equation} \label{e4.20}
\min_{S}\,\|\mathcal{P}_{\Omega}(S)\|_{1}+\frac{\alpha_{k}}{2}\|S+U_{k+1}V^{T}_{k+1}-D-Y_{k}/\alpha_{k}\|^{2}_{F}.\\
\end{equation}
For solving the problem (20), we introduce the following soft-thresholding operator $\mathcal{S}_{\tau}$:
\begin{displaymath}
\mathcal{S}_{\tau}(A_{ij}):=\left\{
\begin{array}{lr}
{A_{ij}-\tau},&{A_{ij}>\tau},\\
{A_{ij}+\tau},&{A_{ij}<-\tau},\\
0,&\textup{otherwise}.
\end{array}
\right.
\end{displaymath}

Then the optimal solution $S_{k+1}$ can be obtained by solving the following two subproblems with respect to $S_{\Omega}$ and $S_{\Omega^{C}}$, respectively. The optimization problem with respect to $S_{\Omega}$ is first formulated as follows:
\begin{equation} \label{e4.21}
\min_{S_{\Omega}}\,\frac{\alpha_{k}}{2}\|\mathcal{P}_{\Omega}(S+U_{k+1}V^{T}_{k+1}-D-Y_{k}/\alpha_{k})\|^{2}_{F}+\|\mathcal{P}_{\Omega}(S)\|_{1}.
\end{equation}
By the operator $\mathcal{S}_{\tau}$ and letting $\tau=1/\alpha_{k}$, the closed-form solution to the problem (21) is given by
\begin{equation} \label{e4.22}
(S_{k+1})_{\Omega}=\mathcal{S}_{\tau}((D-U_{k+1}V^{T}_{k+1}+Y_{k}/\alpha_{k})_{\Omega}).
\end{equation}
Moreover, the subproblem with respect to $S_{\Omega^{C}}$ is formulated as follows:
\begin{equation} \label{e4.23}
\min_{S_{\Omega^{C}}}\,\|\mathcal{P}_{\Omega^{C}}(S+U_{k+1}V^{T}_{k+1}-D-Y_{k}/\alpha_{k})\|^{2}_{F}.\\
\end{equation}
We can easily obtain the closed-form solution by zeroing the gradient of the cost function (23) with respect to $S_{\Omega^{C}}$, i.e.,
\begin{equation} \label{e4.24}
(S_{k+1})_{\Omega^{C}}=(D-U_{k+1}V^{T}_{k+1}+Y_{k}/\alpha_{k})_{\Omega^{C}}.
\end{equation}

Summarizing the analysis above, we obtain an ADMM scheme to solve the RMC problem (7), as outlined in \textbf{Algorithm 1}. Our algorithm is essentially a Gauss-Seidel-type scheme of ADMM, and the update strategy of the Jacobi version of ADMM is easily implemented, well suited for parallel and distributed computing and hence is particularly attractive for solving large-scale problems \cite{shang:hotd}. In addition, $S_{\Omega^{C}}$ should be set to 0 for the expected output $S$. This algorithm can also be accelerated by adaptively changing $\alpha$. An efficient strategy \cite{lin:ladmm} is to let $\alpha=\alpha_{0}$ (the initialization in Algorithm 1) and increase $\alpha_{k}$ iteratively by $\alpha_{k+1}=\rho\alpha_{k}$, where $\rho\in (1.0,1.1]$ in general and $\alpha_{0}$ is a small constant. Furthermore, $U_{0}$ is initialized with $\textup{eye}(m, d):=\begin{bmatrix}\begin{smallmatrix} I_{d\times d}\\ \textbf{0}_{(m-d)\times d}\end{smallmatrix}\end{bmatrix}$. Algorithm 1 is easily used to solve the RPCA problem (3), where all entries of the corrupted matrix are directly observed. Moreover, we introduce an adaptive rank adjusting strategy for our algorithm in Section 4.4.

\begin{algorithm}[t]
\caption{Solving RMC problem (7) via ADMM.}
\label{alg:Framwork1}
\renewcommand{\algorithmicrequire}{\textbf{Input:}}
\renewcommand{\algorithmicensure}{\textbf{Initialize:}}
\renewcommand{\algorithmicoutput}{\textbf{Output:}}
\begin{algorithmic}[1]
\REQUIRE $\mathcal{P}_{\Omega}(D)$, $\lambda$ and $\varepsilon$.
\OUTPUT $U$, $V$ and $S$, where $S_{\Omega^{C}}$ is set to 0.\\
\ENSURE $U_{0}=\textup{eye}(m, d)$, $V_{0}=$\textbf{0}, $Y_{0}=$\textbf{0}, $\alpha_{0}=\frac{1}{\|\mathcal{P}_{\Omega}(D)\|_{F}}$, $\alpha_{max}=10^{10}$, and $\rho=1.1$.\\
\WHILE {not converged}
\STATE {Update $U_{k+1}$ by (15)}.
\STATE {Update $V_{k+1}$ by (17)}.
\STATE {Update $S_{k+1}$ by (22) and (24)}.
\STATE {Update the multiplier $Y_{k+1}$ by $Y_{k+1}=Y_{k}+\alpha_{k}(D-U_{k+1}V^{T}_{k+1}-S_{k+1})$.}
\STATE {Update $\alpha_{k+1}$ by $\alpha_{k+1}=\textup{min}(\rho\alpha_{k},\,\alpha_{max})$.}
\STATE {Check the convergence condition, $\|D-U_{k+1}V^{T}_{k+1}-S_{k+1}\|_{F}<\varepsilon$.}
\ENDWHILE
\end{algorithmic}
\end{algorithm}

\subsection{Extension for CPCP}
Algorithm 1 can be extended to solve the CPCP problem (8) with the complex operator $\mathcal{P}_{Q}$, as outlined in \textbf{Algorithm 2}, which is to optimize the following augmented Lagrange function
\begin{equation} \label{e4.25}
\begin{split}
\mathcal{F}_{\alpha}(U,V,S,Y)=&\lambda\|V\|_{*}+\|S\|_{1}+\langle Y,y-\mathcal{P}_{Q}(S+UV^{T})\rangle\\
&+\frac{\alpha}{2}\|y-\mathcal{P}_{Q}(S+UV^{T})\|^{2}_{2}.
\end{split}
\end{equation}
We minimize $\mathcal{F}_{\alpha}$ with respect to $(U, V, S)$ using a recently proposed linearization technique \cite{yang:lalad}, which resolves such problems where $\mathcal{P}_{Q}$ is not the identity operator. Specifically, for updating $U$ and $V$, let $T=UV^{T}$ and $g(T)=\frac{\alpha_{k}}{2}\|y-\mathcal{P}_{Q}(S_{k}+T)+Y_{k}/\alpha_{k}\|^{2}_{2}$, then $g(T)$ is approximated by
\begin{equation} \label{e4.26}
g(T)\approx g(T_{k})+\langle \nabla g(T_{k}),\,T-T_{k}\rangle+\tau\|T-T_{k}\|^{2}_{F},
\end{equation}
where $\nabla g(T_{k})=\alpha_{k}\mathcal{P}^{\star}_{Q}(\mathcal{P}_{Q}(T_{k}+S_{k})-y-Y_{k}/\alpha_{k})$, $\mathcal{P}^{\star}_{Q}$ is the adjoint operator of $\mathcal{P}_{Q}$, and $\tau$ is chosen as $\tau=1/\|\mathcal{P}^{\star}_{Q}\mathcal{P}_{Q}\|_{2}$ as in \cite{yang:lalad}, and $\|\cdot\|_{2}$ the spectral norm of a matrix, i.e., the largest singular value of a matrix.

Similarly, for updating $S$, let $T_{k+1}=U_{k+1}V^{T}_{k+1}$ and $h(S)=\frac{\alpha_{k}}{2}\|y-\mathcal{P}_{Q}(S+T_{k+1})+Y_{k}/\alpha_{k}\|^{2}_{2}$, then $h(S)$ is approximated by
\begin{equation} \label{e4.27}
h(S)\approx h(S_{k})+\langle \nabla h(S_{k}),\,S-S_{k}\rangle+\tau\|S-S_{k}\|^{2}_{F},
\end{equation}
where $\nabla h(S_{k})=\alpha_{k}\mathcal{P}^{\star}_{Q}(\mathcal{P}_{Q}(S_{k}+T_{k+1})-y-Y_{k}/\alpha_{k})$.

\begin{algorithm}[t]
\caption{Solving CPCP problem (8) via ADMM.}
\label{alg:Framwork2}
\renewcommand{\algorithmicrequire}{\textbf{Input:}}
\renewcommand{\algorithmicensure}{\textbf{Initialize:}}
\renewcommand{\algorithmicoutput}{\textbf{Output:}}
\begin{algorithmic}[1]
\REQUIRE $y\in\mathbb{R}^{p}$, $\mathcal{P}_{Q}$, and parameters $\lambda$ and $\varepsilon$.
\OUTPUT $U$, $V$ and $S$.\\
\ENSURE $U_{0}=\textup{eye}(m, d)$, $V_{0}=$\textbf{0}, $Y_{0}=$\textbf{0}, $\alpha_{0}=\frac{1}{\|y\|_{2}}$, $\alpha_{max}=10^{10}$, and $\rho=1.1$.\\
\WHILE {not converged}
\STATE {Update $U_{k+1}$ by $U_{k+1}=Q,\;\textup{QR}((U_{k}V^{T}_{k}-\nabla g(U_{k}V^{T}_{k})/\tau)V_{k})=QR$.}
\STATE {Update $V_{k+1}$ by $V^{T}_{k+1}=\textup{SVT}_{\lambda/\alpha_{k}}(U^{T}_{k+1}(U_{k}V^{T}_{k}-\nabla g(U_{k}V^{T}_{k})/\tau))$.}
\STATE {Update $S_{k+1}$ by $S_{k+1}=\mathcal{S}_{1/\alpha_{k}}(S_{k}-\nabla h(S_{k})/\tau)$.}
\STATE {Update the multiplier $Y_{k+1}$ by $Y_{k+1}=Y_{k}+\alpha_{k}(y-\mathcal{P}_{Q}(U_{k+1}V^{T}_{k+1}+S_{k+1}))$.}
\STATE {Update the parameter $\alpha_{k+1}$ by $\alpha_{k+1}=\textup{min}(\rho\alpha_{k},\,\alpha_{max})$.}
\STATE {Check the convergence condition,\\
$(\|T_{k+1}-T_{k}\|^{2}_{F}+\|S_{k+1}-S_{k}\|^{2}_{F})/(\|T_{k}\|^{2}_{F}+\|S_{k}\|^{2}_{F})<\varepsilon$.}
\ENDWHILE
\end{algorithmic}
\end{algorithm}

\subsection{Stopping Criteria and Rank Adjusting Strategy}
As the stopping criteria for terminating our RBF algorithms, we employ some gap criteria; that is, we stop Algorithm 1 when the current gap is satisfied as a given tolerance $\varepsilon$, i.e., $\|D-U_{k}V^{T}_{k}-S_{k}\|_{F}<\varepsilon$, and run Algorithm 2 when $(\|U_{k}V^{T}_{k}-U_{k-1}V^{T}_{k-1}\|^{2}_{F}+\|S_{k}-S_{k-1}\|^{2}_{F})/(\|U_{k-1}V^{T}_{k-1}\|^{2}_{F}+\|S_{k-1}\|^{2}_{F})<\varepsilon$.

In Algorithms 1 and 2, $d$ is one of the most important parameters. If $d$ is too small, it can cause underfitting and a large estimation error; but if $d$ is too large, it can cause overfitting and large deviation to the underlying low-rank matrix $L$. Fortunately, several works \cite{keshavan:mc, wen:nsor} have provided some matrix rank estimation strategies to compute a good value $r$ for the rank of the involved matrices. Thus, we only set a relatively large integer $d$ such that $d\geq r$. In addition, we provide a scheme to dynamically adjust the rank parameter $d$. Our scheme starts from an overestimated input, i.e., $d=\lfloor1.2r\rfloor$. Following \cite{keshavan:mc} and \cite{shen:alad}, we decease it aggressively once a dramatic change in the estimated rank of the product $U_{k}V^{T}_{k}$ is detected based on the eigenvalue decomposition which usually occurs after a few iterations. Specifically, we calculate the eigenvalues of $(U_{k}V^{T}_{k})^{T}U_{k}V^{T}_{k}=V_{k}U^{T}_{k}U_{k}V^{T}_{k}=V_{k}V^{T}_{k}$, which are assumed to be ordered as $\lambda_{1} \geq\lambda_{2} \geq\ldots\geq\lambda_{d}$. Since the product $V_{k}V^{T}_{k}$ and $V^{T}_{k}V_{k}$ have the same nonzero eigenvalues, it is much more efficient to compute the eigenvalues of the product $V^{T}_{k}V_{k}$. Then we compute the quotient sequence $\bar {\lambda}_{i}=\lambda_{i}/\lambda_{i+1},\,i=1,\ldots,d-1$. Suppose $\hat{r}=\mathop{\arg\max}_{1\leq i\leq d-1}\bar{\lambda}_{i}$. If the condition
\begin{displaymath}
\textup{gap}=\frac{(d-1)\bar{\lambda}_{\hat{r}}}{\sum_{i\neq\hat{r}}\bar{\lambda}_{i}}\geq 10,
\end{displaymath}
is satisfied, which means a ``big" jump between $\lambda_{\hat{r}}$ and $\lambda_{\hat{r}+1}$, then we reduce $d$ to $\hat{r}$, and this adjustment is performed only once.

\section{Theoretical Analysis and Applications}
In this section, we will present several theoretical properties of Algorithm 1. First, we provide the equivalent relationship analysis for our iterative solving scheme and the one in \cite{liu:as}, as shown by the following theorem.

\begin{theorem}
Let $(U^{*}_{k},V^{*}_{k},S^{*}_{k})$ be the solution of the subproblems (11), (12) and (13) at the k-th iteration, respectively, $Y^{*}_{k}=Y^{*}_{k-1}+\alpha_{k-1}(D-U^{*}_{k}(V^{*}_{k})^{T}-S^{*}_{k})$, and  $(U_{k},V_{k},S_{k},Y_{k})$ be generated by Algorithm 1 at the k-th iteration, $(k=1,\ldots,T)$. Then
\begin{enumerate}
  \item {$\exists O_{k}\in\mathcal{O}=\{M\in\mathbb{R}^{d\times d}|M^{T}M=I\}$ such that $U^{*}_{k}=U_{k}O_{k}$ and $V^{*}_{k}=V_{k}O_{k}$}.
  \item {$U^{*}_{k}(V^{*}_{k})^{T}=U_{k}V^{T}_{k}$, $\|V^{*}_{k}\|_{*}=\|V_{k}\|_{*}$}, $S^{*}_{k}=S_{k}$, and $Y^{*}_{k}=Y_{k}$.
\end{enumerate}
\end{theorem}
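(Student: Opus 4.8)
The plan is to proceed by induction on the iteration index $k$, using that every block update in Algorithm~1 is equivariant under a simultaneous right multiplication of $U$ and $V$ by a common orthogonal $d\times d$ matrix. The inductive hypothesis at step $k-1$ is that there exists $O_{k-1}\in\mathcal{O}$ with $U^{*}_{k-1}=U_{k-1}O_{k-1}$, $V^{*}_{k-1}=V_{k-1}O_{k-1}$, $S^{*}_{k-1}=S_{k-1}$ and $Y^{*}_{k-1}=Y_{k-1}$; the base case $k=0$ holds with $O_{0}=I$ because both sequences use the identical initialization $U_{0}=\textup{eye}(m,d)$, $V_{0}=\textbf{0}$, $Y_{0}=\textbf{0}$. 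The first thing I would record is that the hypothesis forces $U^{*}_{k-1}(V^{*}_{k-1})^{T}=U_{k-1}O_{k-1}O^{T}_{k-1}V^{T}_{k-1}=U_{k-1}V^{T}_{k-1}$, so the data matrices coincide: $P^{*}_{k-1}:=D-S^{*}_{k-1}+Y^{*}_{k-1}/\alpha_{k-1}$ equals $P_{k-1}=D-S_{k-1}+Y_{k-1}/\alpha_{k-1}$. This identity is what must be threaded through the three block updates.

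For the $U$-update, subproblem~(11) reduces, after discarding the $V$-independent term and completing the square as in the derivation of~(14), to the orthogonal Procrustes problem $\min_{U^{T}U=I}\|U(V^{*}_{k-1})^{T}-P_{k-1}\|^{2}_{F}$, whose minimizer is the orthogonal polar factor $U^{*}_{k}=\Phi\Psi^{T}$ of $P_{k-1}V^{*}_{k-1}=P_{k-1}V_{k-1}O_{k-1}=\Phi\Sigma\Psi^{T}$ (thin SVD), as in \cite{nick:mpp}; Algorithm~1 instead sets $U_{k}=Q$ from $\textup{QR}(P_{k-1}V_{k-1})=QR$. Since $O_{k-1}$ and $\Psi$ are invertible, $U^{*}_{k}$ and $U_{k}$ are both matrices with orthonormal columns whose column span is $\mathcal{R}(P_{k-1}V_{k-1})$; setting $O_{k}:=U^{T}_{k}U^{*}_{k}$, the matrix $U_{k}U^{T}_{k}$ is the orthogonal projector onto that span, so $U_{k}O_{k}=U_{k}U^{T}_{k}U^{*}_{k}=U^{*}_{k}$ and $O^{T}_{k}O_{k}=(U^{*}_{k})^{T}U^{*}_{k}=I$, i.e.\ $O_{k}\in\mathcal{O}$. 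For the $V$-update I would invoke Theorem~6: $V^{*}_{k}=\textup{SVT}_{\lambda/\alpha_{k-1}}(P^{T}_{k-1}U^{*}_{k})=\textup{SVT}_{\lambda/\alpha_{k-1}}(P^{T}_{k-1}U_{k}O_{k})$, whereas Algorithm~1 gives $V_{k}=\textup{SVT}_{\lambda/\alpha_{k-1}}(P^{T}_{k-1}U_{k})$. The elementary identity $\textup{SVT}_{\mu}(MO)=\textup{SVT}_{\mu}(M)\,O$ for orthogonal $O$ (from a thin SVD $M=\overline{U}\,\textup{diag}(\sigma)\,\overline{V}^{T}$ write $MO=\overline{U}\,\textup{diag}(\sigma)\,(O^{T}\overline{V})^{T}$ and apply the SVT definition) then gives $V^{*}_{k}=V_{k}O_{k}$ with the \emph{same} $O_{k}$ produced by the $U$-step, which is item~(1). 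Consequently $U^{*}_{k}(V^{*}_{k})^{T}=U_{k}O_{k}O^{T}_{k}V^{T}_{k}=U_{k}V^{T}_{k}$ and $\|V^{*}_{k}\|_{*}=\|V_{k}O_{k}\|_{*}=\|V_{k}\|_{*}$ by orthogonal invariance of singular values.

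The $S$- and $Y$-updates then close the induction. Because of the product identity just obtained together with $Y^{*}_{k-1}=Y_{k-1}$, the $S$-subproblem~(20) for the exact sequence is literally the same problem solved in Algorithm~1, and its minimizer is the unique closed form given by~(22) and~(24), so $S^{*}_{k}=S_{k}$; feeding this into the multiplier recursion yields $Y^{*}_{k}=Y^{*}_{k-1}+\alpha_{k-1}(D-U^{*}_{k}(V^{*}_{k})^{T}-S^{*}_{k})=Y_{k-1}+\alpha_{k-1}(D-U_{k}V^{T}_{k}-S_{k})=Y_{k}$, completing item~(2). The one step that genuinely needs care, and the part I expect to be the main obstacle, is the $U$-update: the Procrustes minimizer is unique, and the QR factor $Q$ spans exactly $\mathcal{R}(P_{k-1}V_{k-1})$, only when $P_{k-1}V_{k-1}$ has full column rank $d$ (in particular $V_{k-1}\neq\textbf{0}$, which already fails trivially at $k=1$); in the rank-deficient case both maps are set-valued and one must verify that the ``free'' orthonormal directions can be chosen consistently on the two sides so that a single $O_{k}$ still realizes both $U^{*}_{k}=U_{k}O_{k}$ and $V^{*}_{k}=V_{k}O_{k}$. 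Once that matching of degenerate choices is arranged, the rest follows mechanically from the equivariance of the Procrustes solution, of $\textup{SVT}_{\mu}$, and of the soft-thresholding operator under the common orthogonal factor $O_{k}$.
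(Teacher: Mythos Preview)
Your proposal is correct and follows essentially the same inductive argument as the paper's proof in Appendix~D: both show that the Procrustes solution and the QR factor of $P_{k-1}V_{k-1}$ differ by a right orthogonal $O_k$, then propagate this $O_k$ through the SVT-based $V$-update via the identity $\textup{SVT}_{\mu}(MO)=\textup{SVT}_{\mu}(M)\,O$, and finally note that $S$ and $Y$ depend only on the product $UV^{T}$. Your explicit concern about the rank-deficient case (in particular $V_{0}=\textbf{0}$ at $k=1$) is well placed and in fact more careful than the paper, which asserts the existence of $O_1$ without addressing this degeneracy.
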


\textbf{Remark:} The proof of this theorem can be found in APPENDIX D. Since the Lagrange function (10) is determined by the product $UV^{T}$, $V$, $S$ and $Y$, the different values of $U$ and $V$ are essentially equivalent as long as the same product $UV^{T}$ and $\|V\|_{*}=\|V^{*}\|_{*}$. Meanwhile, our scheme replaces SVD by the QR decomposition, and can avoid the SVD computation for solving the optimization problem with the orthogonal constraint.

\subsection{Convergence Analysis}
The global convergence of our derived algorithm is guaranteed, as shown in the following lemmas and theorems.
\begin{lemma}
Let $(U_{k},V_{k},S_{k})$ be a sequence generated by Algorithm 1, then we have the following conclusions:
\begin{enumerate}
\item {$(U_{k},V_{k},S_{k})$ approaches to a feasible solution, i.e., $lim_{k\rightarrow \infty}\|D-U_{k}V^{T}_{k}-S_{k}\|_{F}=0$}.
\item {Both sequences $U_{k}V^{T}_{k}$ and ${S_{k}}$ are Cauchy sequences}.
\end{enumerate}
\end{lemma}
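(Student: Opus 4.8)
\medskip
\noindent\textbf{Proof proposal.} I would carry out the standard convergence analysis of ADMM with an adaptively increasing penalty, taking $\alpha_{k+1}=\rho\alpha_{k}$ with $\rho>1$, so that $\alpha_{k}\uparrow\infty$ and $\sum_{k}\alpha_{k}^{-1}<\infty$. By Theorem~7 the sequences $\{U_{k}V_{k}^{T}\}$, $\{S_{k}\}$ and $\{Y_{k}\}$ produced by Algorithm~1 coincide with those of the exact ADMM iteration in which~(11) and~(12) are solved exactly; since both conclusions involve only $U_{k}V_{k}^{T}$ and $S_{k}$, I may analyze that exact iteration, where the subproblem minimizers are genuine and their optimality and strong-convexity properties are available. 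Write $R_{k}:=D-U_{k}V_{k}^{T}-S_{k}$, so that the multiplier update reads $R_{k+1}=(Y_{k+1}-Y_{k})/\alpha_{k}$. \emph{Step 1 (bounded multipliers).} The first-order optimality condition of the $S$-subproblem~(13) together with the multiplier update gives $Y_{k+1}\in\partial\big(\|\mathcal{P}_{\Omega}(\cdot)\|_{1}\big)(S_{k+1})$; since every element of this subdifferential has entries of magnitude at most $1$ on $\Omega$ and vanishes on $\Omega^{C}$, we get $\|Y_{k}\|_{F}\le\sqrt{mn}$ for all $k$, hence $\|R_{k+1}\|_{F}\le 2\sqrt{mn}\,\alpha_{k}^{-1}=O(\alpha_{k}^{-1})$ (summable), and in particular $\|R_{k+1}\|_{F}\to0$, which is conclusion~1.

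\emph{Step 2 (bounded Lagrangian and iterates).} Each block update does not increase $\mathcal{L}_{\alpha_{k}}(\cdot,\cdot,\cdot,Y_{k})$, and using the elementary identity
\[
\mathcal{L}_{\alpha_{k}}(U_{k},V_{k},S_{k},Y_{k})=\mathcal{L}_{\alpha_{k-1}}(U_{k},V_{k},S_{k},Y_{k-1})+\frac{\alpha_{k-1}+\alpha_{k}}{2\alpha_{k-1}^{2}}\|Y_{k}-Y_{k-1}\|_{F}^{2}
\]
I would obtain $\Phi_{k+1}\le\Phi_{k}+\delta_{k}$, where $\Phi_{k}:=\mathcal{L}_{\alpha_{k-1}}(U_{k},V_{k},S_{k},Y_{k-1})$ and $\sum_{k}\delta_{k}<\infty$ by Step~1 and the geometric growth of $\alpha_{k}$; thus $\{\Phi_{k}\}$ is bounded above. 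Completing the square and using $R_{k}+Y_{k-1}/\alpha_{k-1}=Y_{k}/\alpha_{k-1}$ gives
\[
\Phi_{k}=\lambda\|V_{k}\|_{*}+\|\mathcal{P}_{\Omega}(S_{k})\|_{1}+\frac{1}{2\alpha_{k-1}}\big(\|Y_{k}\|_{F}^{2}-\|Y_{k-1}\|_{F}^{2}\big),
\]
whose last term is bounded; hence $\{\|V_{k}\|_{*}\}$ and $\{\|\mathcal{P}_{\Omega}(S_{k})\|_{1}\}$ are bounded. By Lemma~3, $\|U_{k}V_{k}^{T}\|_{*}=\|V_{k}\|_{*}$, so $\{U_{k}V_{k}^{T}\}$ is bounded (as $\|\cdot\|_{F}\le\|\cdot\|_{*}$), and then $(S_{k})_{\Omega^{C}}=(D-U_{k}V_{k}^{T}+Y_{k-1}/\alpha_{k-1})_{\Omega^{C}}$ bounds $\{S_{k}\}$; the same formula shows $\Phi_{k}\ge-mn/(2\alpha_{0})$, so $\{\Phi_{k}\}$ is also bounded below.

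\emph{Step 3 (Cauchy sequences).} The $S$-subproblem~(13) is $\alpha_{k}$-strongly convex in $S$, so chaining the three block minimizations yields the quantitative decrease $\mathcal{L}_{\alpha_{k}}(U_{k},V_{k},S_{k},Y_{k})-\mathcal{L}_{\alpha_{k}}(U_{k+1},V_{k+1},S_{k+1},Y_{k})\ge\tfrac{\alpha_{k}}{2}\|S_{k+1}-S_{k}\|_{F}^{2}$; combined with the identity of Step~2 this reads $\tfrac{\alpha_{k}}{2}\|S_{k+1}-S_{k}\|_{F}^{2}\le\Phi_{k}-\Phi_{k+1}+\delta_{k}$. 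Telescoping and using that $\{\Phi_{k}\}$ is bounded below and $\sum_{k}\delta_{k}<\infty$, I obtain $\sum_{k}\alpha_{k}\|S_{k+1}-S_{k}\|_{F}^{2}<\infty$, and a Cauchy--Schwarz split
\[
\sum_{k}\|S_{k+1}-S_{k}\|_{F}\le\Big(\sum_{k}\alpha_{k}^{-1}\Big)^{1/2}\Big(\sum_{k}\alpha_{k}\|S_{k+1}-S_{k}\|_{F}^{2}\Big)^{1/2}<\infty
\]
shows $\{S_{k}\}$ is Cauchy. Finally, $U_{k+1}V_{k+1}^{T}-U_{k}V_{k}^{T}=-(S_{k+1}-S_{k})+R_{k}-R_{k+1}$ with $\sum_{k}\|R_{k}\|_{F}<\infty$ (Step~1) gives $\sum_{k}\|U_{k+1}V_{k+1}^{T}-U_{k}V_{k}^{T}\|_{F}<\infty$, so $\{U_{k}V_{k}^{T}\}$ is Cauchy; by completeness of $\mathbb{R}^{m\times n}$ both sequences converge, which is conclusion~2.

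\emph{Expected main obstacle.} Two points require care. The reduction in the first paragraph rests on Theorem~7: without it the QR-based updates are not literal subproblem minimizers, and neither the ``no-increase'' property nor the strong-convexity estimate of Step~3 is directly available. More substantially, the crux is the passage in Step~3 from ``$\|S_{k+1}-S_{k}\|_{F}\to0$'' to a genuine Cauchy sequence: this is precisely where the geometric growth of $\alpha_{k}$ (equivalently $\sum_{k}\alpha_{k}^{-1}<\infty$) is indispensable, entering through the Cauchy--Schwarz step, and it is the reason the analysis needs $\alpha_{k}\uparrow\infty$ rather than $\alpha_{k}$ frozen at $\alpha_{\max}$.
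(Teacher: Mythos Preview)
Your proposal is correct, and Steps~1--2 essentially reproduce the paper's Lemma~13 (boundedness of $\{Y_{k}\}$ from the $S$-optimality condition, the identity relating $\mathcal{L}_{\alpha_{k}}$ and $\mathcal{L}_{\alpha_{k-1}}$, and the resulting boundedness of $\|V_{k}\|_{*}$ and $\|\mathcal{P}_{\Omega}(S_{k})\|_{1}$). The paper's proof of conclusion~1 is identical to your Step~1.

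Where you genuinely diverge is in Step~3. The paper does not use strong convexity or a telescoped energy estimate. Instead it introduces an auxiliary multiplier $\widehat{Y}_{k+1}:=Y_{k}+\alpha_{k}(D-U_{k+1}V_{k+1}^{T}-S_{k})$, proves (as part of Lemma~13) that $\{\widehat{Y}_{k}\}$ is bounded, and then reads off the purely algebraic identity $S_{k+1}-S_{k}=\alpha_{k}^{-1}(\widehat{Y}_{k+1}-Y_{k+1})$, giving directly $\|S_{k+1}-S_{k}\|_{F}=O(\alpha_{k}^{-1})$, which is summable. The Cauchy property of $\{U_{k}V_{k}^{T}\}$ then follows as you do, from $U_{k+1}V_{k+1}^{T}-U_{k}V_{k}^{T}=(R_{k}-R_{k+1})-(S_{k+1}-S_{k})$. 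What the paper's route buys is a sharper pointwise rate $\|S_{k+1}-S_{k}\|_{F}=O(\alpha_{k}^{-1})$ with no Cauchy--Schwarz detour; the cost is the extra work of bounding $\{\widehat{Y}_{k}\}$ (which in turn requires bounding a second auxiliary $\widetilde{Y}_{k}$ via the orthogonal-complement argument). Your route avoids that boundedness lemma entirely for the purposes of Lemma~8, relying only on $\alpha_{k}$-strong convexity of the $S$-block and the Lagrangian descent, which is cleaner and more portable to other ADMM variants. Note, however, that the paper needs $\{\widehat{Y}_{k}\}$ bounded anyway for the suboptimality analysis (Lemma~9 and Theorem~10), so in the full development that effort is not wasted.
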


The detailed proofs of this lemma, the following lemma and theorems can be found in APPENDIX E. Lemma 8 ensures only that the feasibility of each solution has been assessed. In this paper, we want to show that it is possible to prove the local optimality of the solution produced by Algorithm 1. Let $k^{*}$ be the number of iterations needed by Algorithm 1 to stop, and $(U^{*},V^{*},S^{*})$ be defined by
\begin{displaymath}
U^{*}=U_{k^{*}+1},\;V^{*}=V_{k^{*}+1},\;S^{*}=S_{k^{*}+1}.
\end{displaymath}
In addition, let $Y^{*}$ (\textrm{resp.} $\widehat{Y}^{*}$) denote the Lagrange multiplier $Y_{k^{*}+1}$ (\textrm{resp.} $\widehat{Y}_{k^{*}+1}$) associated with $(U^{*},\,V^{*},\,S^{*})$, i.e., $Y^{*}=Y_{k^{*}+1},\;\widehat{Y}^{*}=\widehat{Y}_{k^{*}+1}$, where $\widehat{Y}_{k^{*}+1}=Y_{k^{*}}+\alpha_{k^{*}}(D-U_{k^{*}+1}V^{T}_{k^{*}+1}-S_{k^{*}})$.

\begin{lemma}
For the solution $(U^{*},V^{*},S^{*})$ generated by Algorithm 1, then we have the following conclusion:
\begin{displaymath}
\begin{split}
\|\mathcal{P}_{\Omega}(S)\|_{1}+\lambda\|V\|_{*}\geq\|\mathcal{P}_{\Omega}(S^{*})\|_{1}+\lambda\|V^{*}\|_{*}+\langle \widehat{Y}^{*}-Y^{*},\;UV^{T}-U^{*}(V^{*})^{T}\rangle-mn\varepsilon
\end{split}
\end{displaymath}
holds for any feasible solution $(U,\,V,\,S)$ to (9).
\end{lemma}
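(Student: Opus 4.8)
The plan is to establish the asserted inequality by exploiting the first-order optimality conditions of the subproblems that define $U^*$, $V^*$, and $S^*$, together with the convexity of the objective $\|\mathcal{P}_\Omega(S)\|_1 + \lambda\|V\|_*$ and the near-feasibility at termination. First I would recall that, at the stopping iteration $k^*$, Algorithm 1 produces $(U^*,V^*,S^*)=(U_{k^*+1},V_{k^*+1},S_{k^*+1})$ satisfying the convergence criterion $\|D-U^*(V^*)^T-S^*\|_F<\varepsilon$, so the residual $R^*:=D-U^*(V^*)^T-S^*$ has $\|R^*\|_F<\varepsilon$, hence $|\langle N,R^*\rangle|\le \|N\|_F\,\|R^*\|_F$ is controllably small for bounded $N$; the $mn\varepsilon$ slack on the right-hand side will absorb exactly such cross terms (using that each entry of the relevant matrices is bounded, so the Frobenius norm is at most $\sqrt{mn}$ times an entrywise bound, and two such factors give $mn$).

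Next I would write down the three optimality conditions. For the $S$-update (problem (20)), the subgradient inclusion gives $0\in\partial\big(\|\mathcal{P}_\Omega(S^*)\|_1\big)+\alpha_{k^*}\big(S^*+U^*(V^*)^T-D-Y_{k^*}/\alpha_{k^*}\big)$, i.e. $\widehat{Y}^* - \alpha_{k^*}(S^*-S_{k^*})$-type expressions lie in $\partial\|\mathcal{P}_\Omega(\cdot)\|_1$ at $S^*$ (here I would use the definitions of $Y^*$ and $\widehat{Y}^*$ given just before the lemma to identify the precise multiplier appearing). Convexity of $\|\mathcal{P}_\Omega(\cdot)\|_1$ then yields, for any feasible $S$,
\[
\|\mathcal{P}_\Omega(S)\|_1 \ge \|\mathcal{P}_\Omega(S^*)\|_1 + \langle G_S,\, S-S^*\rangle
\]
with $G_S$ the identified subgradient. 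Similarly, for the $V$-update (problem (16)), Theorem 6's optimality condition $0\in\lambda\partial\|V^*\|_* + \alpha_{k^*}\big(V^*-P_{k^*}^TU^*\big)$ gives a subgradient $G_V\in\partial\|V^*\|_*$ with $\lambda G_V = -\alpha_{k^*}(V^*-P_{k^*}^TU^*)$, whence
\[
\lambda\|V\|_* \ge \lambda\|V^*\|_* + \lambda\langle G_V,\, V-V^*\rangle .
\]
Finally, for the $U$-update I would use Theorem 7 (the equivalence result): the QR step is equivalent to the exact orthogonal-Procrustes minimizer of (14), so $U^*$ satisfies the optimality condition for minimizing $\|U(V^*)^T-P_{k^*}\|_F^2$ over $U^TU=I$, up to the orthogonal ambiguity $O_{k^*}$ which does not affect $U^*(V^*)^T$ or $\|V^*\|_*$.

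Then I would add the three inequalities and simplify. The key algebraic point is that the inner-product terms $\langle G_S, S-S^*\rangle$ and $\lambda\langle G_V, V-V^*\rangle$ recombine, using $\lambda G_V = -\alpha_{k^*}(V^*-P_{k^*}^TU^*)$ and $G_S$ expressed through the multipliers, into the single term $\langle \widehat{Y}^*-Y^*,\; UV^T - U^*(V^*)^T\rangle$ plus residual cross-terms. This is where one needs the relation $Y^* = \widehat{Y}^* + \alpha_{k^*}(S^* - S_{k^*})$ (from the definitions preceding the lemma and the multiplier update in Algorithm 1) and the feasibility of $(U,V,S)$, i.e. $D = UV^T+S$, to rewrite $S-S^*$ in terms of $UV^T-U^*(V^*)^T$ and the small residual $R^*$. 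The leftover terms are bounded in absolute value by a constant times $\|R^*\|_F \le \varepsilon$ times an entrywise bound on the iterates; collecting them yields the $-mn\varepsilon$ term.

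The main obstacle I expect is the bookkeeping in the recombination step: carefully tracking which multiplier ($Y^*$ versus $\widehat{Y}^*$) appears in each subgradient condition — since the $S$-update uses $Y_{k^*}$ while the multiplier is updated afterward — and verifying that the cross-terms that are \emph{not} of the desired form $\langle\widehat{Y}^*-Y^*,\,\cdot\,\rangle$ are genuinely $O(\varepsilon)$ in the right norm so that they fit under the single $mn\varepsilon$ bound. A secondary subtlety is justifying the entrywise boundedness of the iterates needed to turn $\|R^*\|_F<\varepsilon$ into the clean $mn\varepsilon$ slack; for this I would invoke the boundedness of the sequences established in Lemma 8 (the Cauchy property of $U_kV_k^T$ and $S_k$).
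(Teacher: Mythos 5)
Your handling of the sparse part and of the $mn\varepsilon$ slack matches the paper: the optimality condition of the $S$-subproblem gives $\mathcal{P}_{\Omega}(Y^{*})\in\partial\|\mathcal{P}_{\Omega}(S^{*})\|_{1}$, and feasibility $UV^{T}+S=D$ turns the leftover linear term into $\langle \mathcal{P}_{\Omega}(Y^{*}),\,D-U^{*}(V^{*})^{T}-S^{*}\rangle$, which the stopping criterion bounds by $mn\varepsilon$. The gap is in the trace-norm part. The optimality condition of the $V$-subproblem (16) only yields $U^{*T}\widehat{Y}^{*}\in\lambda\,\partial\|(V^{*})^{T}\|_{*}$, so the convexity inequality you are entitled to write is
\begin{displaymath}
\lambda\|V\|_{*}\geq\lambda\|V^{*}\|_{*}+\langle U^{*T}\widehat{Y}^{*},\,V^{T}-(V^{*})^{T}\rangle=\lambda\|V^{*}\|_{*}+\langle \widehat{Y}^{*},\,U^{*}V^{T}-U^{*}(V^{*})^{T}\rangle,
\end{displaymath}
whose linear term involves $U^{*}V^{T}$, not $UV^{T}$. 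Since $(U,V,S)$ is an arbitrary feasible triple, its $U$ bears no relation to $U^{*}$, and the discrepancy $\langle\widehat{Y}^{*},(U-U^{*})V^{T}\rangle$ is not $O(\varepsilon)$; none of the residual bounds you list can absorb it, so the ``recombination'' step as described does not close.

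The missing idea is to lift the subgradient from $V$-space to the product. The paper proves $\widehat{Y}^{*}\in\lambda\,\partial\|U^{*}(V^{*})^{T}\|_{*}$ and then applies the subgradient inequality to the trace norm of the full $m\times n$ matrices, converting back and forth with $\|UV^{T}\|_{*}=\|V\|_{*}$ (Lemma 3, which your sketch never invokes); this directly produces $\langle\widehat{Y}^{*},UV^{T}-U^{*}(V^{*})^{T}\rangle$ for an arbitrary feasible $U$. The lift requires two ingredients absent from your proposal: (i) Lemma 14, which upgrades $U^{*T}\widehat{Y}^{*}\in\lambda\partial\|(V^{*})^{T}\|_{*}$ to $U^{*}U^{*T}\widehat{Y}^{*}\in\lambda\partial\|U^{*}(V^{*})^{T}\|_{*}$; and (ii) the range-space identity $\widehat{Y}^{*}=U^{*}U^{*T}\widehat{Y}^{*}$, which is where the specific structure of the QR update enters: the columns of $U_{k^{*}+1}$ span the same subspace as the left singular vectors $\widehat{U}_{k^{*}}$ of $P_{k^{*}}$, and since $\widehat{Y}^{*}=\alpha_{k^{*}}(P_{k^{*}}-U^{*}(V^{*})^{T})$ with both terms lying in that column space, so does $\widehat{Y}^{*}$. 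Without (ii) the full-matrix subgradient claim fails and the lemma's inner-product term cannot be produced, so this step needs to be added before your outline becomes a proof.
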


To reach the global optimality of (9) based on the above lemma, it is required to show that the term $\langle \widehat{Y}^{*}-Y^{*},\;UV^{T}-U^{*}(V^{*})^{T}\rangle$ diminishes. Since
\begin{displaymath}
\begin{split}
\|Y^{*}-\widehat{Y}^{*}\|_{2}\leq \sqrt{mn}\|Y^{*}-\widehat{Y}^{*}\|_{\infty},
\end{split}
\end{displaymath}
and by Lemma 13 (Please see APPENDIX E), we have
\begin{displaymath}
\|Y^{*}-\widehat{Y}^{*}\|_{\infty}=\|\mathcal{P}_{\Omega}(Y^{*})-\widehat{Y}^{*}\|_{\infty}\leq\|\mathcal{P}_{\Omega}(Y^{*})\|_{\infty}+\|\widehat{Y}^{*}\|_{\infty}\leq 1+\lambda,
\end{displaymath}
which means that $\|Y^{*}-\widehat{Y}^{*}\|_{\infty}$ is bounded. By setting the parameter $\rho$ to be relatively small as in \cite{liu:as}, $\|Y^{*}-\widehat{Y}^{*}\|_{\infty}$ is small, which means that  $\|Y^{*}-\widehat{Y}^{*}\|_{2}$ is also small. Let $\varepsilon_{1}=\|Y^{*}-\widehat{Y}^{*}\|_{2}$, then we have the following theorems.

\begin{theorem}
Let $f^{g}$ be the globally optimal objective function value of (9), and $f^{*}=\|\mathcal{P}_{\Omega}(S^{*})\|_{1}+\lambda\|V^{*}\|_{*}$ be the objective function value generated by Algorithm 1. We have that
\begin{displaymath}
f^{*}\leq f^{g}+c_{1}\varepsilon_{1}+mn\varepsilon,
\end{displaymath}
where $c_{1}$ is a constant defined by
\begin{displaymath}
c_{1}=\frac{mn}{\lambda}\|\mathcal{P}_{\Omega}(D)\|_{F}(\frac{\rho(1+\rho)}{\rho-1}+\frac{1}{2\rho^{k^{*}}})+\frac{\|\mathcal{P}_{\Omega}(D)\|_{1}}{\lambda}.
\end{displaymath}
\end{theorem}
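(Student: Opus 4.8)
The plan is to evaluate the near-optimality inequality of Lemma~9 at a global minimizer of (9) and then convert the resulting cross term into the stated constant by norm duality together with Lemma~3.

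Let $(U^{g},V^{g},S^{g})$ be a minimizer of (9); it is feasible, in particular $(U^{g})^{T}U^{g}=I$, and $\|\mathcal{P}_{\Omega}(S^{g})\|_{1}+\lambda\|V^{g}\|_{*}=f^{g}$. First I would apply Lemma~9 with $(U,V,S)=(U^{g},V^{g},S^{g})$, which gives $f^{g}\geq f^{*}+\langle\widehat{Y}^{*}-Y^{*},\,U^{g}(V^{g})^{T}-U^{*}(V^{*})^{T}\rangle-mn\varepsilon$, i.e. $f^{*}\leq f^{g}+\langle\widehat{Y}^{*}-Y^{*},\,U^{*}(V^{*})^{T}-U^{g}(V^{g})^{T}\rangle+mn\varepsilon$. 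Then I would bound the inner product through the duality between the spectral and the trace norm, $|\langle A,B\rangle|\leq\|A\|_{2}\|B\|_{*}$, taking $A=\widehat{Y}^{*}-Y^{*}$ (so $\|A\|_{2}=\varepsilon_{1}$ by definition) and $B=U^{*}(V^{*})^{T}-U^{g}(V^{g})^{T}$; the triangle inequality together with Lemma~3, valid because both $U^{*}$ and $U^{g}$ have orthonormal columns ($U^{*}$ since step (15) returns the $Q$-factor of a QR decomposition, $U^{g}$ by the constraint in (9)), gives $\|B\|_{*}\leq\|U^{*}(V^{*})^{T}\|_{*}+\|U^{g}(V^{g})^{T}\|_{*}=\|V^{*}\|_{*}+\|V^{g}\|_{*}$. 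Hence $f^{*}\leq f^{g}+\varepsilon_{1}\big(\|V^{*}\|_{*}+\|V^{g}\|_{*}\big)+mn\varepsilon$, and the whole problem reduces to showing $\|V^{*}\|_{*}+\|V^{g}\|_{*}\leq c_{1}$.

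The bound on $\|V^{g}\|_{*}$ is immediate: the triple $(\textup{eye}(m,d),\mathbf{0},D)$ is feasible for (9) with objective value $\|\mathcal{P}_{\Omega}(D)\|_{1}$, so optimality gives $\lambda\|V^{g}\|_{*}\leq f^{g}\leq\|\mathcal{P}_{\Omega}(D)\|_{1}$, that is, $\|V^{g}\|_{*}\leq\|\mathcal{P}_{\Omega}(D)\|_{1}/\lambda$, which is the second summand of $c_{1}$. For $\|V^{*}\|_{*}$ I would start from the closed form of Theorem~6, $V^{*}=\textup{SVT}_{\lambda/\alpha_{k^{*}}}(P_{k^{*}}^{T}U_{k^{*}+1})$ with $P_{k^{*}}=D-S_{k^{*}}+Y_{k^{*}}/\alpha_{k^{*}}$; since $\textup{SVT}$ only shrinks singular values and $U_{k^{*}+1}$ has orthonormal columns, $\|V^{*}\|_{*}\leq\|P_{k^{*}}^{T}U_{k^{*}+1}\|_{*}\leq\|P_{k^{*}}\|_{*}\leq\|D\|_{*}+\|S_{k^{*}}\|_{*}+\|Y_{k^{*}}\|_{*}/\alpha_{k^{*}}$. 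At this point I would invoke the a~priori estimates of the convergence analysis: $D_{\Omega^{C}}=\mathbf{0}$, so $\|D\|_{*}$ is controlled by $\|\mathcal{P}_{\Omega}(D)\|_{F}$ (up to the $\sqrt{mn}$ factors relating trace, Frobenius and entrywise norms, which is the source of the $mn$ in front); Lemma~13 gives $\mathcal{P}_{\Omega^{C}}(Y_{k})=\mathbf{0}$ together with a uniform entrywise bound on $\mathcal{P}_{\Omega}(Y_{k})$, so $\|Y_{k^{*}}\|_{*}/\alpha_{k^{*}}$ produces the term proportional to $\tfrac{1}{2\rho^{k^{*}}}\|\mathcal{P}_{\Omega}(D)\|_{F}$ once the explicit schedule $\alpha_{k}=\rho^{k}\alpha_{0}$, $\alpha_{0}=1/\|\mathcal{P}_{\Omega}(D)\|_{F}$, is substituted; and $\|S_{k^{*}}\|_{*}$ is estimated by unrolling the multiplier recursion $Y_{k}=\sum_{j<k}\alpha_{j}(D-U_{j+1}V_{j+1}^{T}-S_{j+1})$ and using the boundedness of $\{U_{k}V_{k}^{T}\}$ and $\{S_{k}\}$ from Lemma~8, which is where the geometric-series factor $\tfrac{\rho(1+\rho)}{\rho-1}$ appears. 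Collecting these into $\|V^{*}\|_{*}$, adding $\|V^{g}\|_{*}$, and folding $O(\varepsilon)$ remainders into $mn\varepsilon$ yields $f^{*}\leq f^{g}+c_{1}\varepsilon_{1}+mn\varepsilon$.

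The routine part is the reduction in the second paragraph; the main obstacle is making the estimate of $\|V^{*}\|_{*}$ \emph{explicit}. Turning the iterate-dependent quantity $\|P_{k^{*}}\|_{*}$ into a clean bound in $m$, $n$, $\rho$, $k^{*}$ and $\|\mathcal{P}_{\Omega}(D)\|_{F}$ requires carefully combining the uniform feasibility and Cauchy estimates of Lemma~8, the entrywise multiplier bounds of Lemma~13, and the geometric growth of $\alpha_{k}$; this bookkeeping is exactly where the coefficients $\tfrac{\rho(1+\rho)}{\rho-1}$ and $\tfrac{1}{2\rho^{k^{*}}}$ come from, and it is the only delicate step --- everything else, including the appearance of $\|\mathcal{P}_{\Omega}(D)\|_{1}/\lambda$, is essentially immediate.
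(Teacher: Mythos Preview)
Your reduction is exactly the paper's: apply Lemma~9 at a global minimizer $(U^{g},V^{g},S^{g})$, pass to $|\langle \widehat{Y}^{*}-Y^{*},\,U^{g}(V^{g})^{T}-U^{*}(V^{*})^{T}\rangle|\leq\|\widehat{Y}^{*}-Y^{*}\|_{2}\,\|U^{g}(V^{g})^{T}-U^{*}(V^{*})^{T}\|_{*}$ by trace/spectral duality, and then bound $\|U^{g}(V^{g})^{T}-U^{*}(V^{*})^{T}\|_{*}\leq\|V^{g}\|_{*}+\|V^{*}\|_{*}$ via Lemma~3. The bound $\lambda\|V^{g}\|_{*}\leq\|\mathcal{P}_{\Omega}(D)\|_{1}$ from the trivial feasible point $(U,\mathbf{0},D)$ is also the paper's.

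The one substantive divergence is how you bound $\|V^{*}\|_{*}$. The paper does \emph{not} go through the SVT contraction and a term-by-term estimate of $\|P_{k^{*}}\|_{*}$. Instead it reuses the Lagrangian-decrease bookkeeping already carried out in the proof of Lemma~13: from
\[
\lambda\|V^{*}\|_{*}\;\leq\;\|\mathcal{P}_{\Omega}(S^{*})\|_{1}+\lambda\|V^{*}\|_{*}
\;\leq\;\mathcal{L}_{\alpha_{k^{*}}}(U_{k^{*}+1},V_{k^{*}+1},S_{k^{*}+1},Y_{k^{*}})+\frac{\|Y_{k^{*}}\|_{F}^{2}}{2\alpha_{k^{*}}},
\]
together with the telescoped inequality $\mathcal{L}_{\alpha_{k}}\leq\mathcal{L}_{\alpha_{k-1}}+\beta_{k}\|Y_{k}-Y_{k-1}\|_{F}^{2}$, the uniform bound $\|Y_{k}\|_{F}^{2}\leq mn\|Y_{k}\|_{\infty}^{2}\leq mn$, and $\alpha_{k}=\rho^{k}\alpha_{0}$ with $\alpha_{0}=1/\|\mathcal{P}_{\Omega}(D)\|_{F}$, one reads off directly
\[
\lambda\|V^{*}\|_{*}\;\leq\;mn\,\|\mathcal{P}_{\Omega}(D)\|_{F}\Bigl(\tfrac{\rho(1+\rho)}{\rho-1}+\tfrac{1}{2\rho^{k^{*}}}\Bigr),
\]
which is precisely the first summand of $c_{1}$. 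This is a one-line consequence of estimates already in hand; your route via $\|P_{k^{*}}\|_{*}\leq\|D\|_{*}+\|S_{k^{*}}\|_{*}+\|Y_{k^{*}}\|_{*}/\alpha_{k^{*}}$ is valid in spirit but requires a separate bound on $\|S_{k^{*}}\|_{*}$, which ultimately circles back to the same Lagrangian decrease, and it is not evident that your bookkeeping would land on the exact coefficients $\tfrac{\rho(1+\rho)}{\rho-1}$ and $\tfrac{1}{2\rho^{k^{*}}}$ rather than some looser ones. If you want the stated $c_{1}$, switch to the Lagrangian-descent estimate for $\|V^{*}\|_{*}$; everything else in your outline is correct and matches the paper.
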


\begin{theorem}
Suppose $(L^{0},\,S^{0})$ is an optimal solution to the RMC problem (5), $\textrm{rank}(L^{0})=r$ and $f^{0}=\|\mathcal{P}_{\Omega}(S^{0})\|_{1}+\lambda\|L^{0}\|_{*}$. Let $f^{*}=\|\mathcal{P}_{\Omega}(S^{*})\|_{1}+\lambda\|U^{*}V^{*}\|_{*}$ be the objective function value generated by Algorithm 1 with parameter $d>0$. We have that
\begin{displaymath}
f^{0}\leq f^{*}\leq f^{0}+c_{1}\varepsilon_{1}+mn\varepsilon+(\sqrt{mn}-\lambda)\sigma_{d+1}\max(r-d,\,0),
\end{displaymath}
where $\sigma_{1}\geq\sigma_{2}\geq\ldots$ are the singular values of $L^{0}$.
\end{theorem}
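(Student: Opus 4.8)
The plan is to sandwich $f^{*}$ between $f^{0}$ and the stated upper bound, treating the two directions separately. The lower bound $f^{0}\le f^{*}$ is a feasibility statement: the triple $(U^{*}V^{*T},S^{*})$ returned by Algorithm~1, once $S^{*}_{\Omega^{C}}$ is reset to $\textbf{0}$ as the algorithm prescribes, satisfies $\mathcal P_{\Omega}(D)=\mathcal P_{\Omega}(U^{*}V^{*T}+S^{*})$ and $\mathcal P_{\Omega^{C}}(S^{*})=\textbf{0}$ — exactly at the feasible limit point guaranteed by Lemma~8, and to within an $O(\sqrt{mn}\,\varepsilon)$ residual at finite termination — hence it is a feasible point of~(5); since its objective equals $\|\mathcal P_{\Omega}(S^{*})\|_{1}+\lambda\|U^{*}V^{*T}\|_{*}=f^{*}$ and $f^{0}$ is by hypothesis the optimal value of~(5), we get $f^{0}\le f^{*}$. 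For the upper bound I would begin from the suboptimality estimate of the preceding theorem, $f^{*}\le f^{g}+c_{1}\varepsilon_{1}+mn\varepsilon$ where $f^{g}$ is the global minimum of~(9), so that everything reduces to the purely variational claim $f^{g}\le f^{0}+(\sqrt{mn}-\lambda)\,\sigma_{d+1}\max(r-d,0)$.

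To establish that claim I would exhibit an explicit feasible point of~(9) achieving this value. Let $L^{0}=\sum_{i=1}^{r}\sigma_{i}u_{i}v_{i}^{T}$ be the SVD of the optimal low-rank factor and $L^{0}_{d}=\sum_{i=1}^{\min(r,d)}\sigma_{i}u_{i}v_{i}^{T}$ its rank-$d$ truncation. Write $L^{0}_{d}=U_{0}V_{0}^{T}$ with $U_{0}\in\mathbb R^{m\times d}$, $U_{0}^{T}U_{0}=I$ (completing $\{u_{1},\dots,u_{\min(r,d)}\}$ to $d$ orthonormal columns when $r<d$, with the matching columns of $V_{0}$ set to $\textbf{0}$) and $V_{0}$ carrying the singular values, so that $\|V_{0}\|_{*}=\|L^{0}_{d}\|_{*}$ since $U_{0}$ has orthonormal columns. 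Put $\widetilde S=D-U_{0}V_{0}^{T}$; then $(U_{0},V_{0},\widetilde S)$ is feasible for~(9), whence $f^{g}\le\|\mathcal P_{\Omega}(\widetilde S)\|_{1}+\lambda\|L^{0}_{d}\|_{*}$.

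It remains to estimate this expression. On $\Omega$ one has $\widetilde S=S^{0}+(L^{0}-L^{0}_{d})$, using $\mathcal P_{\Omega}(D)=\mathcal P_{\Omega}(L^{0}+S^{0})$, so $\|\mathcal P_{\Omega}(\widetilde S)\|_{1}\le\|\mathcal P_{\Omega}(S^{0})\|_{1}+\|L^{0}-L^{0}_{d}\|_{1}$. The key observation is that for a rank-one term $\|\sigma_{i}u_{i}v_{i}^{T}\|_{1}=\sigma_{i}\|u_{i}\|_{1}\|v_{i}\|_{1}\le\sigma_{i}\sqrt{m}\sqrt{n}$ by Cauchy--Schwarz, so by the triangle inequality $\|L^{0}-L^{0}_{d}\|_{1}\le\sqrt{mn}\sum_{i>d}\sigma_{i}$. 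Combining this with $\lambda\|L^{0}_{d}\|_{*}=\lambda\|L^{0}\|_{*}-\lambda\sum_{i>d}\sigma_{i}$ and $f^{0}=\|\mathcal P_{\Omega}(S^{0})\|_{1}+\lambda\|L^{0}\|_{*}$ gives $f^{g}\le f^{0}+(\sqrt{mn}-\lambda)\sum_{i>d}\sigma_{i}$; finally $\sum_{i>d}\sigma_{i}\le\sigma_{d+1}\max(r-d,0)$ because each remaining singular value is at most $\sigma_{d+1}$ and there are $\max(r-d,0)$ of them, and $\sqrt{mn}-\lambda\ge0$ for $\lambda=\sqrt m$, which yields the claim. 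Chaining with the preceding theorem then gives the full statement.

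I expect the real subtleties to be modeling ones rather than the arithmetic. First, $f^{0}\le f^{*}$ is only literally exact at the feasible limit point; at finite $\varepsilon$ one must either invoke Lemma~8 to pass to that limit or carry an extra $O(\sqrt{mn}\,\varepsilon)$ term, and the write-up should state which convention is in force. Second, the direction of the last step needs $\sqrt{mn}-\lambda\ge0$, so the bound as written is tied to the regime $\lambda\le\sqrt{mn}$ (in particular $\lambda=\sqrt m$); outside it the inequality $\sum_{i>d}\sigma_{i}\le\sigma_{d+1}\max(r-d,0)$ would point the wrong way and one would have to keep $\sum_{i>d}\sigma_{i}$ itself or recalibrate the tail estimate. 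Choosing the rank-$d$ SVD truncation of $L^{0}$ as the test point, together with the bound $\|\cdot\|_{1}\le\sqrt{mn}\,\|\cdot\|_{*}$ applied term-by-term to the rank-one pieces, is essentially the only non-routine idea in the argument.
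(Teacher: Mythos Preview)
Your proposal is correct and follows essentially the same route as the paper: you prove $f^{0}\le f^{*}$ by feasibility of the output for~(5), then invoke Theorem~10 and build the same rank-$d$ SVD truncation test point $(U_{0},V_{0},D-U_{0}V_{0}^{T})$ to bound $f^{g}$. The only cosmetic difference is that the paper bounds $\|L^{0}-L^{0}_{d}\|_{1}$ via the chain $\|\cdot\|_{1}\le\sqrt{mn}\,\|\cdot\|_{F}\le\sqrt{mn}\,\|\cdot\|_{*}$ applied directly to the tail $U_{1}\Sigma_{1}V_{1}^{T}$, whereas you sum the rank-one estimates $\|\sigma_{i}u_{i}v_{i}^{T}\|_{1}\le\sigma_{i}\sqrt{mn}$; both land on $\sqrt{mn}\sum_{i>d}\sigma_{i}$, and your observations about the sign of $\sqrt{mn}-\lambda$ and the finite-$\varepsilon$ feasibility caveat are refinements the paper leaves implicit.
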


Since the rank parameter $d$ is set to be higher than the rank of the optimal solution to the RMC problem (5), i.e., $d\geq r$, Theorem 11 directly concludes that
\begin{displaymath}
f^{0}\leq f^{*}\leq f^{0}+c_{1}\varepsilon_{1}+mn\varepsilon.
\end{displaymath}
Moreover, the value of $\varepsilon$ can be set to be arbitrarily small, and the second term involving $\varepsilon_{1}$ diminishes. Hence, for the solution $(U^{*},V^{*},S^{*})$ generated by Algorithm 1, a solution to the RMC problem (5) can be achieved by computing $L^{*}=U^{*}(V^{*})^{T}$.

\subsection{Complexity Analysis}
We also discuss the time complexity of our RBF algorithm. For the RMC problem (7), the main running time of our RBF algorithm is consumed by performing SVD on the small matrix of size $n\times d$, the QR decomposition of the matrix $P_{k}V_{k}$, and some matrix multiplications. In (17), the time complexity of performing SVD is $O(d^{2}n)$. The time complexity of QR decomposition and matrix multiplications is $O(d^{2}m+mnd)$. The total time complexity of our RBF algorithm for solving both problems (3) and (7) is $O(t(d^{2}n+d^{2}m+mnd))$ (usually $d\ll n\leq m$), where $t$ is the number of iterations.

\subsection{Applications of Matrix Completion}
As our RBF framework introduced for robust matrix factorization is general, there are many possible extensions of our methodology. In this section, we outline a novel result and methodology for one extension we consider most important: low-rank matrix completion. The space limit refrains us from fully describing each development, but we try to give readers enough details to understand and use each of these applications.

By introducing an auxiliary variable $L$, the low-rank matrix completion problem can be written into the following form,
\begin{equation}
\begin{split}
&\min_{U,\,V,\,L}\,\frac{1}{2}\|\mathcal{P}_{\Omega}(D)-\mathcal{P}_{\Omega}(L)\|^{2}_{F}+\lambda\|V\|_{*},\\
&\;\textup{s.t.},\,L=UV^{T},\,U^{T}U=I.
\end{split}
\end{equation}
Similar to Algorithm 1, we can present an efficient ADMM scheme to solve the matrix completion problem (28). This algorithm can also be easily used to solve the low-rank matrix factorization problem, where all entries of the given matrix are observed.

\section{Experimental Evaluation}
We now evaluate the effectiveness and efficiency of our RBF method for RMC and CPCP problems such as text removal, background modeling, face reconstruction, and collaborative filtering. We ran experiments on an Intel(R) Core (TM) i5-4570 (3.20 GHz) PC running Windows 7 with 8GB main memory.

\subsection{Text Removal}
We first conduct an experiment by considering a simulated task on artificially generated data, whose goal is to remove some generated text from an image. The ground-truth image is of size $256\times222$ with rank equal to 10 for the data matrix. we then add to the image a short phase in text form which plays the role of outliers. Fig.\ 2 shows the image together with the clean image and outliers mask. For fairness, we set the rank of all the algorithms to 20, which is two times the true rank of the underlying matrix. The input data are generated by setting 30\% of the randomly selected pixels of the image as missing entries. We compare our RBF method with the state-of-the-art methods, including PCP \cite{candes:rpca}, SpaRCS\footnote{\url{http://www.ece.rice.edu/~aew2/sparcs.html}} \cite{waters:rcs}, RegL1\footnote{\url{https://sites.google.com/site/yinqiangzheng/}} \cite{zheng:rma} and BF-ALM \cite{cabral:nnbf}. We set the regularization parameter $\lambda=\sqrt{\max(m,n)}$ for RegL1 and RBF, and the stopping tolerance $\varepsilon=10^{-4}$ for all algorithms in this section.

\begin{figure}[t]
\centering
\subfigure{\includegraphics[width=0.215\linewidth]{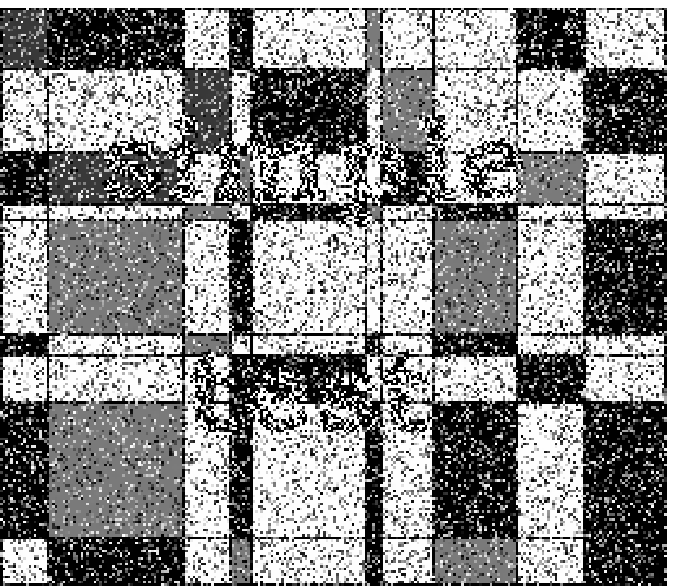}}\;\;
\subfigure{\includegraphics[width=0.215\linewidth]{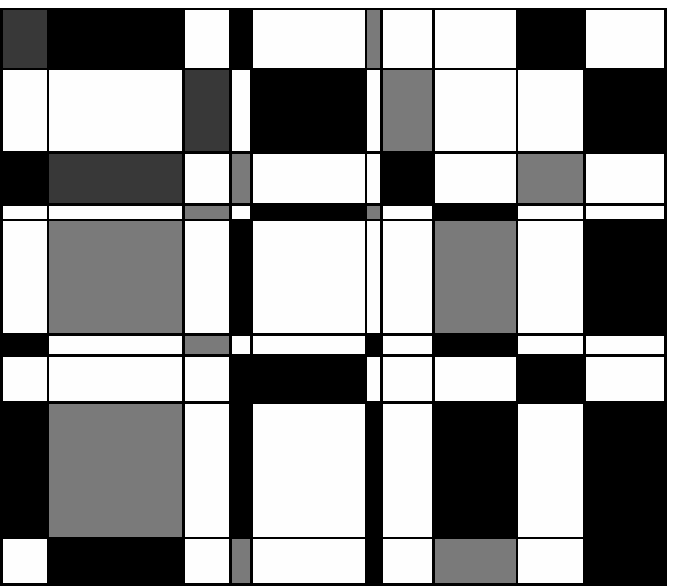}}\;\;
\subfigure{\includegraphics[width=0.215\linewidth]{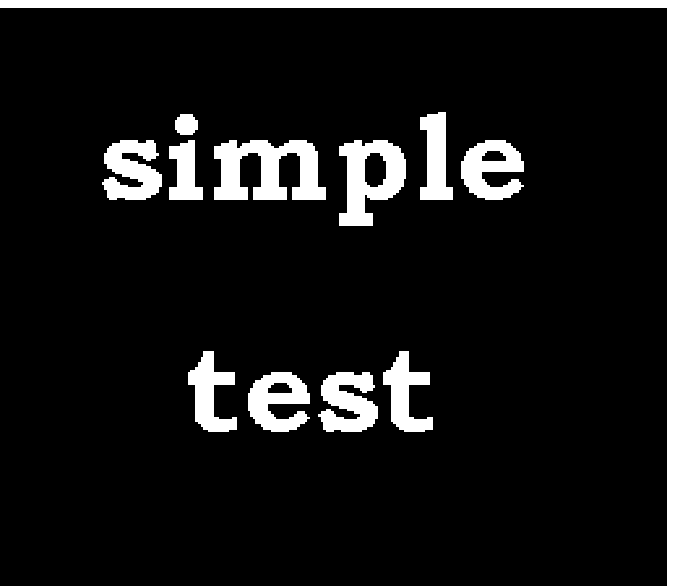}}\\
(a)\;\;\,\;\;\quad\quad\quad\quad\quad\quad(b)\;\;\,\;\;\quad\quad\quad\quad\quad\quad(c)
\caption{Image used in text removal experiment: (a) Input image; (b) Original image; (c) Outlier mask.}
\label{fig_sim}
\end{figure}

The results obtained by different methods are visually shown in Fig.\ 3, where the outlier detection accuracy (the score Area Under the receiver operating characteristic Curve, AUC) and the error of low-rank component recovery (i.e., $\|D-L\|_{F}/\|D\|_{F}$, where $D$ and $L$ denote the ground-truth image matrix and the recovered image matrix, respectively) are also presented. As far as low-rank matrix recovery is concerned, these RMC methods including SpaRCS, RegL1, BF-ALM and RBF, outperform PCP, not only visually but also quantitatively. For outlier detection, it can be seen that our RBF method significantly performs better than the other methods. In short, RBF significantly outperforms PCP, RegL1, BF-ALM and SpaRCS in terms of both low-rank matrix recovery and spare outlier identification. Moreover, the running time of PCP, SpaRCS, RegL1, BF-ALM and RBF is 36.25sec, 15.68sec, 26.85sec, 6.36sec and 0.87sec, respectively.

\begin{figure}[t]
\centering
\subfigure{\includegraphics[width=0.181\linewidth]{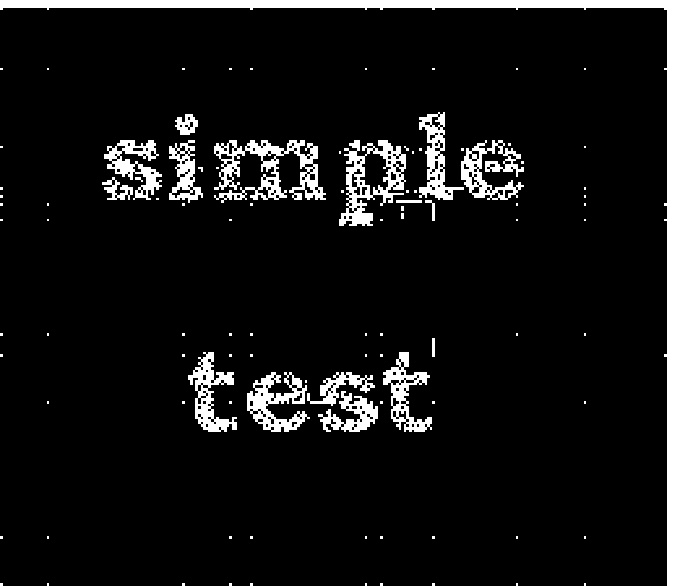}}
\subfigure{\includegraphics[width=0.181\linewidth]{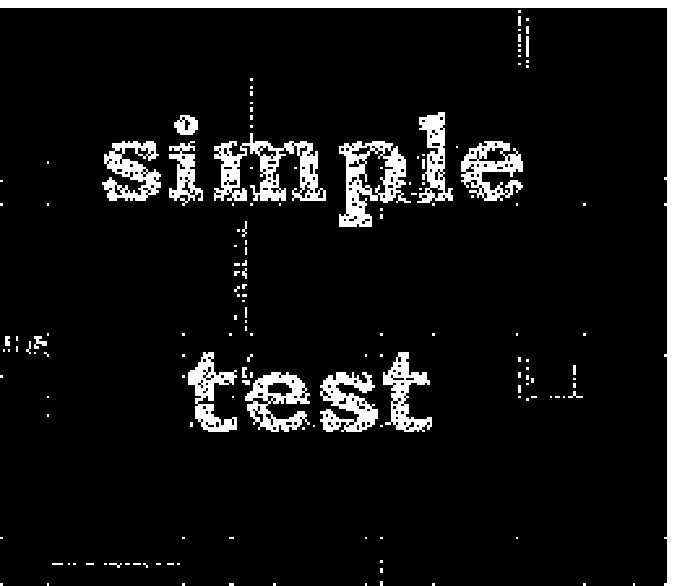}}
\subfigure{\includegraphics[width=0.181\linewidth]{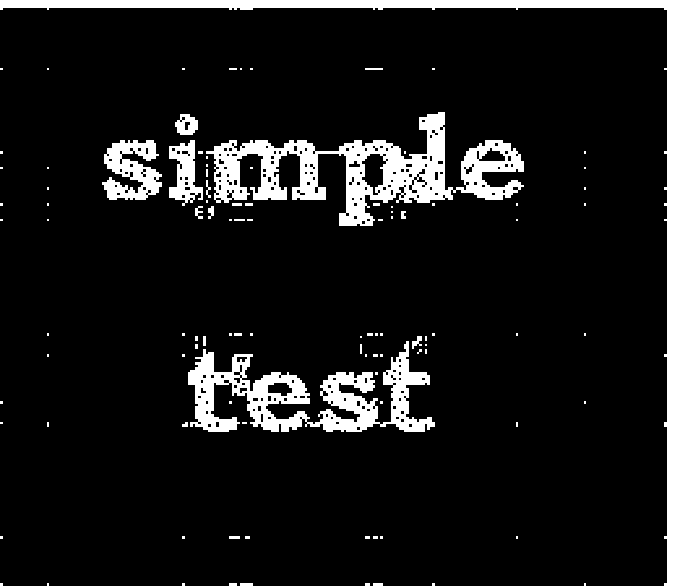}}
\subfigure{\includegraphics[width=0.181\linewidth]{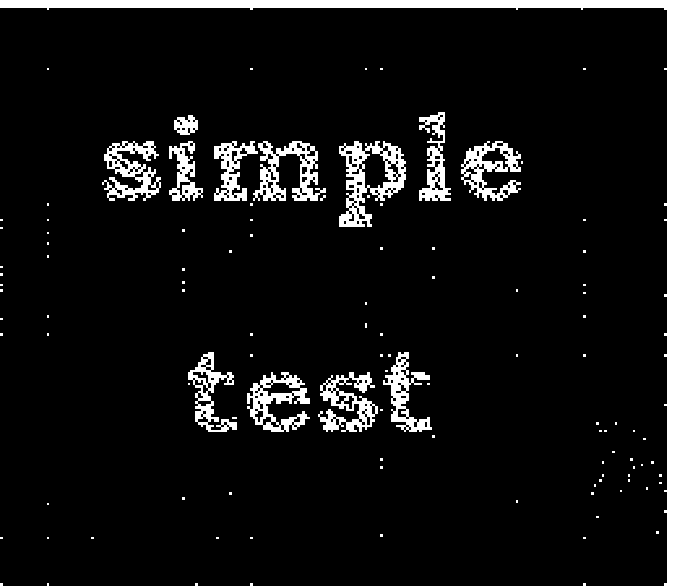}}
\subfigure{\includegraphics[width=0.181\linewidth]{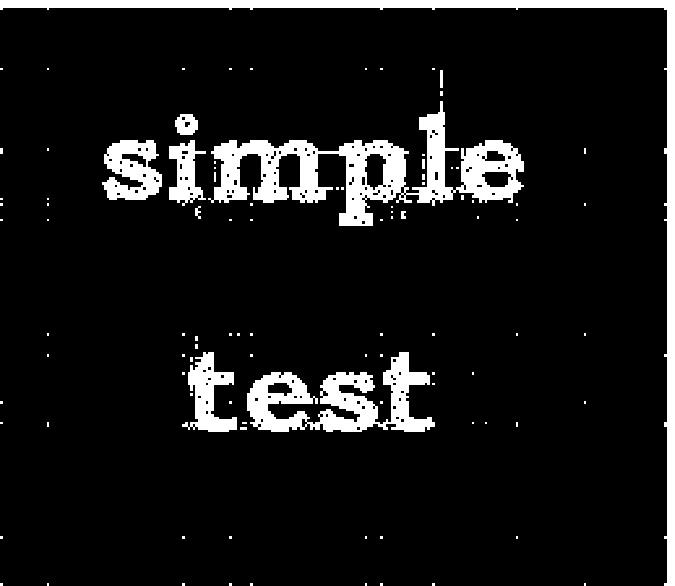}}
\subfigure{\includegraphics[width=0.181\linewidth]{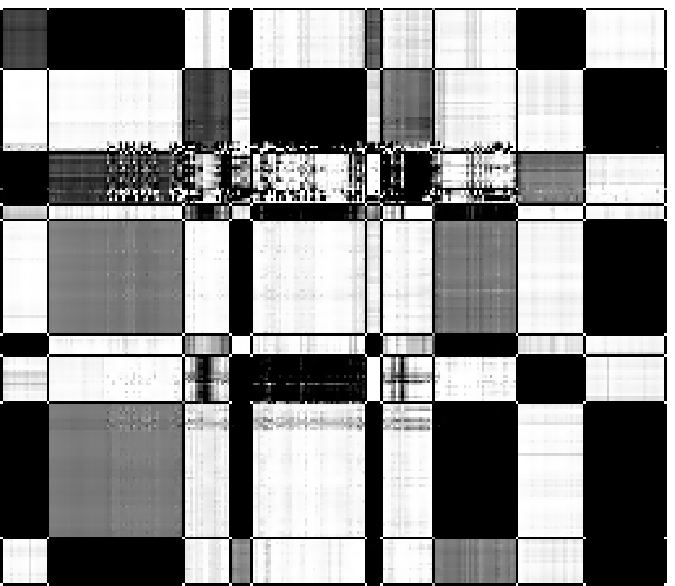}}
\subfigure{\includegraphics[width=0.181\linewidth]{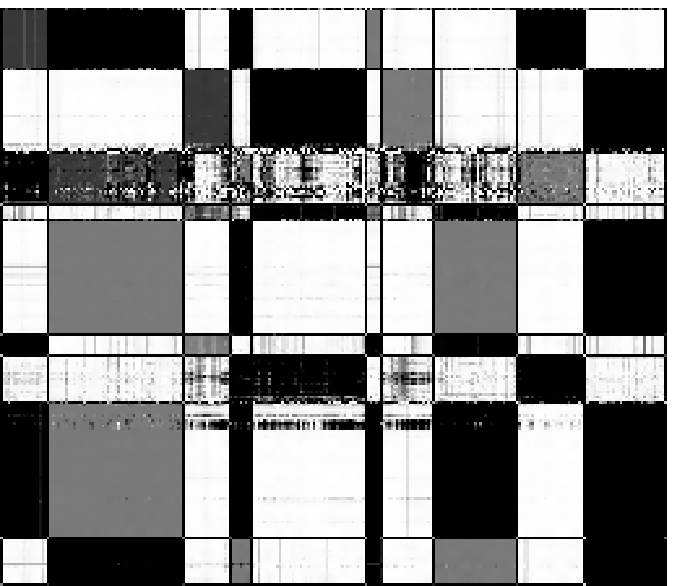}}
\subfigure{\includegraphics[width=0.181\linewidth]{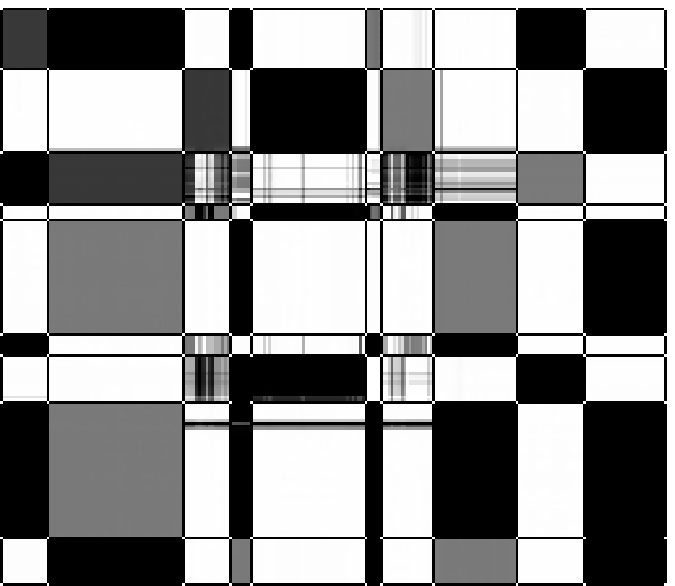}}
\subfigure{\includegraphics[width=0.181\linewidth]{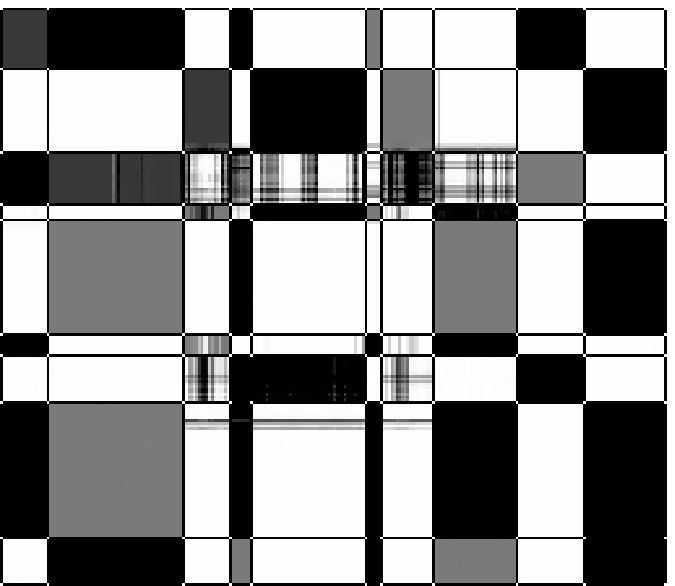}}
\subfigure{\includegraphics[width=0.181\linewidth]{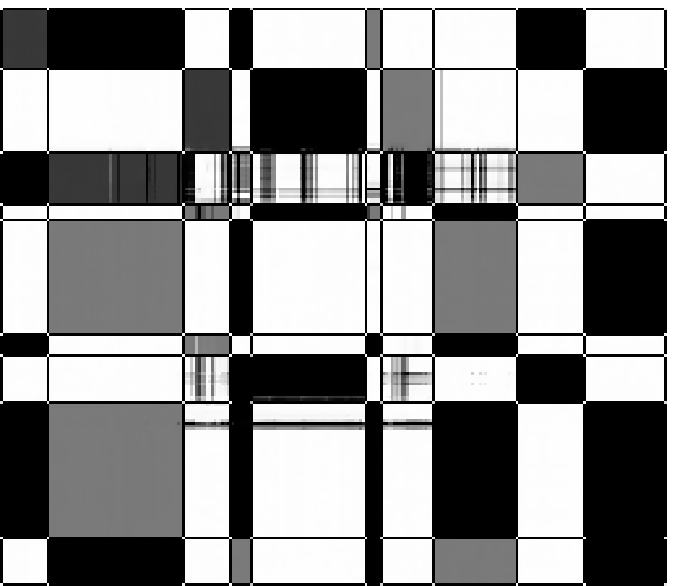}}\\
(a)\,\qquad\quad\quad\quad\quad(b)\,\qquad\quad\quad\quad\quad(c)\,\qquad\quad\quad\quad\quad(d)\,\qquad\quad\quad\quad\quad(e)
\caption{Text removal results. The first row shows the foreground masks and the second row shows the recovered background images: (a) PCP (AUC: 0.8558; Error: 0.2516); (b) SpaRCS (AUC: 0.8665; Error: 0.2416); (c) RegL1 (AUC: 0.8792; Error: 0.2291); (d) BF-ALM (AUC: 0.8568; Error: 0.2435); (e) RBF (AUC: 0.9227; Error: 0.1844).}
\label{fig_sim3}
\end{figure}

We further evaluate the robustness of our RBF method with respect to the regularization parameter $\lambda$ and the given rank variations. We conduct some experiments on the artificially generated data, and illustrate the outlier detection accuracy (AUC) and the error (Error) of low-rank component recovery of PCP, SpaRCS, RegL1 and our RBF method, where the given rank of SpaRCS, RegL1 and our RBF method is chosen from $\{20,25,\cdots,60\}$, and the regularization parameter $\lambda$ of PCP, RegL1 and RBF is chosen from the grid $\{1,2.5,5,7.5,10,25,50,75,100\}$. Notice that because BF-ALM and RegL1 achieve very similar results, we do not provide the results of the former in the following. The average AUC and Error results of 10 independent runs are shown in Figs.\ 4 and 5, from which we can see that our RBF method performs much more robust than SpaRCS and RegL1 with respect to the given rank. Moreover, our RBF method is much more robust than PCP and RegL1 against the regularization parameter $\lambda$.

\begin{figure}[t]
\centering
\includegraphics[width=0.46\linewidth]{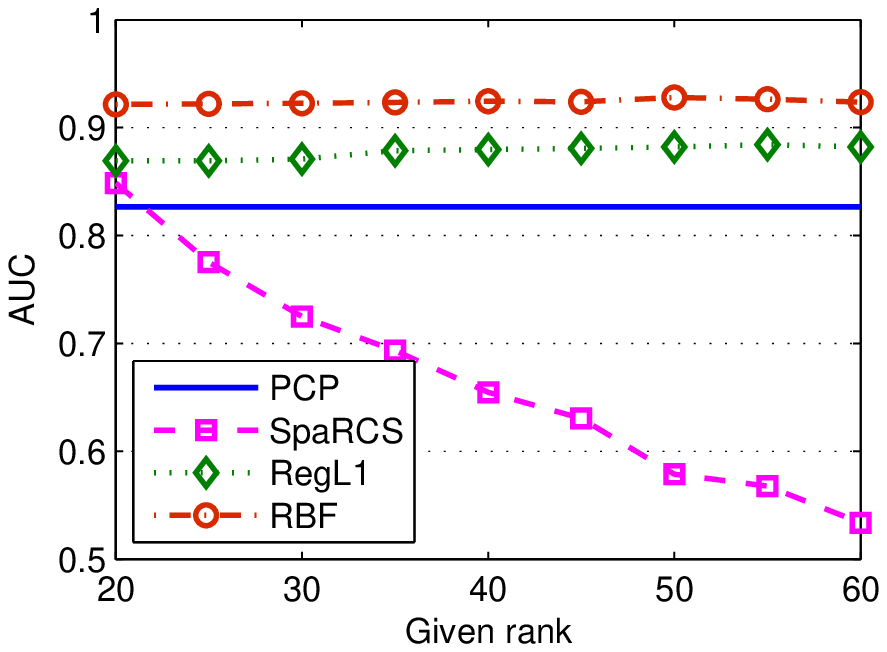}\label{fig_first_case}
\includegraphics[width=0.46\linewidth]{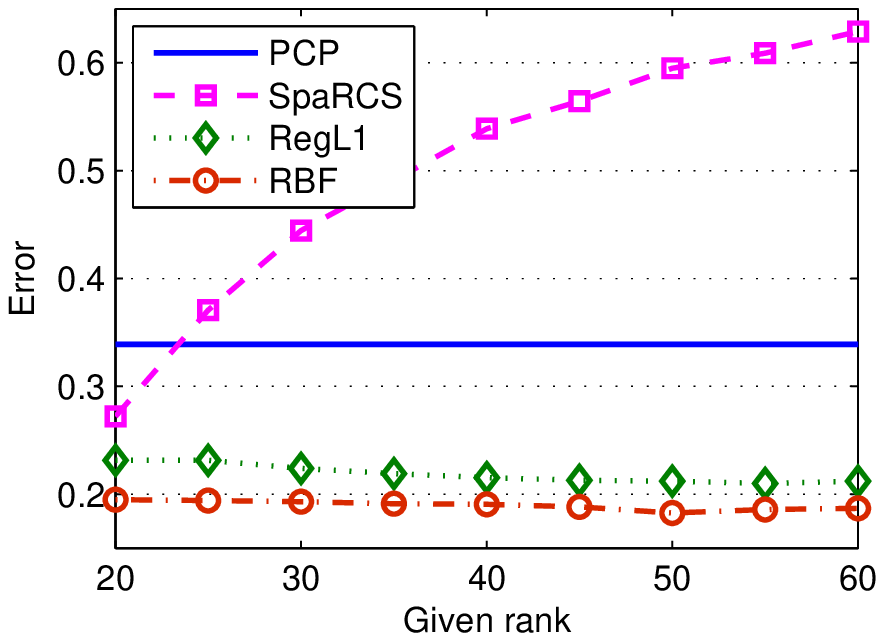}\label{fig_second_case}
\centering
\caption{Comparison of PCP, SpaRCS, RegL1 and our RBF method in terms of AUC (Left) and Error (Right) on the artificially generated data with varying ranks.}
\label{fig_sim}
\end{figure}

\begin{figure}[t]
\centering
\includegraphics[width=0.46\linewidth]{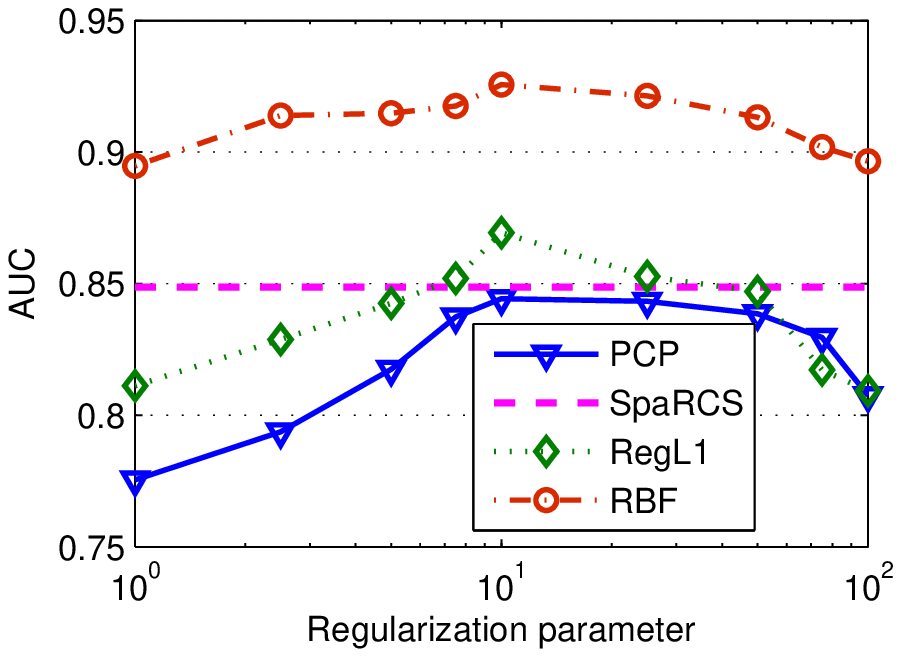}\label{fig_first_case}
\includegraphics[width=0.46\linewidth]{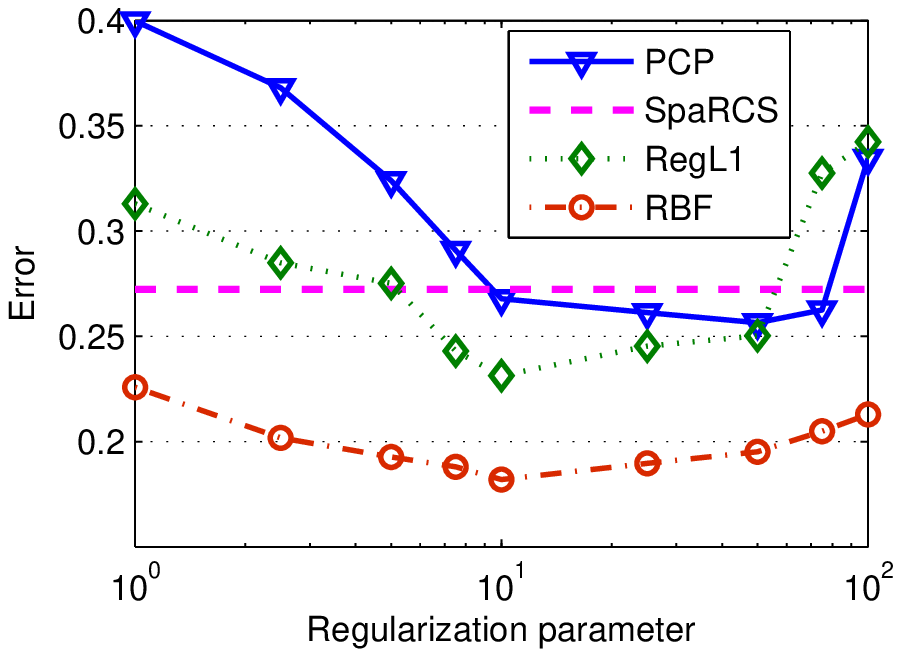}\label{fig_second_case}
\centering
\caption{Comparison of PCP, SpaRCS, RegL1 and our RBF method in terms of AUC (Left) and Error (Right) on the artificially generated data with varying regularization parameters.}
\label{fig_sim}
\end{figure}

\subsection{Background Modeling}
In this experiment we test our RBF method on real surveillance videos for object detection and background subtraction as a RPCA plus matrix completion problem. Background modeling is a crucial task for motion segmentation in surveillance videos. A video sequence satisfies the low-rank and sparse structures, because the background of all the frames is controlled by few factors and hence exhibits low-rank property, and the foreground is detected by identifying spatially localized sparse residuals \cite{wright:rpca, candes:rpca, wang:wvs}. We test our RBF method on four color surveillance videos: Bootstrap, Lobby, Hall and Mall databases\footnote{\url{http://perception.i2r.a-star.edu.sg/bkmodel/bkindex}}. The data matrix \emph{D} consists of the first 400 frames of size $144\times176$. Since all the original videos have colors, we first reshape every frame of the video into a long column vector and then collect all the columns into a data matrix \emph{D} with size of $76032\times400$. Moreover, the input data is generated by setting 10\% of the randomly selected pixels of each frame as missing entries.

Fig.\ 6 illustrates the background extraction results on the Bootstrap data set, where the first and fourth columns represent the input images with missing data, the second and fifth columns show the low-rank recoveries, and the third and sixth columns show the sparse components. It is clear that the background can be effectively extracted by our RBF method, RegL1 and GRASTA\footnote{\url{https://sites.google.com/site/hejunzz/grasta}} \cite{he:online}. Notice that SpaRCS could not yield experimental results on these databases because they ran out of memory. Moreover, we can see that the decomposition results of our RBF method,  especially the recovered low-rank components, are slightly better than that of RegL1 and GRASTA. We also report the running time in Table 1, from which we can see that RBF is more than 3 times faster than GRASTA and more than 2 times faster than RegL1. This further shows that our RBF method has very good scalability and can address large-scale problems.

\begin{table}[!ht]
\caption{Comparison of time costs in CPU seconds of GRASTA, RegL1 and RBF on background modeling data sets.}
\centering
\begin{tabular} {l|c|ccc}
\hline
\ Datasets           & Sizes        & GRASTA  & RegL1   & RBF\\
\hline
\ Bootstrap     &$57,600\times400$   &153.65   &93.17  &38.32\\
\ Lobby         &$61,440\times400$   &187.43   &139.83 &50.08\\
\ Hall	        &$76,032\times400$   &315.11   &153.45 &67.73\\
\ Mall	        &$245,760\times200$  &493.92   &\;----\; &94.59\\
\hline
\end{tabular}
\end{table}

\begin{figure}[!ht]
\centering
\subfigure{\includegraphics[width=0.158\linewidth]{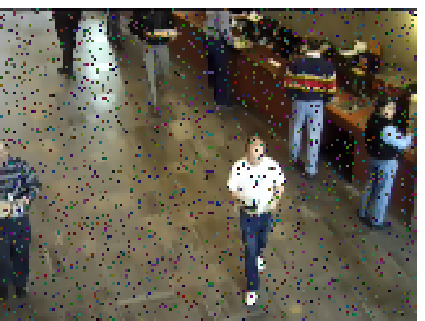}}
\subfigure{\includegraphics[width=0.158\linewidth]{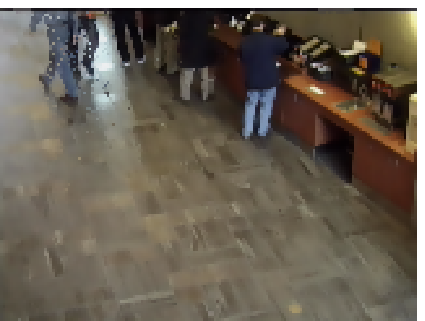}}
\subfigure{\includegraphics[width=0.158\linewidth]{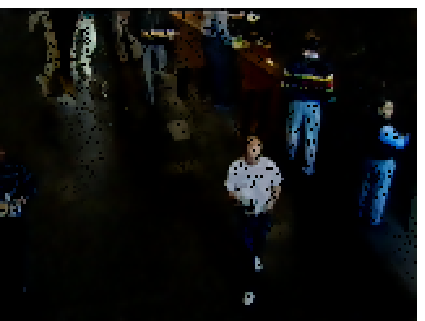}}\;\;
\subfigure{\includegraphics[width=0.158\linewidth]{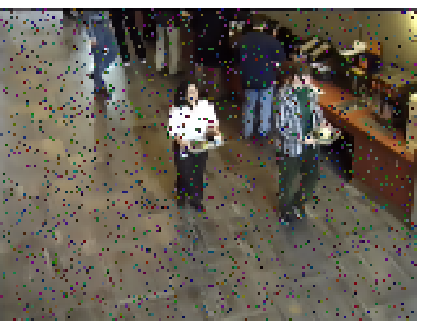}}
\subfigure{\includegraphics[width=0.158\linewidth]{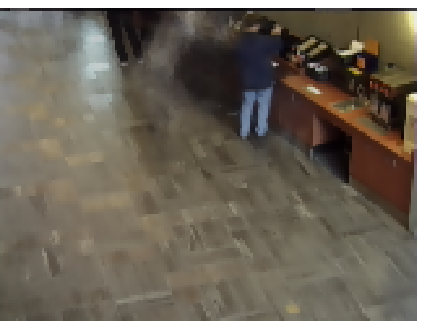}}
\subfigure{\includegraphics[width=0.158\linewidth]{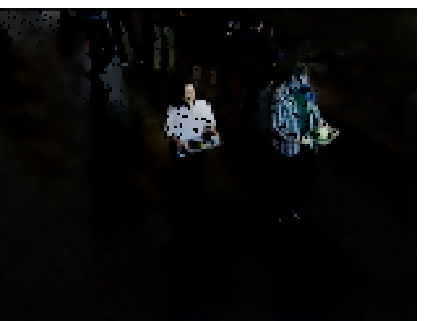}}
\subfigure{\includegraphics[width=0.158\linewidth]{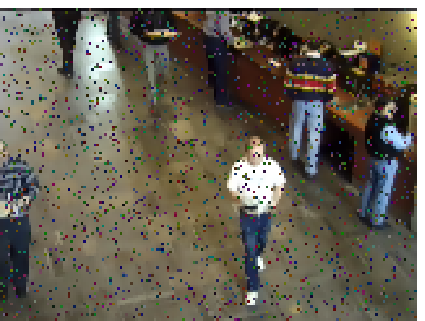}}
\subfigure{\includegraphics[width=0.158\linewidth]{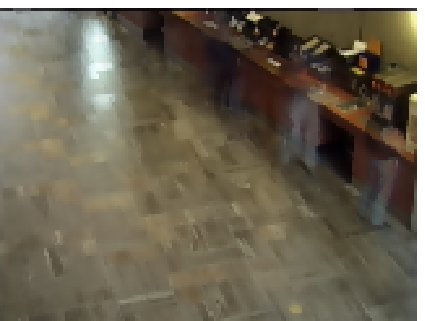}}
\subfigure{\includegraphics[width=0.158\linewidth]{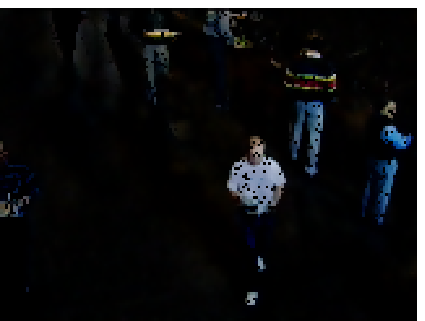}}\;\;
\subfigure{\includegraphics[width=0.158\linewidth]{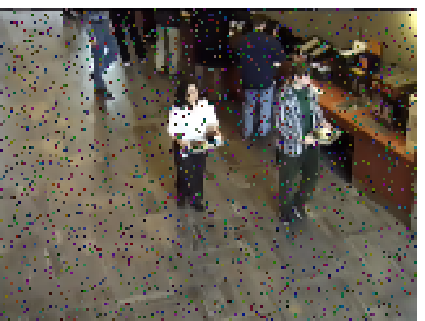}}
\subfigure{\includegraphics[width=0.158\linewidth]{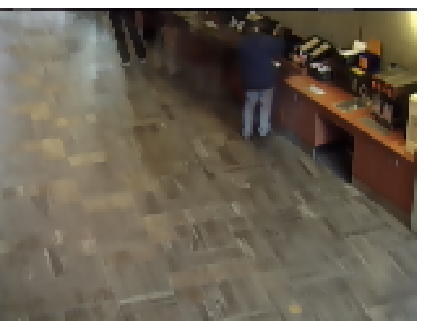}}
\subfigure{\includegraphics[width=0.158\linewidth]{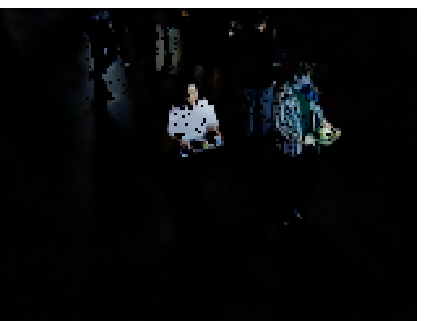}}
\subfigure{\includegraphics[width=0.158\linewidth]{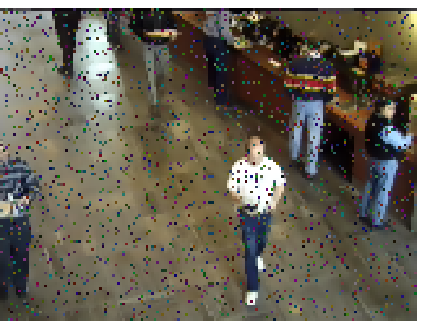}}
\subfigure{\includegraphics[width=0.158\linewidth]{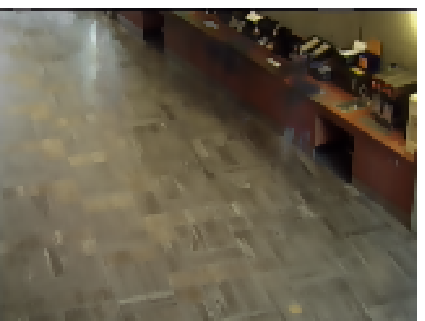}}
\subfigure{\includegraphics[width=0.158\linewidth]{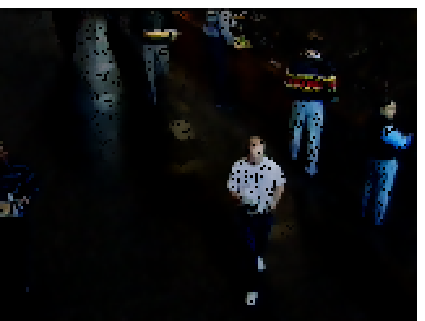}}\;\;
\subfigure{\includegraphics[width=0.158\linewidth]{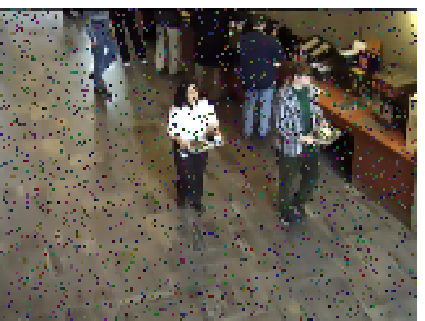}}
\subfigure{\includegraphics[width=0.158\linewidth]{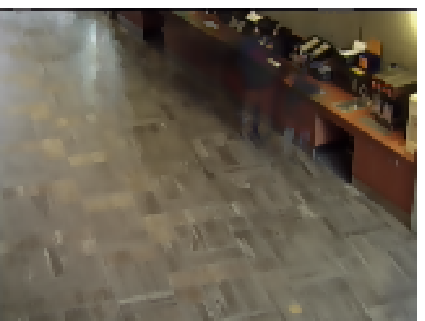}}
\subfigure{\includegraphics[width=0.158\linewidth]{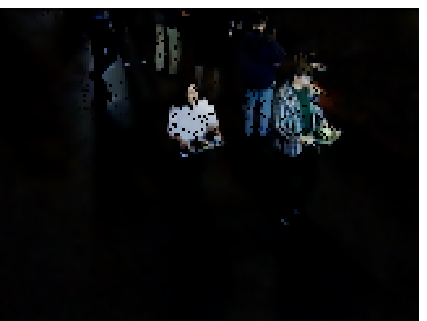}}
\caption{Background extraction results of different algorithms on the Bootstrap data set, where the first, second and last rows show the recovered low-rank and sparse images by GRASTA, RegL1 and RBF, respectively.}
\label{fig_6}
\end{figure}

\subsection{Face Reconstruction}
We also test our RBF method for the face reconstruction problems with the incomplete and corrupted face data or a small set of linear measurements $y$ as in \cite{wright:cpcp}, respectively. The face database used here is a part of Extended Yale Face Database B \cite{lee:fr} with the large corruptions. The face images can often be decomposed as a low-rank part, capturing the face appearances under different illuminations, and a sparse component, representing varying illumination conditions and heavily ``shadows". The resolution of all images is $192\times168$ and the pixel values are normalized to [0, 1], then the pixel values are used to form data vectors of dimension 32,256. The input data are generated by setting 40\% of the randomly selected pixels of each image as missing entries.

Fig.\ 7 shows some original and reconstructed images by RBF, PCP, RegL1 and CWM\footnote{\url{http://www4.comp.polyu.edu.hk/~cslzhang/papers.htm}} \cite{meng:cwm}, where the average computational time of all these algorithms on each people's faces is presented. It can be observed that RBF performs better than the other methods not only visually but also efficiently, and effectively eliminates the heavy noise and ``shadows" and simultaneously completes the missing entries. In other words, RBF can achieve the latent features underlying the original images regardless of the observed data corrupted by outliers or missing values.

\begin{figure}[t]
\centering
\subfigure{\includegraphics[width=0.16\linewidth]{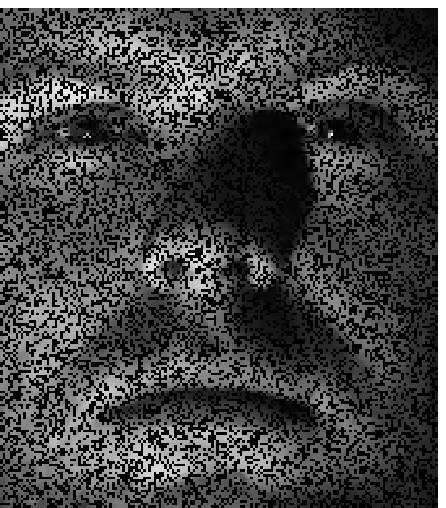}\label{fig_first_case}}
\subfigure{\includegraphics[width=0.16\linewidth]{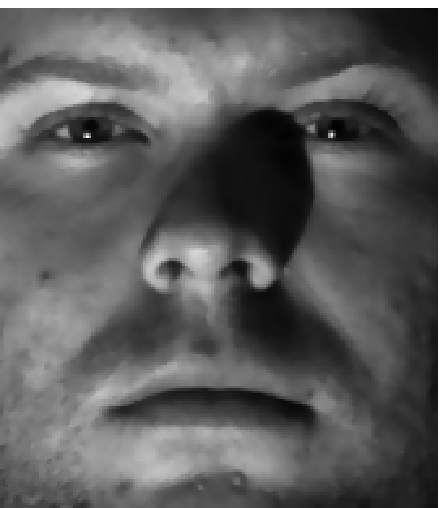}\label{fig_second_case}}
\subfigure{\includegraphics[width=0.16\linewidth]{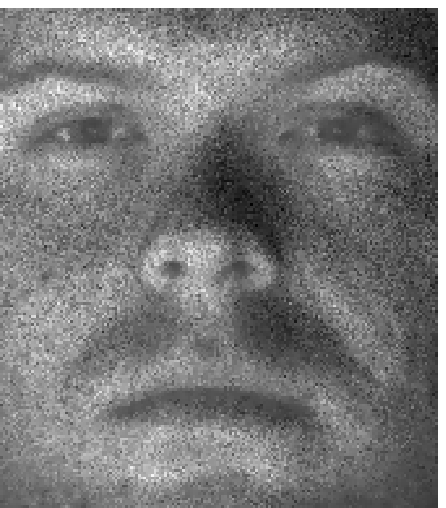}\label{fig_first_case}}
\subfigure{\includegraphics[width=0.16\linewidth]{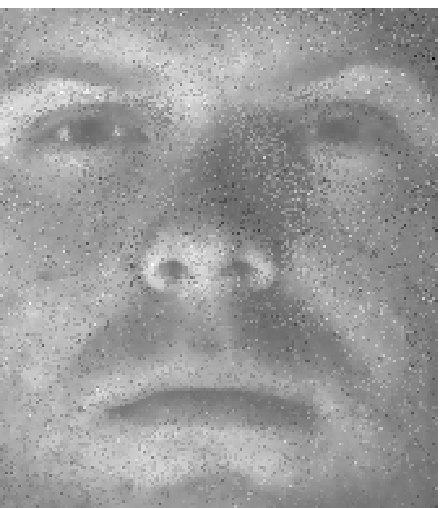}\label{fig_first_case}}
\subfigure{\includegraphics[width=0.16\linewidth]{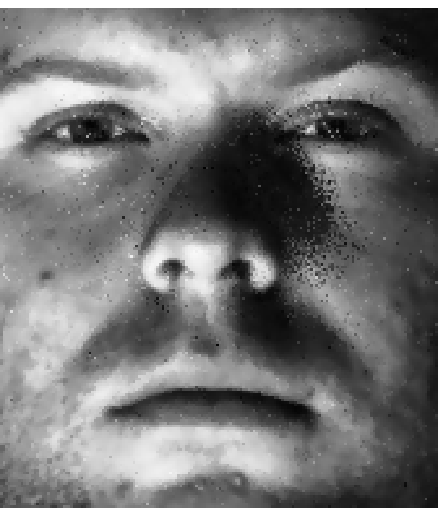}\label{fig_fourth_case}}
\subfigure{\includegraphics[width=0.16\linewidth]{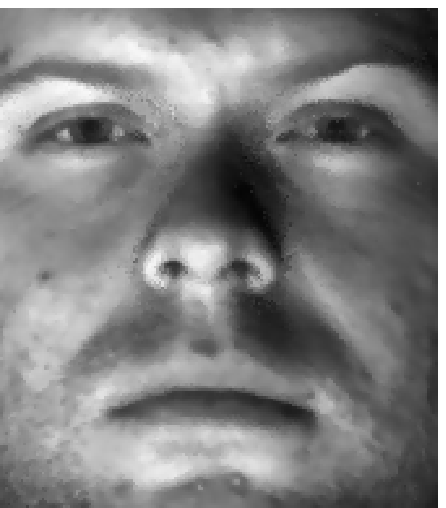}\label{fig_fourth_case}}
\subfigure{\includegraphics[width=0.16\linewidth]{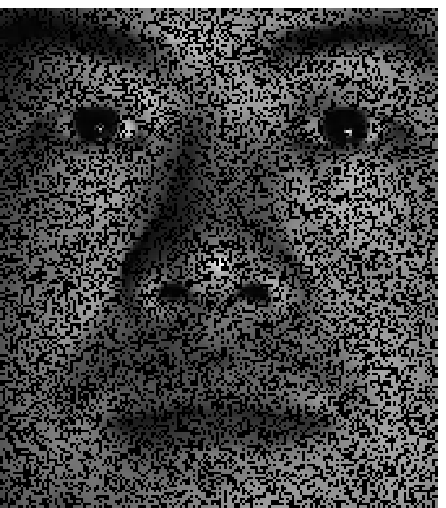}\label{fig_first_case}}
\subfigure{\includegraphics[width=0.16\linewidth]{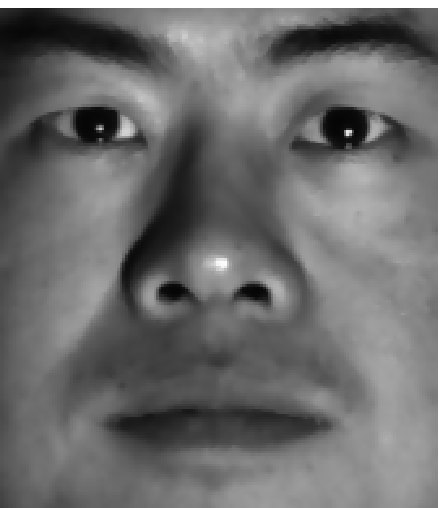}\label{fig_second_case}}
\subfigure{\includegraphics[width=0.16\linewidth]{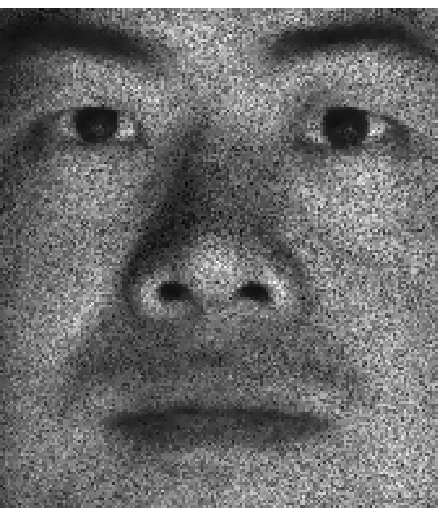}\label{fig_first_case}}
\subfigure{\includegraphics[width=0.16\linewidth]{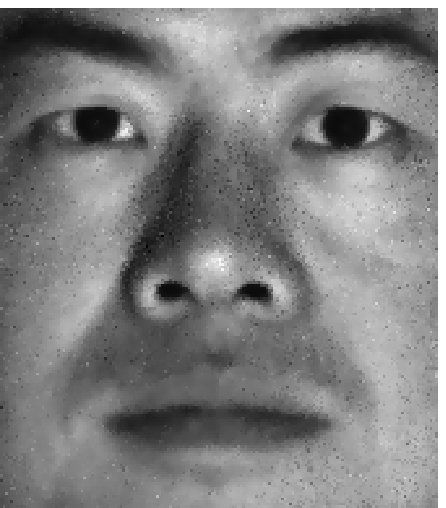}\label{fig_first_case}}
\subfigure{\includegraphics[width=0.16\linewidth]{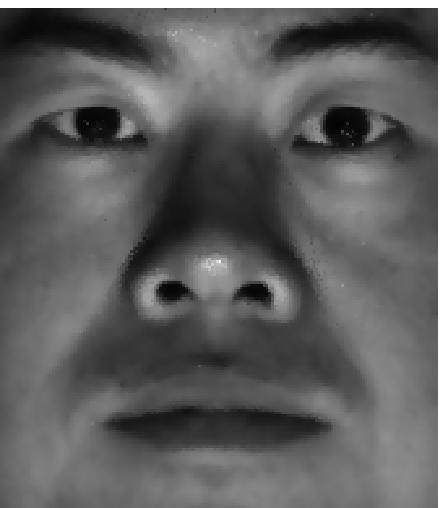}\label{fig_fourth_case}}
\subfigure{\includegraphics[width=0.16\linewidth]{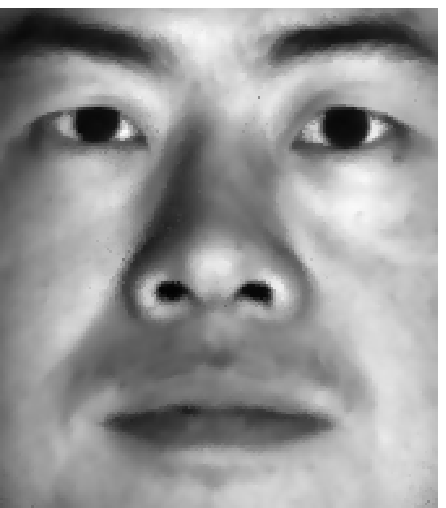}\label{fig_fourth_case}}
\subfigure{\includegraphics[width=0.16\linewidth]{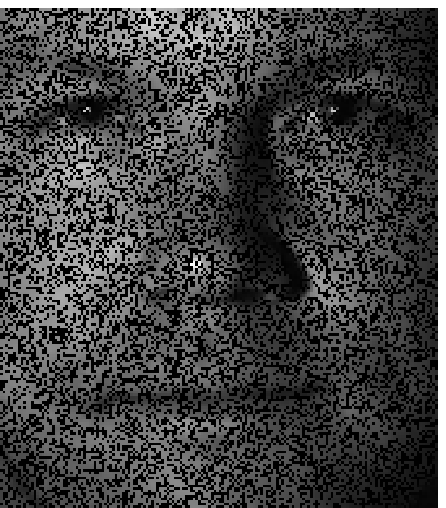}\label{fig_first_case}}
\subfigure{\includegraphics[width=0.16\linewidth]{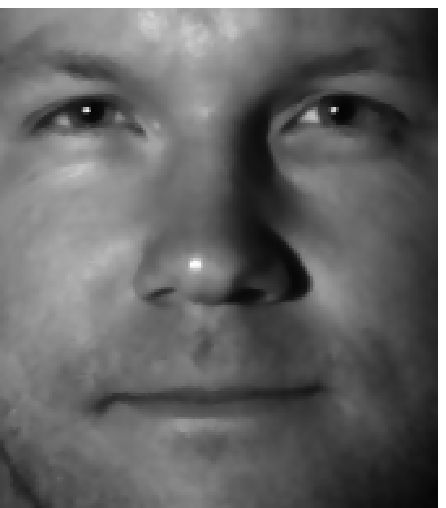}\label{fig_second_case}}
\subfigure{\includegraphics[width=0.16\linewidth]{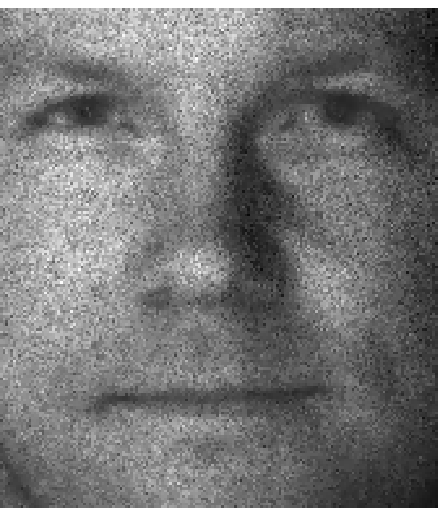}\label{fig_first_case}}
\subfigure{\includegraphics[width=0.16\linewidth]{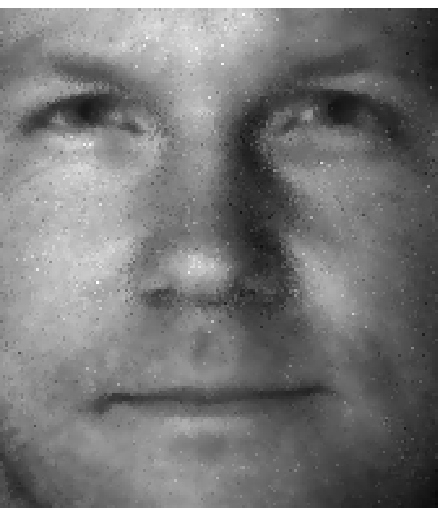}\label{fig_first_case}}
\subfigure{\includegraphics[width=0.16\linewidth]{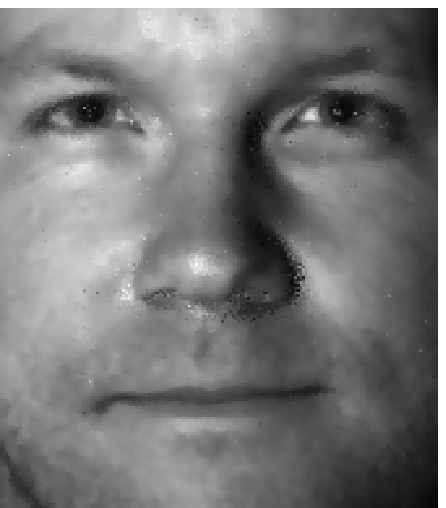}\label{fig_fourth_case}}
\subfigure{\includegraphics[width=0.16\linewidth]{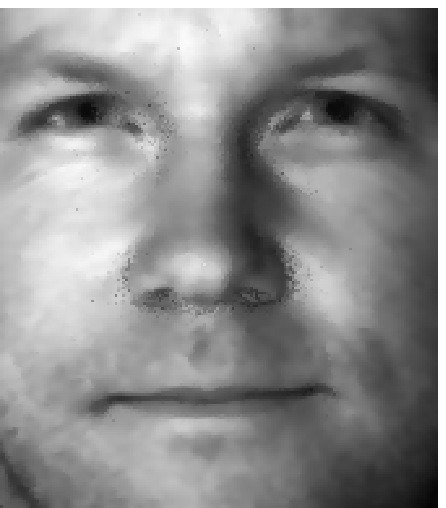}\label{fig_fourth_case}}\\
\caption{Face recovery results by these algorithms. From left column to right column: Input corrupted images (black pixels denote missing entries), original images, reconstruction results by PCP (1020.69sec), CWM (1830.18sec), RegL1 (2416.85sec) and RBF (52.73sec).}
\label{fig_5}
\end{figure}

Moreover, we implement a challenging problem to recover face images from incomplete line measurements. Considering the computational burden of the projection operator $\mathcal{P}_{Q}$, we resize the original images into $42\times48$ and normalize the raw pixel values to form data vectors of dimension 2016. Following \cite{wright:cpcp}, the input data is $\mathcal{P}_{Q}(D)$, where $Q$ is a subspace generated randomly with the dimension $0.75mn$.

Fig.\ 8 illustrates some reconstructed images by CPCP \cite{wright:cpcp} and RBF, respectively. It is clear that both CPCP and RBF effectively remove ``shadows" from faces images and simultaneously successfully recover both low-rank and sparse components from the reduced measurements.

\begin{figure}[t]
\centering
\subfigure{\includegraphics[width=0.18\linewidth]{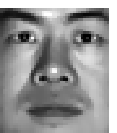}\label{fig_fourth_case}}
\subfigure{\includegraphics[width=0.18\linewidth]{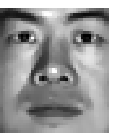}\label{fig_fifth_case}}
\subfigure{\includegraphics[width=0.18\linewidth]{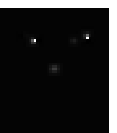}\label{fig_first_case}}
\subfigure{\includegraphics[width=0.18\linewidth]{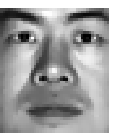}\label{fig_second_case}}
\subfigure{\includegraphics[width=0.18\linewidth]{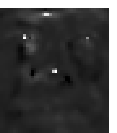}\label{fig_fourth_case}}
\subfigure{\includegraphics[width=0.18\linewidth]{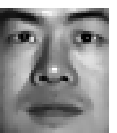}\label{fig_fourth_case}}
\subfigure{\includegraphics[width=0.18\linewidth]{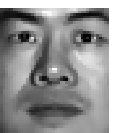}\label{fig_fifth_case}}
\subfigure{\includegraphics[width=0.18\linewidth]{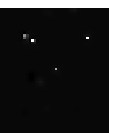}\label{fig_fourth_case}}
\subfigure{\includegraphics[width=0.18\linewidth]{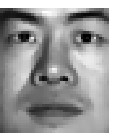}\label{fig_fourth_case}}
\subfigure{\includegraphics[width=0.18\linewidth]{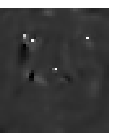}\label{fig_first_case}}\\
\caption{Face reconstruction results by CPCP and RBF, where the first column show the original images, the second and third columns show the low-rank and sparse components obtained by CPCP, while the last two columns show the low-rank and sparse components obtained by RBF.}
\label{fig_sim}
\end{figure}

\subsection{Collaborative Filtering}
Collaborative filtering is a technique used by some recommender systems \cite{liu:cf, lee:cf}. One of the main purposes is to predict the unknown preference of a user on a set of unrated items, according to other similar users or similar items. In order to evaluate our RBF method, some matrix completion experiments are conducted on three widely used recommendation system data sets: MovieLens100K with 100K ratings, MovieLens1M (ML-1M) with 1M ratings and MovieLens10M (ML-10M) with 10M ratings. We randomly split these data sets to training and testing sets such that the ratio of the training set to testing set is 9:1, and the experimental results are reported over 10 independent runs. We also compare our RBF method with APG\footnote[7]{\url{http://www.math.nus.edu.sg/~mattohkc/NNLS.html}} \cite{toh:apg}, Soft-Impute\footnote{\url{http://www.stat.columbia.edu/~rahulm/software.html}} \cite{mazumder:sr}, OptSpace\footnote{\url{http://web.engr.illinois.edu/~swoh/software/optspace/}} \cite{keshavan:mc} and LMaFit\footnote{\url{http://lmafit.blogs.rice.edu/}} \cite{wen:nsor}, and two state-of-the-art manifold optimization methods: ScGrass\footnote{\url{http://www-users.cs.umn.edu/~thango/}} \cite{ngo:gm} and RTRMC\footnote{\url{http://perso.uclouvain.be/nicolas.boumal/RTRMC/}} \cite{boumal:tr}. All other parameters are set to their default values for all compared algorithms. We use the Root Mean Squared Error (RMSE) as the evaluation measure, which is defined as
\begin{displaymath}
\textup{RMSE}=\sqrt{\frac{1}{|\Omega|}\Sigma_{(i,j)\in\Omega}(D_{ij}-L_{ij})^{2}},
\end{displaymath}
where $|\Omega|$ is the total number of ratings in the testing set, $L_{ij}$ denotes the ground-truth rating of user $i$ for item $j$, and $D_{ij}$ denotes the corresponding predicted rating.

The average RMSE on these three data sets is reported over 10 independent runs and is shown in Table 2. From the results shown in Table 2, we can see that, for some fixed ranks, most matrix factorization methods including ScGrass, RTRMC, LMaFit and our RBF method, except OptSpace, usually perform better than the two convex trace norm minimization methods, APG and Soft-Impute. Moreover, our bilinear factorization method with trace norm regularization consistently outperforms the other matrix factorization methods including OptSpace, ScGrass, RTRMC and LMaFit, and the two trace norm minimization methods, APG and Soft-Impute. This confirms that our robust bilinear factorization model with trace norm regularization is reasonable.

\begin{table*}[t]
\caption{RMSE of different methods on three data sets: MovieLens100K, MovieLens1M and MovieLens10M.}
\small
\centering
\begin{tabular}{l||ccc|ccc|ccc}
\hline
{Methods} & \multicolumn{3}{c|}{MovieLens100K} &\multicolumn{3}{c|}{MovieLens1M} & \multicolumn{3}{c}{MovieLens10M}\\
\hline
{APG}  &\multicolumn{3}{c|}{1.2142} &\multicolumn{3}{c|}{1.1528} &\multicolumn{3}{c}{0.8583}\\
{Soft-Impute} &\multicolumn{3}{c|}{1.0489} &\multicolumn{3}{c|}{0.9058} &\multicolumn{3}{c}{0.8615}\\
{OptSpace}  &\multicolumn{3}{c|}{0.9411} &\multicolumn{3}{c|}{0.9071} &\multicolumn{3}{c}{1.1357}\\
\hline
{Ranks}     & 5    & 6      & 7   & 5    & 6      & 7      & 5    & 6     & 7 \\
\hline
{ScGrass} &0.9647 &0.9809 &0.9945  &0.8847 &0.8852 &0.8936  &0.8359 &0.8290 &0.8247\\
{RTRMC}   &0.9837 &1.0617 &1.1642  &0.8875 &0.8893 &0.8960  &0.8463 &0.8442 &0.8386\\
{LMaFit}  &0.9468 &0.9540 &0.9568  &0.8918 &0.8920 &0.8853  &0.8576 &0.8530 &0.8423\\
{RBF}   &\textbf{0.9393} &\textbf{0.9513} &\textbf{0.9485} &\textbf{0.8672} &\textbf{0.8624} &\textbf{0.8591} &\textbf{0.8193} &\textbf{0.8159} &\textbf{0.8110}\\
\hline
\end{tabular}
\end{table*}

Furthermore, we also analyze the robustness of our RBF method with respect to its parameter changes: the given rank and the regularization parameter $\lambda$ on the MovieLens1M data set, as shown in Fig.\ 9, from which we can see that our RBF method is very robust against its parameter variations. For comparison, we also show the results of some related methods: ScGrass and LMaFit, OptSpace and RTRMC with varying ranks or different regularization parameters in Fig.\ 9. It is clear that, by increasing the number of the given ranks, the RMSE of ScGrass and LMaFit, RTRMC becomes dramatically increases, while that of our RBF method increase slightly. This further confirms that our bilinear matrix factorization model with trace norm regularization can significantly reduce the over-fitting problems of matrix factorization. ScGrass, RTRMC and OptSpace all have their spectral regularization models, respectively (for example, the formulation for OptSpace is $\min_{U, S, V}(1/2)\|\mathcal{P}_{\Omega}(USV^{T}-D)\|^{2}_{F}+\lambda\|S\|^{2}_{F}$.) We can see that our RBF method performs more robust than OptSpace, ScGrass and RTRMC in terms of the regularization parameter $\lambda$. Moreover, our RBF method is easily used to incorporate side-information as in \cite{shang:ssl, shang:grnmf, liu:nnr, li:pmf, yu:ahgl}.

\begin{figure}[t]
\centering
\subfigure[]{\includegraphics[width=0.46\linewidth]{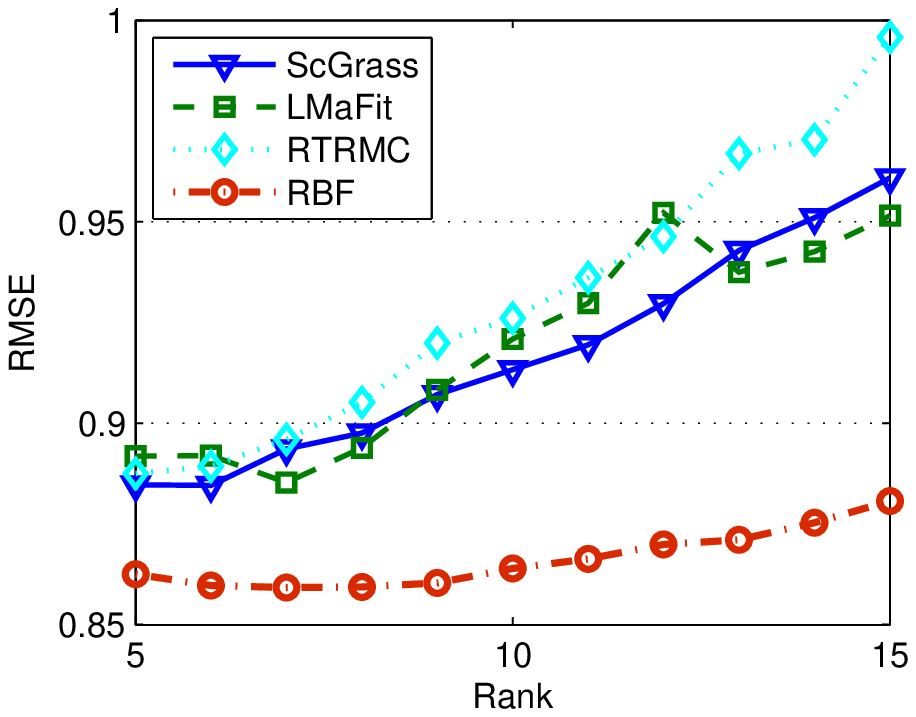}\label{fig_first_case}}
\subfigure[]{\includegraphics[width=0.46\linewidth]{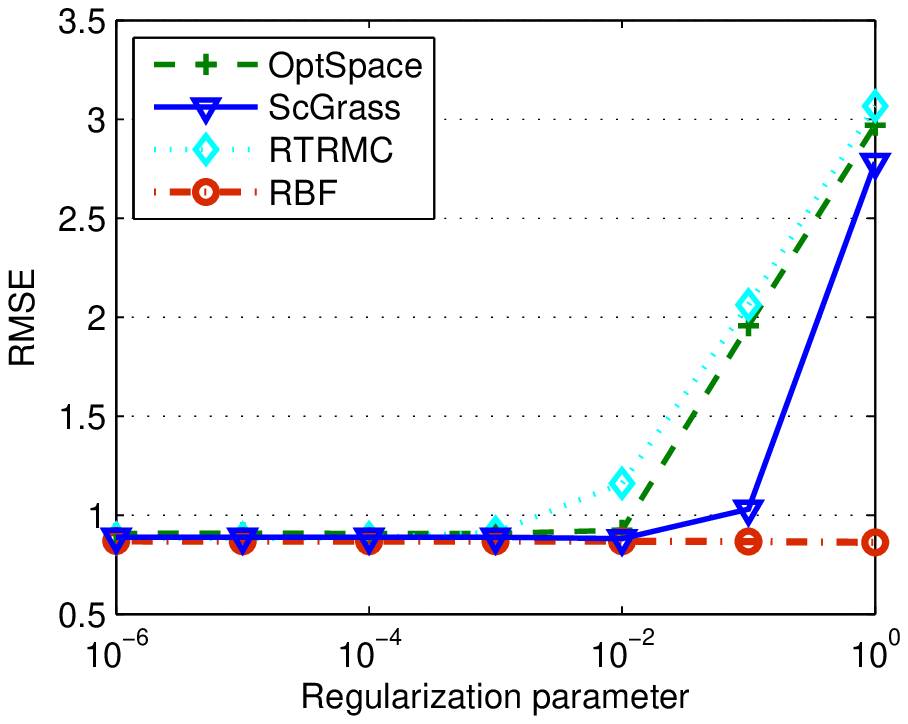}\label{fig_second_case}}
\caption{Results of our RBF method, ScGrass, LMaFit, and OptSpace against their parameters: (a) Rank and (b) Regularization parameter $\lambda$.}
\label{fig_sim}
\end{figure}

\section{CONCLUSIONS}
In this paper, we proposed a scalable robust bilinear structured factorization (RBF) framework for RMC and CPCP problems. Unlike existing robust low-rank matrix factorization methods, the proposed RBF method can not only address large-scale RMC problems, but can also solve low-rank and sparse matrix decomposition problems with incomplete or corrupted observations. To this end, we first presented two smaller-scale matrix trace norm regularized models for RMC and CPCP problems, respectively. Then we developed an efficient ADMM algorithm to solve both RMC and RPCA problems, and analyzed the suboptimality of the solution produced by our algorithm. Finally, we extended our algorithm to solve CPCP problems. Experimental results on real-world data sets demonstrated the superior performance of our RBF method in comparison with the state-of-the-art methods in terms of both efficiency and effectiveness.

\section*{APPENDIX A: Proof of Lemma 2}
\begin{proof}
Let $(L^{*},S^{*})$ be the optimal solution of (4), $g(L,\,S)=\|S\|_{1}+\lambda\|L\|_{*}$ and $\Gamma=\{(L,\,S)\,|\,\mathcal{P}_{\Omega}(D)=\mathcal{P}_{\Omega}(L+S)\}$, then we use contradiction to prove that $\mathcal{P}_{\Omega^{C}}(S^{*})=\textbf{0}$.

We first assume $\mathcal{P}_{\Omega^{C}}(S^{*})\neq \textbf{0}$. Let $\tilde{S}^{*}$ be $(\tilde{S}^{*})_{\Omega}=(S^{*})_{\Omega}$ and $(\tilde{S}^{*})_{\Omega^{C}}=0$, then we have $(L^{*},\,\tilde{S}^{*})\in \Gamma$ and $g(L^{*},\,\tilde{S}^{*})\leq g(L^{*},\,S^{*})$ that leads to a contradiction. Thus, $\mathcal{P}_{\Omega^{C}}(S^{*})=\textbf{0}$. Therefore, $(L^{*},\,S^{*})$ is also the optimal solution of (5).
\end{proof}

\section*{APPENDIX B: Proof of Lemma 3}
\begin{proof}
Let the SVD of $V^{T}$ be $V^{T}=\hat{U}\hat{\Sigma} \hat{V}^{T}$, then $UV^{T}=(U\hat{U})\hat{\Sigma} \hat{V}^{T}$. As $(U\hat{U})^{T}(U\hat{U})=I$, $(U\hat{U})\hat{\Sigma} \hat{V}^{T}$ is actually an SVD of $UV^{T}$. According to the definition of the trace norm, we have $\|V\|_{*}=$ $\|V^{T}\|_{*}=\textrm{tr}(\hat{\Sigma})=\|UV^{T}\|_{*}$.
\end{proof}

\section*{APPENDIX C: Proof of Theorem 4}
\begin{proof}
If we know that $(L^{\ast},\,S^{\ast})$ is a solution for the optimization problem (5), it is also a solution to
\begin{displaymath}
\begin{split}
&\min_{L,\,S,\,\textrm{rank}(L)=r}\,\|\mathcal{P}_{\Omega}(S)\|_{1}+\lambda\|L\|_{*},\\
&\;\textup{s.t.},\,\mathcal{P}_{\Omega}(D)=\mathcal{P}_{\Omega}(L+S),\,\mathcal{P}_{\Omega^{C}}(S)=\textbf{0}.
\end{split}
\end{displaymath}
Since for any $(L,\,S)$ with $\textrm{rank}(L)=r$, we can find $U\in \mathbb{R}^{m\times d}$ and $V\in \mathbb{R}^{n\times d}$ satisfying $UV^{T}=L$ and $\mathcal{P}_{\Omega}(D-UV^{T})=\mathcal{P}_{\Omega}(S)$, where $d\geq r$. Moreover, according to Lemma 3, we have
\begin{displaymath}
\begin{split}
&\min_{U,\,V,\,S}\,\|\mathcal{P}_{\Omega}(S)\|_{1}+\lambda\|V\|_{*}\\
&\;\textup{s.t.},\,\mathcal{P}_{\Omega}(D)=\mathcal{P}_{\Omega}(UV^{T}+S),\,U^{T}U=I,\\
=&\min_{U,\,V,\,S}\,\|\mathcal{P}_{\Omega}(S)\|_{1}+\lambda\|UV^{T}\|_{*}\\
&\;\textup{s.t.},\,\mathcal{P}_{\Omega}(D)=\mathcal{P}_{\Omega}(UV^{T}+S),\\
=&\min_{L,\,S,\,\textrm{rank}(L)=r}\,\|\mathcal{P}_{\Omega}(S)\|_{1}+\lambda\|L\|_{*}\\
&\;\textup{s.t.},\,\mathcal{P}_{\Omega}(D)=\mathcal{P}_{\Omega}(L+S),\\
\end{split}
\end{displaymath}
where $\mathcal{P}_{\Omega^{C}}(S)=$\textbf{0}. This completes the proof.
\end{proof}

\section*{APPENDIX D: Proof of Theorem 7}
\begin{proof}
We will prove the statement in Theorem 7 using mathematical induction.

\textbf{1.} While $k=1$, and following \cite{nick:mpp}, then the optimal solution to the problem (14) is given by
\begin{equation*}
\begin{split}
U^{*}_{1}=\widetilde{U}_{1}\widetilde{V}^{T}_{1},
\end{split}
\end{equation*}
where the skinny SVD of $P^{*}_{0}V^{*}_{0}$ is $P^{*}_{0}V^{*}_{0}=\widetilde{U}_{1}\widetilde{\Sigma}_{1}\widetilde{V}^{T}_{1}$.

By Algorithm 1, and with the same initial values, i.e., $U^{*}_{0}=U_{0}$, $V^{*}_{0}=V_{0}$, $S^{*}_{0}=S_{0}$ and $P^{*}_{0}=P_{0}$, then we have
\begin{equation*}
\begin{split}
U_{1}=Q,\qquad\textrm{QR}(P_{0}V_{0})=\textrm{QR}(P^{*}_{0}V^{*}_{0})=QR.
\end{split}
\end{equation*}
Hence, it can be easily verified that $\exists O_{1}\in \mathcal{N}$ satisfies $U^{*}_{1}=U_{1}O_{1}$, where $\mathcal {N}=\{A\in \mathbb{R}^{d\times d}, A^{T}A=I,\,AA^{T}=I\}$.

By the iteration step (19), we have
\begin{equation*}
\begin{split}
V^{*}_{1}&=\textrm{SVT}_{\lambda/\alpha_{k}}((P^{*}_{0})^{T}U^{*}_{1})\\
&=\textrm{SVT}_{\lambda/\alpha_{k}}((P^{*}_{0})^{T}U_{1}O_{1})\\
&=\textrm{SVT}_{\lambda/\alpha_{k}}((P^{*}_{0})^{T}U_{1})O_{1}\\
&=V_{1}O_{1}.
\end{split}
\end{equation*}
Thus, $ U^{*}_{1}(V^{*}_{1})^{T}=U_{1}V^{T}_{1}$. Furthermore, we have
\begin{equation*}
\begin{split}
S^{*}_{1}=S_{1},\,P^{*}_{1}=P_{1}\, \textrm{and}\, Y^{*}_{1}=Y_{1}.
\end{split}
\end{equation*}

\textbf{2.} While $k>1$, the result of Theorem 7 holds at the (\emph{k}-1)-th iteration, then following \cite{nick:mpp} and \cite{liu:as}, $U^{*}_{k}$ is updated by
\begin{equation*}
\begin{split}
&U^{*}_{k}=\widetilde{U}_{k}\widetilde{V}^{T}_{k},\\
\end{split}
\end{equation*}
where the skinny SVD of $P^{*}_{k-1}V^{*}_{k-1}$ is $P^{*}_{k-1}V^{*}_{k-1}=\widetilde{U}_{k}\widetilde{\Sigma}_{k}\widetilde{V}^{T}_{k}$.

By $P^{*}_{k-1}V^{*}_{k-1}=P^{*}_{k-1}V_{k-1}O_{k-1}$, and according to (15), then $\exists O_{k}\in \mathcal {N}$ satisfies $U^{*}_{k}=U_{k}O_{k}$. Furthermore, we have
\begin{equation*}
\begin{split}
U^{*}_{k}(V^{*}_{k})^{T}&=U^{*}_{k}\textrm{SVT}_{\lambda/\alpha_{k-1}}((U^{*}_{k})^{T}P^{*}_{k-1})\\
&=\textrm{SVT}_{\lambda/\alpha_{k}}(U^{*}_{k}(U^{*}_{k})^{T}P^{*}_{k-1})\\
&=\textrm{SVT}_{\lambda/\alpha_{k}}(U_{k}(U_{k})^{T}P_{k-1})\\
&=U_{k}V^{T}_{k},
\end{split}
\end{equation*}
and $V^{*}_{k}=V_{k}O_{k}$, $S^{*}_{k}=S_{k}$, $P^{*}_{k}=P_{k}$ and $Y^{*}_{k}=Y_{k}$.

Since $V^{*}_{k}=V_{k}O_{k}$, we also have $\|V^{*}_{k}\|_{*}=\|V_{k}\|_{*}$.

This completes the proof.
\end{proof}

\section*{APPENDIX E}
The proof sketch of Lemma 8 is similar to the one in \cite{liu:as}. We first prove that the boundedness of multipliers and some variables of Algorithm 1, and then analyze the convergence of Algorithm 1. To prove the boundedness, we first give the following lemmas.

\begin{lemma}
Let $\mathcal{X}$ be a real Hilbert space endowed with an inner product $\langle\cdot\rangle$ and a corresponding norm $\|\cdot\|$ (the nuclear norm or the $l_{1}$ norm), and $y\in\partial\|x\|$, where $\partial\|\cdot\|$ denotes the subgradient. Then $\| y\|^{*}=1$ if $x\neq 0$, and $\|y\|^{*}\leq 1$ if $x=0$, where $\|\cdot\|^{*}$ is the dual norm of the norm $\|\cdot\|$.
\end{lemma}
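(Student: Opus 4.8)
The plan is to prove this purely from the definition of the subgradient of a convex function together with the definition of the dual norm; the specific choice of the nuclear norm or the $l_{1}$ norm plays no role, so I would prove it for an arbitrary norm $\|\cdot\|$ on $\mathcal{X}$. Recall that $y\in\partial\|x\|$ means $\|z\|\geq\|x\|+\langle y,\,z-x\rangle$ for every $z\in\mathcal{X}$, and that $\|y\|^{*}=\sup_{\|w\|\leq 1}\langle y,\,w\rangle$.

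First I would establish the upper bound $\|y\|^{*}\leq 1$, which holds whether or not $x=0$. Given any $w\in\mathcal{X}$, substituting $z=x+w$ into the subgradient inequality yields $\langle y,\,w\rangle\leq\|x+w\|-\|x\|\leq\|w\|$, the last step being the triangle inequality. Taking the supremum over $\|w\|\leq 1$ gives $\|y\|^{*}\leq 1$, which already settles the case $x=0$.

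Next, for $x\neq 0$ I would prove the reverse inequality $\|y\|^{*}\geq 1$. Plugging $z=0$ into the subgradient inequality gives $\langle y,\,x\rangle\geq\|x\|$, while plugging $z=2x$ gives $\langle y,\,x\rangle\leq\|x\|$; hence $\langle y,\,x\rangle=\|x\|$. Since $x\neq 0$, the vector $w=x/\|x\|$ satisfies $\|w\|=1$ and $\langle y,\,w\rangle=1$, so $\|y\|^{*}\geq 1$. Combining this with the upper bound yields $\|y\|^{*}=1$, completing the proof.

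I do not expect any real obstacle here: the statement is a standard fact of convex analysis, and the only points requiring care are using the correct form of the subgradient inequality and remembering that the dual norm is a supremum rather than a maximum, so the lower bound must be argued via the explicit test vector $w=x/\|x\|$. One might additionally remark that in the two settings of interest—$\mathbb{R}^{m\times n}$ with the nuclear norm, and $\mathbb{R}^{m\times n}$ with the entrywise $l_{1}$ norm—the dual norms are the spectral norm and the entrywise $l_{\infty}$ norm respectively, which is exactly how this lemma feeds into the bound $\|\mathcal{P}_{\Omega}(Y^{*})\|_{\infty}+\|\widehat{Y}^{*}\|_{\infty}\leq 1+\lambda$ used in the convergence analysis.
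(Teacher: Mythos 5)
Your proof is correct. Note that the paper does not actually prove this lemma at all --- it is stated without proof as a known fact of convex analysis (borrowed from the augmented-Lagrangian literature it cites), so there is no argument in the paper to compare against. Your derivation is the standard one: the substitution $z=x+w$ plus the triangle inequality gives $\|y\|^{*}\leq 1$, and for $x\neq 0$ the test vectors $z=0$ and $z=2x$ pin down $\langle y,\,x\rangle=\|x\|$, whence $w=x/\|x\|$ certifies $\|y\|^{*}\geq 1$. (Strictly speaking, the single inequality $\langle y,\,x\rangle\geq\|x\|$ from $z=0$ already suffices for the lower bound, but establishing the equality costs nothing.) Your closing remark correctly identifies how the lemma is used downstream: with the entrywise $l_{1}$ norm the dual is the entrywise $l_{\infty}$ norm, and with the nuclear norm the dual is the spectral norm, which is exactly what the boundedness arguments for $\{Y_{k}\}$ and $\{\widehat{Y}_{k}\}$ in the paper rely on.
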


\begin{lemma}
Let $Y_{k+1}=Y_{k}+\alpha_{k}(D-U_{k+1}V^{T}_{k+1}-S_{k+1})$, $\widehat{Y}_{k+1}=Y_{k}+\alpha_{k}(D-U_{k+1}V^{T}_{k+1}-S_{k})$ \textrm{and} $\widetilde{Y}_{k+1}=Y_{k}+\alpha_{k}(D-U^{*}_{k+1}V^{T}_{k}-S_{k})$, where $U^{*}_{k+1}$ is the solution of the problem (14). Then the sequences $\{Y_{k}\}$, $\{\widehat{Y}_{k}\}$, $\{\widetilde{Y}_{k}\}$, $\{V_{k}\}$ and $\{S_{k}\}$ produced by Algorithm 1 are all bounded.
\end{lemma}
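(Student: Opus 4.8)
\emph{Plan.} I would follow the usual ADMM boundedness template in the order: bound the multiplier $\{Y_k\}$; bound the augmented Lagrangian value along the iterations; read off boundedness of $\{V_k\}$ and $\{S_k\}$; and finally deduce boundedness of the auxiliary multipliers $\{\widehat{Y}_k\}$ and $\{\widetilde{Y}_k\}$. \emph{Step 1 ($\{Y_k\}$ bounded).} The first-order optimality condition of the $S$-subproblem (20) is $0\in\partial\|\mathcal{P}_{\Omega}(S_{k+1})\|_1-Y_k-\alpha_k(D-U_{k+1}V_{k+1}^T-S_{k+1})$, i.e. $Y_{k+1}\in\partial\|\mathcal{P}_{\Omega}(S_{k+1})\|_1$. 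Since $\|\mathcal{P}_{\Omega}(\cdot)\|_1$ ignores the entries in $\Omega^C$, this forces $\mathcal{P}_{\Omega^C}(Y_{k+1})=\mathbf{0}$, while by Lemma 9 (applied to the $\ell_1$ norm on the coordinates in $\Omega$) $\|\mathcal{P}_{\Omega}(Y_{k+1})\|_\infty\le 1$; the same facts drop straight out of the closed forms (22) and (24), where the threshold $1/\alpha_k$ cancels the factor $\alpha_k$. Hence $\|Y_k\|_F\le\sqrt{mn}$ for all $k$, so $\{Y_k\}$ is bounded and $\|Y_k-Y_{k-1}\|_F^2\le 4mn$.

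\emph{Step 2 (Lagrangian value bounded).} Using Theorem 7 to work with the ``exact'' iterates $(U^{*}_k,V^{*}_k,S^{*}_k,Y^{*}_k)$, which yield the same value of $\mathcal{L}_{\alpha_k}$ as the actual iterates because $\mathcal{L}_{\alpha_k}$ depends on $U,V$ only through $UV^T$ and $\|V\|_*$, the three block minimizations give $\mathcal{L}_{\alpha_k}(U_{k+1},V_{k+1},S_{k+1},Y_k)\le\mathcal{L}_{\alpha_k}(U_k,V_k,S_k,Y_k)$. Next, from $D-U_kV_k^T-S_k=(Y_k-Y_{k-1})/\alpha_{k-1}$ and $\alpha_k=\rho\alpha_{k-1}$ a short computation gives $\mathcal{L}_{\alpha_k}(U_k,V_k,S_k,Y_k)=\mathcal{L}_{\alpha_{k-1}}(U_k,V_k,S_k,Y_{k-1})+\frac{\rho+1}{2\alpha_{k-1}}\|Y_k-Y_{k-1}\|_F^2$. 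Chaining these with Step 1 yields $\mathcal{L}_{\alpha_k}(U_{k+1},V_{k+1},S_{k+1},Y_k)\le\mathcal{L}_{\alpha_0}(U_1,V_1,S_1,Y_0)+2(\rho+1)mn\sum_{j\ge 0}\alpha_j^{-1}$, which is finite because $\alpha_j=\rho^j\alpha_0$ grows geometrically (indeed $\sum_{j\ge 0}\alpha_j^{-1}=\frac{\rho}{(\rho-1)\alpha_0}$, which is exactly the tail appearing in the constant $c_1$ of Theorem 10). So $\{\mathcal{L}_{\alpha_k}(U_{k+1},V_{k+1},S_{k+1},Y_k)\}$ is uniformly upper bounded.

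\emph{Step 3 ($\{V_k\},\{S_k\},\{\widehat{Y}_k\},\{\widetilde{Y}_k\}$ bounded).} Completing the square and using $D-U_{k+1}V_{k+1}^T-S_{k+1}+Y_k/\alpha_k=Y_{k+1}/\alpha_k$ gives $\mathcal{L}_{\alpha_k}(U_{k+1},V_{k+1},S_{k+1},Y_k)=\lambda\|V_{k+1}\|_*+\|\mathcal{P}_{\Omega}(S_{k+1})\|_1+\frac{1}{2\alpha_k}(\|Y_{k+1}\|_F^2-\|Y_k\|_F^2)$. Steps 1 and 2 bound the left side and the last term, and the two middle terms are nonnegative, so $\lambda\|V_{k+1}\|_*$ and $\|\mathcal{P}_{\Omega}(S_{k+1})\|_1$ are each bounded; in particular $\{V_k\}$ is bounded, hence so is $\{U_kV_k^T\}$ since $U_k^TU_k=I$. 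For $\{S_k\}$: $\mathcal{P}_{\Omega}(S_k)$ is bounded because its $\ell_1$ norm is, and $\mathcal{P}_{\Omega^C}(S_{k+1})=\mathcal{P}_{\Omega^C}(D-U_{k+1}V_{k+1}^T+Y_k/\alpha_k)$ by (24) is bounded since $D$, $U_{k+1}V_{k+1}^T$ and $Y_k/\alpha_k$ all are. Finally, $\widehat{Y}_{k+1}=Y_k+\alpha_k(D-U_{k+1}V_{k+1}^T-S_k)$ and $\widetilde{Y}_{k+1}=Y_k+\alpha_k(D-U^{*}_{k+1}V_k^T-S_k)$: combining the subgradient bounds furnished by the optimality conditions of the $V$- and $U$-subproblems (via Lemma 9 these control $\widehat{Y}_{k+1}^TU_{k+1}$ and the analogous quantity for $\widetilde{Y}_{k+1}$ inside $\lambda\partial\|\cdot\|_*$) with the boundedness of $\{Y_k\},\{V_k\},\{S_k\}$ and the orthonormality of $U_{k+1}$ shows that both auxiliary sequences are bounded.

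\emph{Main obstacle.} The delicate step is Step 2: making the ``monotone decrease of the augmented Lagrangian'' argument rigorous even though the $U$-update (15) uses a QR factorization instead of solving (14) exactly. Theorem 7 is exactly what removes this difficulty, since $\mathcal{L}_{\alpha_k}$ sees $U,V$ only through $UV^T$ and $\|V\|_*$, so the QR iterate and the exact SVD/Procrustes iterate give the same Lagrangian value and the clean block-minimization chain transfers; what remains is the careful bookkeeping of the telescoped penalty-and-multiplier correction together with $\sum_k\alpha_k^{-1}<\infty$.
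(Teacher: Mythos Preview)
Your Steps 1--2 and the first half of Step 3 (bounding $\{Y_k\}$, the augmented Lagrangian, and hence $\{V_k\}$ and $\{S_k\}$) match the paper's argument essentially line by line; the only cosmetic slip is that the dual-norm/subgradient fact you call ``Lemma 9'' is the paper's Lemma 12.

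The genuine gap is in the last sentence of Step 3, where you bound $\{\widehat{Y}_k\}$ and $\{\widetilde{Y}_k\}$. Your sketch says the optimality conditions of the $V$- and $U$-subproblems put ``$\widehat{Y}_{k+1}^TU_{k+1}$ and the analogous quantity for $\widetilde{Y}_{k+1}$ inside $\lambda\partial\|\cdot\|_*$.'' The $V$-step does indeed give $U_{k+1}^T\widehat{Y}_{k+1}\in\lambda\partial\|V_{k+1}^T\|_*$, hence $\|U_{k+1}^T\widehat{Y}_{k+1}\|_2\le\lambda$, but this controls only the component of $\widehat{Y}_{k+1}$ in the range of $U_{k+1}$; it says nothing about $(U_{k+1}^{\perp})^T\widehat{Y}_{k+1}$, and since $\alpha_k\to\infty$ you cannot read that component off the boundedness of $Y_k,V_k,S_k$ alone. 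For $\widetilde{Y}_{k+1}$ there is no trace-norm subgradient at all: the $U$-subproblem (14) is an orthogonal Procrustes problem with no $\|\cdot\|_*$ term, so ``the analogous quantity for $\widetilde{Y}_{k+1}$ inside $\lambda\partial\|\cdot\|_*$'' does not exist.

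The paper fills this gap in two moves, and the order matters. First, $\widetilde{Y}_{k+1}$ is bounded \emph{directly} from the Procrustes optimality of $U^*_{k+1}$: comparing it to the feasible point $U_k$ gives
\[
\|\widetilde{Y}_{k+1}\|_F=\alpha_k\|U^*_{k+1}V_k^T-P_k\|_F\le\alpha_k\|U_kV_k^T-P_k\|_F=\|(1+\rho)Y_k-\rho Y_{k-1}\|_F,
\]
which is bounded by Step 1. Second, $\widehat{Y}_{k+1}$ is split along $U_{k+1}$ and $U_{k+1}^{\perp}$: the piece $U_{k+1}^T\widehat{Y}_{k+1}$ is bounded by the $V$-subgradient argument above, and for the complementary piece one uses Theorem 7 (that $U^*_{k+1}=U_{k+1}O_{k+1}$, so both share the same column space) to show $(U_{k+1}^{\perp})^T\widehat{Y}_{k+1}=(U_{k+1}^{\perp})^T\widetilde{Y}_{k+1}$, which is bounded by the first move. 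You invoke Theorem 7 only in Step 2; it is needed again precisely here, and $\widetilde{Y}_k$ must be handled \emph{before} $\widehat{Y}_k$, not in parallel with it.
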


\begin{proof}
By the optimality condition of the problem (20) with respect to ${S_{k+1}}$, we have that
\begin{displaymath}
0\in \partial_{(S_{k+1})_{\Omega}}\mathcal{L}_{\alpha_{k}}(U_{k+1},V_{k+1},S_{k+1},Y_{k}),
\end{displaymath}
and
\begin{displaymath}
\mathcal{P}_{\Omega}(Y_{k}+\alpha_{k}(D-U_{k+1}V^{T}_{k+1}-S_{k+1}))\in \partial\|\mathcal{P}_{\Omega}(S_{k+1})\|_{1},
\end{displaymath}
i.e.,
\begin{equation}
\mathcal{P}_{\Omega}(Y_{k+1})\in \partial\|\mathcal{P}_{\Omega}(S_{k+1})\|_{1}.
\end{equation}

Furthermore, substituting $Y_{k+1}=Y_{k}+\alpha_{k}(D-U_{k+1}V^{T}_{k+1}-S_{k+1})$ into (23), we have
\begin{displaymath}
\mathcal{P}_{\Omega^{C}}(Y_{k+1})=\textbf{0}.
\end{displaymath}

By Lemma 12, we have
\begin{equation}
\|Y_{k+1}\|_{\infty}=\|\mathcal{P}_{\Omega}(Y_{k+1})\|_{\infty}\leq 1.
\end{equation}
Thus, the sequence $\{Y_{k}\}$ is bounded.

From the iteration procedure of Algorithm 1, we have that
\begin{equation*}
\begin{split}
&\mathcal{L}_{\alpha_{k}}(U_{k+1},V_{k+1},S_{k+1},Y_{k})\\
\leq&\mathcal{L}_{\alpha_{k}}(U_{k+1},V_{k+1},S_{k},Y_{k})\leq\mathcal{L}_{\alpha_{k}}(U_{k},V_{k},S_{k},Y_{k})\\ =&\mathcal{L}_{\alpha_{k-1}}(U_{k},V_{k},S_{k},Y_{k-1})+\beta_{k}\| Y_{k}-Y_{k-1}\|^2_{F}.\\
\end{split}
\end{equation*}
where $\beta_{k}=\frac{1}{2}\alpha^{-2}_{k-1}(\alpha_{k-1}+\alpha_{k})$ and $\alpha_{k}=\rho\alpha_{k-1}$.

Hence,
\begin{equation}
\sum^\infty_{k=1}2\alpha^{-2}_{k-1}(\alpha_{k-1}+\alpha_{k})=\frac{\rho(\rho+1)}{2\alpha_{0}(\rho-1)}<\infty.
\end{equation}
Thus, $\{\mathcal{L}_{\alpha_{k-1}}(U_{k},V_{k},S_{k},Y_{k-1})\}$ is upper bounded due to the boundedness of $\{Y_{k}\}$. Then
\begin{displaymath}
\begin{split}
&\lambda\|V_{k}\|_{*}+\|\mathcal{P}_{\Omega}(S_{k})\|_{1}\\
=&\mathcal{L}_{\alpha_{k-1}}(U_{k},V_{k},S_{k},Y_{k-1})-\frac{1}{2}\alpha^{-1}_{k-1}(\|Y_{k}\|^2_{F}-\|Y_{k-1}\|^2_{F}),
\end{split}
\end{displaymath}
is upper bounded, i.e., $\{V_{k}\}$ and $\{S_{k}\}$ are bounded, and $\{U_{k}V^{T}_{k}\}$  is also bounded.

We next prove that $\{\widetilde{Y}_{k}\}$ is bounded. Let $U^{*}_{k+1}$ denote the solution of the subproblem (14). By the optimality of $U^{*}_{k+1}$, then we have
\begin{displaymath}
\|Y_{k}+\alpha_{k}(D-U^{*}_{k+1}V^{T}_{k})\|^{2}_{F}\leq \|Y_{k}+\alpha_{k}(D-U_{k}V^{T}_{k}-S_{k})\|^{2}_{F},
\end{displaymath}
and by the definition of $\widetilde{Y}_{k}$, and $\alpha_{k+1}=\rho \alpha_{k}$, thus,
\begin{displaymath}
\|\widetilde{Y}_{k}\|^{2}_{F}\leq \|(1+\rho)Y_{k}-\rho Y_{k-1}\|^{2}_{F}.
\end{displaymath}
By the boundedness of ${V_{k}}$ and ${Y_{k}}$, then the sequence $\{\widetilde{Y}_{k}\}$ is bounded.

The optimality condition of the problem (16) with respect to ${V_{k+1}}$ is rewritten as follows:
\begin{equation}
U^{T}_{k+1}\widehat{Y}_{k+1}\in \lambda \partial\|V^{T}_{k+1}\|_{*}.
\end{equation}
By Lemma 12, we have that
\begin{displaymath}
\|U^{T}_{k+1}\widehat{Y}_{k+1}\|_{2}\leq \lambda.
\end{displaymath}
Thus, $U^{T}_{k+1}\widehat{Y}_{k+1}$ is bounded. Let $U^{\bot}_{k+1}$ denote the orthogonal complement of $U_{k+1}$, i.e., $U^{\bot}_{k+1}U_{k+1}=\textbf{0}$, and according to Theorem 7, then $\exists O_{k+1}$ satisfies $U^{*}_{k+1}=U_{k+1}O_{k+1}$, thus we have
\begin{displaymath}
\begin{split}
&(U^{\bot}_{k+1})^{T}\widehat{Y}_{k+1}\\
=&(U^{\bot}_{k+1})^{T}(Y_{k}+\alpha_{k}(D-U_{k+1}V^{T}_{k+1}-S_{k}))\\
=&(U^{\bot}_{k+1})^{T}(Y_{k}+\alpha_{k}(D-U_{k+1}O_{k+1}V^{T}_{k}-S_{k}))\\
=&(U^{\bot}_{k+1})^{T}(Y_{k}+\alpha_{k}(D-U^{*}_{k+1}V^{T}_{k}-S_{k}))\\
=&(U^{\bot}_{k+1})^{T}\widetilde{Y}_{k}.
\end{split}
\end{displaymath}
Thus, $\{(U^{\bot}_{k+1})^{T}\widehat{Y}_{k+1}\}$ is bounded due to the boundedness of $\{\widetilde{Y}_{k}\}$. Then we have
\begin{displaymath}
\|\widehat{Y}_{k+1}\|_{2}=\|U^{T}_{k+1}\widehat{Y}_{k+1}+(U^{\bot}_{k+1})^{T}\widehat{Y}_{k+1}\|_{2}\leq \|U^{T}_{k+1}\widehat{Y}_{k+1}\|_{2}+\|(U^{\bot}_{k+1})^{T}\widehat{Y}_{k+1}\|_{2}.
\end{displaymath}
Since both $U^{T}_{k+1}\widehat{Y}_{k+1}$ and $(U^{\bot}_{k+1})^{T}\widehat{Y}_{k+1}$ are bounded, the sequence $\{\widehat{Y}_{k}\}$ is bounded. This completes the proof.
\end{proof}

\textbf{\emph{Proof of Lemma 8}}:
\begin{proof}
$\textbf{1.}$ By $D-U_{k+1}V^{T}_{k+1}-S_{k+1}=\alpha^{-1}_{k}(Y_{k+1}-Y_{k})$, the boundedness of $\{Y_{k}\}$ and $\lim_{k\rightarrow\infty}\alpha_{k}=\infty$, we have that
\begin{displaymath}
\lim_{k\rightarrow\infty}D-U_{k+1}V^{T}_{k+1}-S_{k+1}=0.
\end{displaymath}

Thus, $(U_{k},V_{k},S_{k})$ approaches to a feasible solution.

$\textbf{2.}$ We prove that the sequences $\{S_{k}\}$\, and\, $\{U_{k}V^{T}_{k}\}$ are Cauchy sequences.

By $\|S_{k+1}-S_{k}\|=\alpha^{-1}_{k}\|Y_{k+1}-\widehat{Y}_{k}\|=o(\alpha^{-1}_{k})$ and
\begin{displaymath}
\sum^{\infty}_{k=1}\alpha^{-1}_{k-1}=\frac{\rho}{\alpha_{0}(\rho-1)}<\infty,
\end{displaymath}
thus, $\{S_{k}\}$ is a Cauchy sequence, and it has a limit, $S^{*}$.

Similarly, $\{U_{k}V^{T}_{k}\}$ is also a Cauchy sequence, therefore it has a limit, $U^{*}(V^{*})^{T}$.

This completes the proof.
\end{proof}

To prove Lemma 9, we first give the following lemma in \cite{liu:as}:
\begin{lemma}
Let $X$, $Y$ and $Q$ be matrices of compatible dimensions. If $Q$ obeys $Q^{T}Q=I$ and $Y\in\partial \|X\|_{*}$, then
\begin{displaymath}
QY\in \partial\|QX\|_{*}.
\end{displaymath}
\end{lemma}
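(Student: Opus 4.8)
The plan is to exhibit $QY$ explicitly as an element of the subdifferential of the nuclear norm at $QX$, using the SVD-based description of $\partial\|\cdot\|_{*}$. Write the skinny SVD of $X$ as $X=\overline{U}\,\Sigma\,\overline{V}^{T}$ with $\overline{U}^{T}\overline{U}=\overline{V}^{T}\overline{V}=I$ (as in Definition 1). The standard characterization is
\begin{displaymath}
\partial\|X\|_{*}=\{\overline{U}\,\overline{V}^{T}+W:\ \overline{U}^{T}W=0,\ W\overline{V}=0,\ \|W\|_{2}\leq 1\},
\end{displaymath}
so from $Y\in\partial\|X\|_{*}$ I obtain a decomposition $Y=\overline{U}\,\overline{V}^{T}+W$ with $\overline{U}^{T}W=0$, $W\overline{V}=0$ and $\|W\|_{2}\leq 1$.

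First I would identify an SVD of $QX$. Since $Q^{T}Q=I$, the columns of $Q\overline{U}$ remain orthonormal, because $(Q\overline{U})^{T}(Q\overline{U})=\overline{U}^{T}Q^{T}Q\,\overline{U}=\overline{U}^{T}\overline{U}=I$; hence $QX=(Q\overline{U})\,\Sigma\,\overline{V}^{T}$ is a skinny SVD of $QX$, and
\begin{displaymath}
\partial\|QX\|_{*}=\{(Q\overline{U})\overline{V}^{T}+W':\ (Q\overline{U})^{T}W'=0,\ W'\overline{V}=0,\ \|W'\|_{2}\leq 1\}.
\end{displaymath}
Next I would multiply the decomposition of $Y$ by $Q$, giving $QY=(Q\overline{U})\overline{V}^{T}+QW$, and verify that $W':=QW$ meets the three required conditions: $(Q\overline{U})^{T}(QW)=\overline{U}^{T}Q^{T}QW=\overline{U}^{T}W=0$; $(QW)\overline{V}=Q(W\overline{V})=0$; and $\|QW\|_{2}\leq\|Q\|_{2}\|W\|_{2}=\|W\|_{2}\leq 1$, where $\|Q\|_{2}=1$ because $Q^{T}Q=I$ forces every singular value of $Q$ to equal $1$. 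This yields $QY\in\partial\|QX\|_{*}$.

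A shorter alternative, staying within the paper's own machinery, uses the dual-pairing characterization ($Y\in\partial\|X\|_{*}$ iff $\|Y\|_{2}\leq 1$ and $\langle Y,X\rangle=\|X\|_{*}$, a consequence of Lemma 12 and the fact that $\|\cdot\|_{*}$ is the support function of the spectral-norm unit ball) together with Lemma 3, which gives $\|QX\|_{*}=\|X\|_{*}$. Then $\|QY\|_{2}\leq\|Q\|_{2}\|Y\|_{2}\leq 1$ and $\langle QY,QX\rangle=\langle Y,Q^{T}QX\rangle=\langle Y,X\rangle=\|X\|_{*}=\|QX\|_{*}$, which is precisely the condition for $QY\in\partial\|QX\|_{*}$.

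There is no substantive obstacle here: once the SVD description of $\partial\|\cdot\|_{*}$ (or the dual characterization) is in hand, the lemma is pure bookkeeping. The only point needing a little care is that $Q$ is not assumed square — it has orthonormal columns but possibly more rows — so one must avoid writing $QQ^{T}=I$; the argument above relies only on $Q^{T}Q=I$, which is exactly what is assumed and all that is needed.
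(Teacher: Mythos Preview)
Your argument is correct. Both routes you give --- the explicit SVD decomposition of the subdifferential and the dual-pairing characterization via $\|Y\|_{2}\leq 1$ together with $\langle Y,X\rangle=\|X\|_{*}$ --- are valid, and you are right to flag that only $Q^{T}Q=I$ (not $QQ^{T}=I$) is used.

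There is no comparison to make with the paper's own proof: the paper does not prove this lemma at all but merely quotes it from \cite{liu:as}. Your write-up therefore supplies what the paper omits. Of your two approaches, the second is closer in spirit to the paper's toolkit, since it leverages Lemma~3 ($\|QX\|_{*}=\|X\|_{*}$) and the dual-norm bound of Lemma~12 directly; the first is self-contained but requires importing the Watson-type SVD description of $\partial\|\cdot\|_{*}$, which the paper never states explicitly.
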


\textbf{\emph{Proof of Lemma 9:}}
\begin{proof}
Let the skinny SVD of $P_{k}=D-S_{k}+Y_{k}/\alpha_{k}$ be $P_{k}=\widehat{U}_{k}\widehat{\Sigma}_{k}\widehat{V}^{T}_{k}$, then it can be calculated that
\begin{displaymath}
\begin{split}
\textrm{QR}(P_{k}V_{k})=\textrm{QR}(\widehat{U}_{k}\widehat{\Sigma}_{k}\widehat{V}^{T}_{k}V_{k}).
\end{split}
\end{displaymath}
Let the full SVD of $\widehat{\Sigma}_{k}\widehat{V}^{T}_{k}V_{k}$ be $\widehat{\Sigma}_{k}\widehat{V}^{T}_{k}V_{k}=\widetilde{U}\widetilde{\Sigma}\widetilde{V}^{T}$ (note that $\widetilde{U}$ and $\widetilde{V}$ are orthogonal matrices), then it can be calculated that
\begin{displaymath}
\begin{split}
\textrm{QR}(\widehat{U}_{k}\widehat{\Sigma}_{k}\widehat{V}^{T}_{k}V_{k})=\textrm{QR}(\widehat{U}_{k}\widetilde{U}\widetilde{\Sigma}\widetilde{V}^{T})=QR, \;U_{k+1}=Q.
\end{split}
\end{displaymath}

Then $\exists O$ and $O^{T}O=OO^{T}=I$ such that $U_{k+1}=\widehat{U}_{k}\widetilde{U}O$, which simply leads to
\begin{displaymath}
\begin{split}
U_{k+1}U^{T}_{k+1}=\widehat{U}_{k}\widetilde{U}OO^{T}\widetilde{U}^{T}\widehat{U}^{T}_{k}=\widehat{U}_{k}\widehat{U}^{T}_{k}.
\end{split}
\end{displaymath}
Hence,
\begin{displaymath}
\begin{split}
\widehat{Y}_{k+1}-U_{k+1}U^{T}_{k+1}\widehat{Y}_{k+1}&=\mu_{k}((D-S_{k}+Y_{k}/\mu_{k})-U_{k+1}U^{T}_{k+1}(D-S_{k}+Y_{k}/\mu_{k}))\\
&=\mu_{k}(\widehat{U}_{k}\widehat{\Sigma}_{k}\widehat{V}^{T}_{k}-U_{k+1}U^{T}_{k+1}\widehat{U}_{k}\widehat{\Sigma}_{k}\widehat{V}^{T}_{k})\\
&=\mu_{k}(\widehat{U}_{k}\widehat{\Sigma}_{k}\widehat{V}^{T}_{k}-\widehat{U}_{k}\widehat{U}^{T}_{k}\widehat{U}_{k}\widehat{\Sigma}_{k}\widehat{V}^{T}_{k})=0,
\end{split}
\end{displaymath}
i.e.,
\begin{displaymath}
\begin{split}
\widehat{Y}_{k+1}=U_{k+1}U^{T}_{k+1}\widehat{Y}_{k+1}.
\end{split}
\end{displaymath}
By (32) and Lemma 14, we have
\begin{displaymath}
\begin{split}
U_{k+1}U^{T}_{k+1}\widehat{Y}_{k+1}\in \lambda\partial\|U_{k+1}V^{T}_{k+1}\|_{*}.
\end{split}
\end{displaymath}
Thus, we have
\begin{displaymath}
\begin{split}
\widehat{Y}_{k+1}\in \lambda\partial\|U_{k+1}V^{T}_{k+1}\|_{*}\;\,\textrm{and}\;\, \mathcal{P}_{\Omega}(Y_{k+1})\in\partial\|\mathcal{P}_{\Omega}(S_{k+1})\|_{1},\forall k.
\end{split}
\end{displaymath}
Since the above conclusion holds for any $k$, it naturally holds at $(U^{*},V^{*},S^{*})$:
\begin{equation}
\begin{split}
\widehat{Y}^{*}=\widehat{Y}_{k^{*}+1}\in\lambda\partial\|U^{*}(V^{*})^{T}\|_{*} \:\textrm{and} \:\mathcal{P}_{\Omega}(Y^{*})=\mathcal{P}_{\Omega}(Y_{k^{*}+1})\in\partial\|\mathcal{P}_{\Omega}(S^{*})\|_{1}.
\end{split}
\end{equation}
Given any feasible solution $(U,V,S)$ to the problem (9), by the convexity of matrix norms and (33), and $\mathcal{P}_{\Omega^{C}}(Y^{*})=\textbf{0}$, it can be calculated that
\begin{displaymath}
\begin{split}
&\|\mathcal{P}_{\Omega}(S)\|_{1}+\lambda\|V\|_{*}=\|\mathcal{P}_{\Omega}(S)\|_{1}+\lambda\|UV^{T}\|_{*}\\
&\geq \|\mathcal{P}_{\Omega}(S^{*})\|_{1}+\langle \mathcal{P}_{\Omega}(Y^{*}),\mathcal{P}_{\Omega}(S-S^{*})\rangle+
\lambda\|U^{*}(V^{*})^{T}\|_{*}+\langle \widehat{Y}^{*},UV^{T}-U^{*}(V^{*})^{T}\rangle\\
&=\|\mathcal{P}_{\Omega}(S^{*})\|_{1}+\langle \mathcal{P}_{\Omega}(Y^{*}),S-S^{*}\rangle+\lambda\|U^{*}(V^{*})^{T}\|_{*}+\langle \widehat{Y}^{*},UV^{T}-U^{*}(V^{*})^{T}\rangle\\
&=\|\mathcal{P}_{\Omega}(S^{*})\|_{1}+\lambda\|U^{*}(V^{*})^{T}\|_{*}+\langle \mathcal{P}_{\Omega}(Y^{*}),UV^{T}+S-U^{*}(V^{*})^{T}-S^{*}\rangle+\langle \widehat{Y}^{*}-\mathcal{P}_{\Omega}(Y^{*}),UV^{T}-U^{*}(V^{*})^{T}\rangle\\
&=\|\mathcal{P}_{\Omega}(S^{*})\|_{1}+\lambda\|U^{*}(V^{*})^{T}\|_{*}+\langle \mathcal{P}_{\Omega}(Y^{*}),UV^{T}+S-U^{*}(V^{*})^{T}-S^{*}\rangle+\langle \widehat{Y}^{*}-Y^{*},UV^{T}-U^{*}(V^{*})^{T}\rangle.
\end{split}
\end{displaymath}

By Lemma 8 and $\|\mathcal{P}_{\Omega}(Y^{*})\|_{\infty}\leq 1$, we have that $\|UV^{T}+S-U^{*}(V^{*})^{T}-S^{*}\|_{\infty}= \| D-U^{*}(V^{*})^{T}-S^{*}\|_{\infty}\leq\varepsilon$, which leads to
\begin{displaymath}
\begin{split}
|\langle \mathcal{P}_{\Omega}(Y^{*}),UV^{T}+S-U^{*}(V^{*})^{T}-S^{*}\rangle| &\leq \| \mathcal{P}_{\Omega}(Y^{*})\|_{\infty}\| UV^{T}+S-U^{*}(V^{*})^{T}-S^{*}\|_{1}\\
&=\|\mathcal{P}_{\Omega}(Y^{*})\|_{\infty}\|D-U^{*}(V^{*})^{T}-S^{*}\|_{1}\\
&\leq mn\|D-U^{*}(V^{*})^{T}-S^{*}\|_{\infty}\leq mn\varepsilon.
\end{split}
\end{displaymath}
Hence,
\begin{displaymath}
\begin{split}
\|\mathcal{P}_{\Omega}(S)\|_{1}+\lambda\|V\|_{*}\geq\|\mathcal{P}_{\Omega}(S^{*})\|_{1}+\lambda\|V^{*}\|_{*}+\langle \widehat{Y}^{*}-Y^{*},UV^{T}-U^{*}(V^{*})^{T}\rangle-mn\varepsilon.
\end{split}
\end{displaymath}
This completes the proof.
\end{proof}

\textbf{\emph{Proof of Theorem 10:}}
\begin{proof}
It is worth nothing that $(U,\,V=\textbf{0},\,S=D)$ is feasible to (9). Let $(U^{g},V^{g},S^{g})$ be a globally optimal solution to (9), then we have
\begin{displaymath}
\begin{split}
\lambda\|V^{g}\|_{*}\leq \|\mathcal{P}_{\Omega}(S^{g})\|_{1}+ \lambda\|V^{g}\|_{*}\leq \|D\|_{1}=\|\mathcal{P}_{\Omega}(D)\|_{1}.
\end{split}
\end{displaymath}
By the proof procedure of Lemma 13 and $\alpha_{0}=\frac{1}{\|\mathcal{P}_{\Omega}(D)\|_{F}}$, we have that $V^{*}$ is bounded by
\begin{displaymath}
\begin{split}
\lambda\|V^{*}\|_{*}&\leq \|\mathcal{P}_{\Omega}(S^{*})\|_{1}+ \lambda\|V^{*}\|_{*}\\
&\leq \mathcal{L}_{\alpha_{k^{*}}}(U_{k^{*}+1},V_{k^{*}+1},S_{k^{*}+1},Y_{k^{*}})+\frac{\|Y_{k^{*}}\|^{2}_{F}}{2\mu_{k^{*}}}\\
&\leq \frac{mn}{\alpha_{0}}(\frac{\rho(1+\rho)}{\rho-1}+\frac{1}{2\rho^{k^{*}}})\\
&=mn\|\mathcal{P}_{\Omega}(D)\|_{F}(\frac{\rho(1+\rho)}{\rho-1}+\frac{1}{2\rho^{k^{*}}}).
\end{split}
\end{displaymath}
Thus,
\begin{equation}
\begin{split}
\|U^{g}(V^{g})^{T}-U^{*}(V^{*})^{T}\|_{*}\leq \|V^{g}\|_{*}+\|V^{*}\|_{*}\leq c_{1}.
\end{split}
\end{equation}
Note that $|\langle M,N\rangle|\leq \|M\|_{2}\|N\|_{*}$ (please see \cite{tomioka:ctd}) holds for any matrices $M$ and $N$. By Lemma 9 and (34), we have
\begin{displaymath}
\begin{split}
f^{g}&=\|\mathcal{P}_{\Omega}(S^{g})\|_{1}+\lambda\|V^{g}\|_{*}\\
&\geq \|\mathcal{P}_{\Omega}(S^{*})\|_{1}+\lambda\|V^{*}\|_{*}+\langle \widehat{Y}^{*}-Y^{*},U^{g}(V^{g})^{T}-U^{*}(V^{*})^{T}\rangle-mn\varepsilon\\
&\geq f^{*}- \|\widehat{Y}^{*}-Y^{*}\|_{2}\|U^{g}(V^{g})^{T}-U^{*}(V^{*})^{T}\|_{*}-mn\varepsilon\\
&= f^{*}- \varepsilon_{1}\|U^{g}(V^{g})^{T}-U^{*}(V^{*})^{T}\|_{*}-mn\varepsilon\\
&\geq f^{*}-c_{1}\varepsilon_{1}-mn\varepsilon.
\end{split}
\end{displaymath}
This completes the proof.
\end{proof}

\textbf{\emph{Proof of Theorem 11:}}
\begin{proof}
Let $L=U^{*}(V^{*})^{T}$ and $S=S^{*}$, then $(L,\,S)$ is a feasible solution to the RMC problem (5). By the convexity of the problem (5) and the optimality of $(L^{0},S^{0})$, it naturally follows that
\begin{displaymath}
f^{0}\leq f^{*}.
\end{displaymath}
Let $L^{0}=U^{0}\Sigma^{0}(V^{0})^{T}$ be the skinny SVD of $L^{0}$. Construct $U^{'}=U^{0}$, $(V^{'})^{T}=\Sigma^{0}(V^{0})^{T}$ and $S^{'}=S^{0}$. When $d\geq r$, we have
\begin{displaymath}
D=L^{0}+S^{0}=U^{0}\Sigma^{0}(V^{0})^{T}+S^{0}=U^{'}(V^{'})^{T}+S^{'},
\end{displaymath}
i.e., $(U^{'},V^{'},S^{'})$ is a feasible solution to the problem (9). By Theorem 10, it can be concluded that
\begin{displaymath}
\begin{split}
f^{*}-c_{1}\varepsilon_{1}-mn\varepsilon\leq \lambda\|V^{'}\|_{*}+\|\mathcal{P}_{\Omega}(S^{'})\|_{1}=\lambda\|\Sigma^{0}\|_{*}+\|\mathcal{P}_{\Omega}(S^{0})\|_{1}=f^{0}.
\end{split}
\end{displaymath}

For $d\leq r$, we decompose the skinny SVD of $L^{0}$ as
\begin{displaymath}
\begin{split}
L^{0}=U_{0}\Sigma_{0}V^{T}_{0}+U_{1}\Sigma_{1}V^{T}_{1},
\end{split}
\end{displaymath}
where $U_{0},\,V_{0}$ (resp.\ $U_{1},\,V_{1}$) are the singular vectors associated with the $d$ largest singular values (resp.\ the rest singular values smaller than or equal to $\sigma_{d}$). With these notations, we have a feasible solution to the problem (9) by constructing
\begin{displaymath}
\begin{split}
U^{''}=U_{0},\;(V^{''})^{T}=\Sigma_{0}V^{T}_{0}\,\;\textrm{and}\,\;S^{''}=D-U_{0}\Sigma_{0}V^{T}_{0}=S^{0}+U_{1}\Sigma_{1}V^{T}_{1}.
\end{split}
\end{displaymath}
By Theorem 10, it can be calculated that
\begin{displaymath}
\begin{split}
f^{*}-c_{1}\varepsilon_{1}-mn\varepsilon &\leq f^{g}\leq \lambda\|V^{''}\|_{*}+\|\mathcal{P}_{\Omega}(S^{''})\|_{1}\\
&\leq\lambda\|\Sigma_{0}\|_{*}+\|\mathcal{P}_{\Omega}(S^{o}+U_{1}\Sigma_{1}V^{T}_{1})\|_{1}\\
&\leq\lambda\|L^{0}\|_{*}-\lambda\|\Sigma_{1}\|_{*}+\|\mathcal{P}_{\Omega}(S^{0})\|_{1}+\|\mathcal{P}_{\Omega}(U_{1}\Sigma_{1}V^{T}_{1})\|_{1}\\
&\leq f^{0}-\lambda\|\Sigma_{1}\|_{*}+\|U_{1}\Sigma_{1}V^{T}_{1}\|_{1}\\
&\leq f^{0}-\lambda\|\Sigma_{1}\|_{*}+\sqrt{mn}\|U_{1}\Sigma_{1}V^{T}_{1}\|_{F}\\
&\leq f^{0}-\lambda\|\Sigma_{1}\|_{*}+\sqrt{mn}\|U_{1}\Sigma_{1}V^{T}_{1}\|_{*}\\
&\leq f^{0}+(\sqrt{mn}-\lambda)\|\Sigma_{1}\|_{*}\\
&\leq f^{0}+(\sqrt{mn}-\lambda)\sigma_{d+1}(r-d).
\end{split}
\end{displaymath}
This completes the proof.
\end{proof}

\bibliographystyle{elsarticle-num}
\bibliography{your-bib-database}

\end{document}